\newcommand{\tmop}[1]{\ensuremath{\operatorname{#1}}}
\newcommand{\Real}{\mathbb{R}} 
\DeclareMathOperator*{\argmin}{arg\,min}
\DeclareMathOperator{\rank}{rank}
\DeclareMathOperator{\diag}{diag}
\DeclareMathOperator{\KL}{KL}
\DeclareMathOperator{\var}{Var}
\DeclareMathOperator{\cov}{Cov}
\DeclareMathOperator{\tr}{tr}
\newcommand{\nonlin}{\phi}
\newcommand{\ex}{\operatorname{E}}
\newcommand{\Normal}{\mathcal{N}}
\newcommand{\bigO}{\mathcal{O}}
\newcommand{\B}{B} 
\newcommand{\ep}[1]{\,\text{#1}}
\newtheorem*{theorem*}{Theorem}
\newtheorem*{corollary*}{Corollary}
\begin{document}

\title{New Insights and Perspectives on the Natural Gradient Method}


\author{\name James Martens \email james.martens@gmail.com \\
      \addr DeepMind \\
      London, United Kingdom
      }

\maketitle


\begin{abstract}%
Natural gradient descent is an optimization method traditionally motivated from the perspective of information geometry, and works well for many applications as an alternative to stochastic gradient descent. In this paper we critically analyze this method and its properties, and show how it can be viewed as a type of 2nd-order optimization method, with the Fisher information matrix acting as a substitute for the Hessian. In many important cases, the Fisher information matrix is shown to be equivalent to the Generalized Gauss-Newton matrix, which both approximates the Hessian, but also has certain properties that favor its use over the Hessian. This perspective turns out to have significant implications for the design of a practical and robust natural gradient optimizer, as it motivates the use of techniques like trust regions and Tikhonov regularization. Additionally, we make a series of contributions to the understanding of natural gradient and 2nd-order methods, including: a thorough analysis of the convergence speed of stochastic natural gradient descent (and more general stochastic 2nd-order methods) as applied to convex quadratics, a critical examination of the oft-used ``empirical" approximation of the Fisher matrix, and an analysis of the (approximate) parameterization invariance property possessed by natural gradient methods (which we show also holds for certain other curvature matrices, but notably not the Hessian).

\end{abstract}

\begin{keywords}
  natural gradient methods, 2nd-order optimization, neural networks, convergence rate, parameterization invariance
\end{keywords}

\newpage
\setcounter{tocdepth}{1}
\tableofcontents
\newpage

\section{Introduction and Overview}

The natural gradient descent approach, pioneered by Amari and collaborators \citep[e.g.][]{natural_efficient}, is a popular alternative to traditional gradient descent methods which has received a lot of attention over the past several decades, motivating many new and related approaches.  It has been successfully applied to a variety of problems such as blind source separation \citep{amari1998adaptive}, reinforcement learning \citep{peters2008natural}, and neural network training \citep[e.g.][]{ng_adaptive,K-FAC,desjardins2015natural}.  

Natural gradient descent is generally applicable to the optimization of probabilistic models\footnote{This includes neural networks, which can be cast as conditional models.}, and involves the use of the so-called ``natural gradient", which is defined as the gradient times the inverse of the model's Fisher information matrix (aka ``the Fisher" ; see \textbf{Section \ref{sec:nat_grad}}), in place of the standard gradient.  In many applications, natural gradient descent seems to require far fewer iterations than gradient descent, making it a potentially attractive alternative method.  Unfortunately, for models with very many parameters such as large neural networks, computing the natural gradient is impractical due to the extreme size of the Fisher matrix. This problem can be addressed through the use of various approximations to the Fisher \citep[e.g][]{TONGA, ollivier2015riemannian, FANG, K-FAC} that are designed to be easier to compute, to store and finally to invert, than the exact Fisher.

Natural gradient descent is classically motivated as a way of implementing steepest descent\footnote{``Steepest descent" is a common synonym for gradient descent, which emphasizes the interpretation of the gradient as being the direction that descends down the loss surface most ``steeply".  Here ``steepness" is measured as the amount loss reduction per unit of distance traveled, where distance is measured according to some given metric. For standard gradient descent this metric is the Euclidean distance on the default parameter space.} in the space of realizable distributions\footnote{``Realizable distributions" means distributions which correspond to some setting of the model's parameters.} instead of the space of parameters, where distance in the distribution space is measured with a special ``Riemannian metric" \citep{amari2007methods}.  This metric depends only on the properties of the distributions themselves and not their parameters, and in particular is defined so that it approximates the square root of the Kullback–Leibler divergence within small neighborhoods.  Under this interpretation (discussed in detail in \textbf{Section \ref{sec:geom}}), natural gradient descent is invariant to any smooth and invertible reparameterization of the model, putting it in stark contrast to gradient descent, whose performance is parameterization dependent.  

In practice however, natural gradient descent still operates within the default parameter space, and works by computing directions in the space of distributions and then translating them back to the default space before taking a step. Because of this, the above discussed interpretation breaks down unless the step-size becomes arbitrarily small, and as discussed in \textbf{Section \ref{sec:role_of_damping}}, this breakdown has important implications for designing a natural gradient method that can work well in practice.  Another problem with this interpretation is that it doesn't provide any obvious reason why a step of natural gradient descent should make more progress reducing the objective than a step of standard gradient descent (assuming well-chosen step-sizes for both). Moreover, given a large step-size one also loses the parameterization invariance property of the natural gradient method, although it will still hold \emph{approximately} under certain conditions which are described in \textbf{Section \ref{sec:param_invar}}.

In \textbf{Section \ref{sec:role_of_damping}} we argue for an alternative view of natural gradient descent: as a type of 2nd-order method\footnote{By ``2nd-order method" we mean any iterative optimization method which generates updates as the (possibly approximate) solution of a non-trivial local quadratic model of the objective function. This extends well beyond the classical Newton's method, and includes approaches like (L-)BFGS, and methods based on the Gauss-Newton matrix.} which utilizes the Fisher as an alternative to the Hessian. As discussed in \textbf{Section \ref{sec:2nd-order}}, 2nd-order methods work by forming a local quadratic approximation to the objective around the current iterate, and produce the next iterate by optimizing this approximation within some region where the approximation is believed to be accurate.  According to this view, natural gradient descent ought to make more progress per step than gradient descent because it uses a local quadratic model/approximation of the objective function which is more detailed (which allows it to be less conservative) than the one implicitly used by gradient descent. 

In support of this view is the fact that the Fisher can be cast as an approximation of the Hessian in at least two different ways (provided the objective has the form discussed in \textbf{Section \ref{sec:stat_learn}}).  First, as discussed in \textbf{Section \ref{sec:nat_grad}}, it corresponds to the expected Hessian of the loss under the model's distribution over predicted outputs (instead of the usual empirical one used to compute the exact Hessian).  Second, as we establish in \textbf{Section \ref{sec:connections_to_GGN}}, it is very often equivalent to the so-called ``Generalized Gauss-Newton matrix" (GGN) (discussed in \textbf{Section \ref{sec:GGN}}), a generalization of the classical Gauss-Newton matrix \citep[e.g.][]{dennis_text, ortega_text, nocedal_book}, which is a popular alternative/approximation to the Hessian that has been used in various practical 2nd-order optimization methods designed specifically for neural networks \citep[e.g][]{schraudolph,HF,KSD}, and which may actually be a \emph{better} choice than the Hessian in the context of neural net training (see \textbf{Section \ref{sec:GGN_speculation}}).

Viewing natural gradient descent as a 2nd-order method is also \emph{prescriptive}, since it suggests the use of various damping/regularization techniques often used in the optimization literature to account for the limited accuracy of local quadratic approximations (especially over long distances). Indeed, such techniques have been successfully applied in 2nd-order methods designed for neural networks \citep[e.g.][]{HF, K-FAC}, where they proved crucial in achieving fast and robust performance in practice. And before that have had a long history of application in the context of practical non-linear regression procedures \citep{tikhonov1943stability, levenberg1944method, marquardt1963algorithm, levenberg_marquardt}.

The ``empirical Fisher", which is discussed in \textbf{Section \ref{sec:emp_fish}}, is an approximation to the Fisher whose computation is easier to implement in practice using standard automatic-differentiation libraries. The empirical Fisher differs from the usual Fisher in subtle but important ways, which as we show in \textbf{Section \ref{sec:empfish_comparison}}, make it considerably less useful as an approximation to the Fisher, or as a curvature matrix to be used in 2nd-order methods.  Using the empirical Fisher also breaks some of the theory justifying natural gradient descent, although it nonetheless preserves its (approximate) parameterization invariance (as we show in \textbf{Section \ref{sec:param_invar}}).  Despite these objections, the empirical Fisher has been used in many approaches, such as TONGA \citep{TONGA}, and the recent spate of methods that use the diagonal of this matrix such as RMSprop \citep{RMSprop} and Adam \citep{adam} (which we examine in \textbf{Section \ref{sec:diag_empfish_methods}}).

A well-known and oft quoted result about stochastic natural gradient descent is that it is asymptotically ``Fisher efficient" \citep{natural_efficient}.  Roughly speaking, this means that it provides an asymptotically unbiased estimate of the parameters with the lowest possible variance among all unbiased estimators (that see the same amount of data), thus achieving the best possible expected objective function value. Unfortunately, as discussed in \textbf{Section \ref{sec:Fisher_efficient}}, this result comes with several important caveats which significantly limit its applicability.  Moreover, even when it is applicable, it only provides an \emph{asymptotically} accurate characterization of the method, which may not usefully describe its behavior given a finite number of iterations.

To address these issues we build on the work of \citet{Murata} in \textbf{Section \ref{sec:speed_analysis}} and \textbf{Section \ref{sec:averaging}} to develop a more powerful convergence theory for stochastic 2nd-order methods (including natural gradient descent) as applied to convex \emph{quadratic objectives}. Our results provide a more precise expression for the convergence speed of such methods than existing results do\footnote{Alternate versions of several of our results have appeared in the literature before \citep[e.g.][]{Polyak_averaging, sgd-qn, moulines2011non}. These formulations tend to be more general than ours (we restrict to the quadratic case), but also less precise and interpretable. In particular, previous results tend to omit asymptotically negligible terms that are nonetheless important pre-asymptotically. Or when they do include these terms, their bounds are simultaneously looser and more complicated than our own, perhaps owing to their increased generality. We discuss connections to some prior work on convergence bounds in Sections \ref{sec:related_bound_results_1} and \ref{sec:related_bound_results_2}.}, and properly account for the effect of the starting point.  And as we discuss in \textbf{Section \ref{sec:consequences_asymptotic_main}} and \textbf{Section \ref{sec:consequences_averaging}} they imply various interesting consequences about the relative performance of various 1st and 2nd-order stochastic optimization methods. Perhaps the most interesting conclusion of this analysis is that, while stochastic gradient descent with Polyak-style parameter averaging achieves the same \emph{asymptotic} convergence speed as stochastic natural gradient descent (and is thus also ``Fisher efficient", as was first shown by \citet{Polyak_averaging}), stochastic 2nd-order methods can possess a much more favorable dependence on the starting point, which means that they can make much more progress given a limited iteration budget. Another interesting observation made in our analysis is that stochastic 2nd-order methods that use a decaying learning rate (of the form $\alpha_k = 1/(k+a+1)$) can, for certain problems, achieve an asymptotic objective function value that is better than that achieved in the same number of iterations by stochastic gradient descent (with a similar decaying learning rate), by a large constant factor. 

Unfortunately, these convergence theory results fail to explain why 2nd-order optimization with the GGN/Fisher works so much better than classical 2nd-order schemes based on the Hessian \emph{for neural network training} \citep{schraudolph, HF, KSD}. In \textbf{Section \ref{sec:future_work}} we propose several important open questions in this direction that we leave for future work.

\newpage
\section*{Table of Notation}
\begin{table}[H]
\label{tab:notation}
\centering
\begin{tabular}{|l|l|}
\hline
Notation & Description \\
\hline
$[v]_i$ & $i$-th entry of a vector $v$\\
$[A]_{i,j}$ & $(i,j)$-th entry a matrix $A$\\
$\nabla \gamma$ & gradient of a scalar function $\gamma$\\
$J_\gamma$ & Jacobian of a vector-valued function $\gamma$\\
$H_\gamma$ & Hessian of a scalar function $\gamma$ (typically taken with \\
& respect to $\theta$ unless otherwise specified) \\
$H$ & Hessian of the objective function $h$ w.r.t. $\theta$ (i.e. $H_h$) \\
$\theta$ & vector of all the network's parameters \\
$W_i$ & weight matrix at layer $i$ \\
$\phi_i$ & activation function for layer $i$ \\
$s_i$ & unit inputs at layer $i$ \\
$a_i$ & unit activities at layer $i$ \\
$\ell$ & number of layers \\
$m$ & dimension of the network's output $f(x,\theta)$ \\
$m_i$ & number of units in $i$-th layer of the network  \\
$f(x,\theta)$ & function mapping the neural network's inputs to its output\\
$L(y,z)$ & loss function \\
$h$ & objective function \\
$S$ & training set \\
$k$ & current iteration \\
$n$ & dimension of $\theta$ \\
$M_k(\delta)$ & local quadratic approximation of $h$ at $\theta_k$\\
$\lambda$ & strength constant for penalty-based damping  \\
$\lambda_j(A)$ & $j$-th largest eigenvalue of a symmetric matrix $A$ \\
$G$ & generalized Gauss-Newton matrix (GGN) \\
$P_{x,y}(\theta)$ & model's distribution \\
$Q_{x,y}$ & data distribution \\
$\hat{Q}_{x,y}$ & training/empirical distribution \\
$R_{y|z}$ & predictive distribution used at network's output (so $P_{y|x}(\theta) = R_{y|f(x,\theta)}$)\\
$p$, $q$, $r$ & density functions associated with above $P$, $Q$, and $R$ (resp.) \\
$F$ & Fisher information matrix (typically associated with $P_{x,y}$) \\
$F_D$ & Fisher information matrix associated with parameterized distribution $D$\\
$\tilde{\nabla} h$ & The natural gradient for objective $h$.\\
\hline
\end{tabular}
\caption{\small A table listing some of the notation used throughout this document.}
\end{table}

\newpage

\section{Neural Networks}
\label{sec:neural_networks}



Feed-forward neural networks are structured similarly to classical circuits.  They typically consist of a sequence of $\ell$ ``layers" of units, where each unit in a given layer receive inputs from the units in the previous layer, and computes an affine function of these followed by a scalar non-linear function called an ``activation function".  The input vector to the network, denoted by $x$, is given by the units of the first layer, which is called the ``input layer" (and is not counted towards the total $\ell$).  The output vector of the network is given by the units of the network's last layer (called the ``output layer"). The other layers are referred to as the network's ``hidden layers".  

Formally, given input $x \in \Real^{m_0}$, and parameters $\theta \in \Real^n$ which determine weight matrices $W_1 \in \Real^{m_1\times m_0}, W_2 \in \Real^{m_2\times m_1}, \ldots, W_\ell \in \Real^{m_\ell\times m_{\ell-1}}$ and biases $b_1 \in \Real^{m_1}, b_2 \in \Real^{m_2}, \ldots, b_\ell \in \Real^{m_\ell}$, the network computes its output $f(x, \theta) = a_\ell$ according to
\begin{align*}
s_i &= W_i a_{i-1} + b_i \\
a_i &= \nonlin_i(s_i) \ep{,}
\end{align*}
where $a_0 = x$.  
Here, $a_i$ is the vector of values (``activities") of the network's $i$-th layer, and $\nonlin_i(\cdot)$ is the vector-valued non-linear ``activation function" computed at layer $i$, and is often given by some simple scalar function applied coordinate-wise.

Note that most of the results discussed in this document will apply to the more general setting where $f(x,\theta)$ is an arbitrary differentiable function (in both $x$ and $\theta$).

\section{Supervised Learning Framework}
\label{sec:general_learning}

The goal of optimization in the context of supervised learning is to find some setting of $\theta$ so that, for each input $x$ in the training set, the output of the network (which we will sometimes call its ``prediction") matches the given target outputs as closely as possible, as measured by some loss.  In particular, given a training set $S$ consisting of pairs $(x,y)$, we wish to minimize the objective function
\begin{align}
\label{eqn:objective_def}
h(\theta) \equiv \frac{1}{|S|}\sum_{(x,y) \in S} L( y, f(x, \theta) ) ,
\end{align}
where $L(y, z)$ is a ``loss function" which measures the disagreement between $y$ and $z$.

The prediction $f(x, \theta)$ may be a guess for $y$, in which case $L$ might measure the inaccuracy of this guess (e.g. using the familiar squared error $\frac{1}{2}\| y - z \|^2$).  Or $f(x, \theta)$ could encode the parameters of some simple predictive distribution. For example, $f(x, \theta)$ could be the set of probabilities which parameterize a multinomial distribution over the possible discrete values of $y$, with $L(y,f(x, \theta))$ being the negative log probability of $y$ under this distribution. 

\section{KL Divergence Objectives}
\label{sec:stat_learn}



The natural gradient method of \citet{natural_efficient} can potentially be applied to any objective function which measures the performance of some statistical model.  However, it enjoys richer theoretical properties when applied to objective functions based on the KL divergence between the model's distribution and the target distribution, or certain approximations/surrogates of these.  
 In this section we will establish the basic notation and properties of these objective functions, and discuss the various ways in which they can be formulated.  Each of these formulations will be analogous to a particular formulation of the Fisher information matrix and natural gradient (as defined in Section \ref{sec:nat_grad}), which will differ in subtle but important ways.

In the idealized setting, input vectors $x$ are drawn independently from a \emph{target} distribution $Q_x$ with density function $q(x)$, and the corresponding (target) outputs $y$ from a \emph{conditional target} distribution $Q_{y|x}$ with density function $q(y|x)$. 

We define the goal of learning as the minimization of the KL divergence from the target joint distribution $Q_{x,y}$, whose density is $q(y,x) = q(y|x)q(x)$, to the learned distribution $P_{x,y}(\theta)$, whose density is $p(x,y|\theta) = p(y|x, \theta) q(x)$.  (Note that the second $q(x)$ is not a typo here, since we are not learning the distribution over $x$, only the conditional distribution of $y$ given $x$.)  Our objective function is thus
\begin{align*}
\KL( Q_{x,y} \| P_{x,y}(\theta) ) = \int q(x,y) \log \frac{q(x,y)}{p(x,y|\theta)} dxdy \ep{.}
\end{align*}
This is equivalent to the expected KL divergence $\ex_{Q_x}[ \KL( Q_{y|x} \| P_{y|x}(\theta) ) ]$ as can be seen by
\begin{align}
    \KL( Q_{x,y} \| P_{x,y}(\theta) ) \nonumber &= \int q(x,y) \log \frac{q(y|x) q(x)}{p(y|x,\theta) q(x)}dxdy \nonumber \\
    &= \int q(x) \int q(y|x) \log \frac{q(y|x)}{p(y|x,\theta)}dy dx \nonumber \\
    &= \ex_{Q_x}\left[ \KL( Q_{y|x} \| P_{y|x}(\theta) ) \right] \label{eqn:KL_obj_cond}
\end{align}

It is often the case that we only have samples from $Q_x$ and no direct knowledge of its density function.  Or the expectation w.r.t.~$Q_x$ in eqn.~\ref{eqn:KL_obj_cond} may be too difficult to compute.  In such cases, we can substitute an empirical \emph{training} distribution $\hat{Q}_x$ in for $Q_x$, which is given by a set $S_x$ of samples from $Q_x$.  This results in the objective
\begin{align*}
\ex_{\hat{Q}_x}\left[ \KL( Q_{y|x} \| P_{y|x}(\theta) ) \right] = \frac{1}{|S|} \sum_{x \in S_x} \KL( Q_{y|x} \| P_{y|x}(\theta) ) \ep{.}
\end{align*}

Provided that $q(y|x)$ is known for each $x$ in $S_x$, and that $\KL( Q_{y|x} \| P_{y|x}(\theta) )$ can be efficiently computed, we can use the above expression as our objective. Otherwise, as is often the case, we might only have access to a single sample $y$ from $Q_{y|x}$ for each $x \in S_x$, giving an empirical \emph{training} distribution $\hat{Q}_{y|x}$. Substituting this in for $Q_{y|x}$ gives the objective function
\begin{align*}
\ex_{\hat{Q}_x}\left[ \KL( \hat{Q}_{y|x} \| P_{y|x}(\theta) ) \right] \propto -\frac{1}{|S|} \sum_{(x,y) \in S} \log {p(y|x,\theta)} \ep{,}
\end{align*}
where we have extended $S_x$ to a set $S$ of the $(x,y)$ pairs (which agrees with how $S$ was defined in Section \ref{sec:general_learning}). Here, the proportionality is with respect to $\theta$, and it hides an additive constant which is technically infinity\footnote{The constant corresponds to the differential entropy of the Dirac delta distribution centered at $y$. One can think of this as approaching infinity under the limit-based definition of the Dirac.}. This is \emph{effectively} the same objective that is minimized in standard maximum likelihood learning.

This kind of objective function fits into the general supervised learning framework described in Section \ref{sec:general_learning} as follows.  We define the learned conditional distribution $P_{y|x}(\theta)$ to be the composition of the deterministic prediction function $f(x,\theta)$ (which may be a neural network), and an ``output" conditional distribution $R_{y|z}$ (with associated density function $r(y|z)$), so that 
\begin{align*}
P_{y|x}(\theta) = R_{y|f(x,\theta)} \ep{.}
\end{align*}
We then define the loss function as $L(y,z) = -\log r(y|z)$.

Given a loss function $L$ which is not explicitly defined this way one can typically still find a corresponding $R$ to make the definition apply.  In particular, if $\exp(-L (y, z))$ has the same finite integral w.r.t.~$y$ for each $z$, then one can define $R$ by taking $r(y | z) \propto \exp(-L (y, z))$, where the proportion is w.r.t.~both $y$ and $z$.

\section{Various Definitions of the Natural Gradient and the Fisher Information Matrix}
\label{sec:nat_grad}


The Fisher information matrix $F$ of $P_{x,y}(\theta)$ w.r.t.~$\theta$ (aka the ``Fisher") is given by
\begin{align}
\label{eqn:F_expression_1}
F &= \ex_{P_{x,y}}\left[ \nabla \log p(x,y|\theta) \nabla \log p(x,y|\theta)^\top \right] \\
\label{eqn:F_expression_2}
&= -\ex_{P_{x,y}}\left[ H_{\log p(x,y | \theta)} \right ] \ep{.}
\end{align}
where gradients and Hessians are taken w.r.t.~$\theta$.  It can be immediately seen from the first of these expressions for $F$ that it is positive semi-definite (PSD) (since it's the expectation of something which is trivially PSD, a vector outer-product).  And from the second expression we can see that it also has the interpretation of being the negative expected Hessian of $\log p(x,y|\theta)$.

The usual definition of the natural gradient \citep{natural_efficient} which appears in the literature is
\begin{align*}
\tilde{\nabla} h = F^{-1} \nabla h \ep{,}
\end{align*}
where $F$ is the Fisher and $h$ is the objective function.

Because $p(x,y|\theta) = p(y|x, \theta) q(x)$, where $q(x)$ doesn't depend on $\theta$, we have 
\begin{align*}
\nabla \log p(x,y | \theta) = \nabla \log p(y | x, \theta) + \nabla \log q(x) = \nabla \log p(y | x, \theta) \ep{,}
\end{align*} 
and so $F$ can also be written as the expectation (w.r.t.~$Q_x$) of the Fisher information matrix of $P_{y|x}(\theta)$ as follows:
\begin{align*}
F &= \ex_{Q_x}\left[\ex_{P_{y|x}}\left[ \nabla \log p(y|x,\theta) \nabla \log p(y|x,\theta)^\top \right]\right] \quad \: \mbox{or} \quad \:  F = -\ex_{Q_x}\left[\ex_{P_{y|x}}\left[ H_{\log p(y | x, \theta)} \right ]\right] \ep{.}
\end{align*}

In \citet{natural_efficient}, this version of $F$ is computed explicitly for a basic perceptron model (basically a neural network with 0 hidden layers) in the case where $Q_x = N(0,I)$. However, in practice the real $q(x)$ may not be directly available, or it may be difficult to integrate $H_{\log p(y | x, \theta)}$ over $Q_x$.  For example, the conditional Hessian $H_{\log p(y | x, \theta)}$ corresponding to a multi-layer neural network may be far too complicated to be analytically integrated, even for a very simple $Q_x$.  In such situations $Q_x$ may be replaced with its empirical version $\hat{Q}_x$, giving
\begin{align*}
F = \frac{1}{|S|} \sum_{x \in S_x} \ex_{P_{y|x}} \left [\nabla \log p(y|x,\theta) \nabla \log p(y|x,\theta)^\top \right] \quad \: \mbox{or} \quad \: F = -\frac{1}{|S|} \sum_{x \in S_x} \ex_{P_{y|x}}\left[H_{\log p(y | x, \theta)}\right] \ep{.}
\end{align*}
This is the version of $F$ considered in \citet{ng_adaptive}.

From these expressions we can see that when $L(y,z) = -\log r(y|z)$ (as in Section \ref{sec:stat_learn}), the Fisher has the interpretation of being the expectation under $P_{x,y}$ of the Hessian of $L(y,f(x,\theta))$:
\begin{align*}
    F = \frac{1}{|S|} \sum_{x \in S_x} \ex_{P_{y|x}}\left[H_{L(y,f(x,\theta))}\right] \ep{.}
\end{align*}
Meanwhile, the Hessian $H$ of $h$ is also given by the expected value of the Hessian of $L(y,f(x,\theta))$, except under the distribution $\hat{Q}_{x,y}$ instead of $P_{x,y}$ (where $\hat{Q}_{x,y}$ is given by the density function $\hat{q}(x,y) = \hat{q}(y|x) \hat{q}(x)$).  In other words
\begin{align*}
    H = \frac{1}{|S|} \sum_{x \in S_x} \ex_{\hat{Q}_{x,y}}\left[H_{L(y,f(x,\theta))}\right] \ep{.}
\end{align*}
Thus $F$ and $H$ can be seen as approximations of each other in some sense. 

\section{Geometric Interpretation}

\label{sec:geom}

The negative gradient $-\nabla h$ can be interpreted as the steepest descent direction for $h$ in the sense that it yields the greatest instantaneous rate of reduction in $h$ per unit of change in $\theta$, where change in $\theta$ is measured using the standard Euclidean norm $\|\cdot\|$. More formally we have
\begin{align*}
\frac{-\nabla h}{\|\nabla h\|} = \lim_{\epsilon \to 0} \frac{1}{\epsilon} \argmin_{d : \|d\| \leq \epsilon} h(\theta + d) \ep{.}
\end{align*}
This interpretation highlights the strong dependence of the gradient on the Euclidean geometry of the parameter space (as defined by the norm $\|\cdot\|$).  

One way to motivate the natural gradient is to show that it (or more precisely its negation) can be viewed as a steepest descent direction, much like the negative gradient can be, except with respect to a metric that is intrinsic to the distributions being modeled, as opposed to the default Euclidean metric which is tied to the given parameterization.  In particular, the natural gradient can be derived by adapting the steepest descent formulation to use an alternative definition of (local) distance based on the ``information geometry" \citep{information_geom} of the space of probability distributions.  The particular distance function\footnote{Note that this is not a formal ``distance" function in the usual sense since it is not symmetric.} which gives rise to the natural gradient turns out to be
\begin{align*}
\KL(P_{x,y}(\theta + d) \| P_{x,y}(\theta) ) \ep{.}
\end{align*}

To formalize this, one can use the well-known connection between the KL divergence and the Fisher, given by the Taylor series approximation
\begin{align*}
\KL( P_{x,y}(\theta + d) \| P_{x,y}(\theta) ) = \frac{1}{2} d^\top F d + O(d^3) \ep{,}
\end{align*}
where ``$O(d^3)$" is short-hand to mean terms that are order 3 or higher in the entries of $d$.  Thus, $F$ defines the local quadratic approximation of this distance, and so gives the mechanism of \emph{local} translation between the geometry of the space of distributions, and that of the original parameter space with its default Euclidean geometry.  

To make use of this connection, \citet{IGO} proves for general PSD matrices $A$ that
\begin{align*}
\frac{- A^{-1} \nabla h}{\| \nabla h \|_{A^{-1}} } &= \lim_{\epsilon \to 0} \frac{1}{\epsilon} \argmin_{d : \|d\|_A \leq \epsilon} h(\theta + d) \ep{,}
\end{align*}
where the notation $\|v\|_B$ is defined by $\|v\|_B = \sqrt{ v^\top B v}$. 
Taking $A = \frac{1}{2} F$ and using the above Taylor series approximation to establish that
\begin{align*}
\KL( P_{x,y}(\theta + d) || P_{x,y}(\theta) ) \to \frac{1}{2} d^\top F d = \frac{1}{2} \|d\|^2_{F}   
\end{align*}
as $\epsilon \to 0$, \citep{IGO} then proceed to show that
\begin{align*}
-\sqrt{2} \frac{\tilde{\nabla} h} {\| \nabla h \|_{F^{-1}} } = \lim_{\epsilon \to 0} \frac{1}{\epsilon} \argmin_{d \: : \: \KL( P_{x,y}(\theta + d) \| P_{x,y}(\theta)  ) \leq \epsilon^2} h(\theta + d) \ep{,}
\end{align*}
(where we recall the notation $\tilde{\nabla} h = F^{-1} \nabla h$).

Thus the negative natural gradient is indeed the steepest descent direction in the space of distributions where distance is measured in small local neighborhoods by the KL divergence.  

Note that both $F$ and $\tilde{\nabla} h$ are defined in terms of the standard basis in $\theta$-space, and so obviously depend on the parameterization of $h$.  But the KL divergence does not, and instead only depends on the form of the predictive distribution $P_{y|x}$.  Thus, the direction in distribution space defined implicitly by $\tilde{\nabla} h$ will be invariant to our choice of parameterization (whereas the direction defined by $\nabla h$ will not be, in general).  

By using the smoothly varying PSD matrix $F$ to locally define a metric tensor at every point in parameter space, a Riemannian manifold can be generated over the space of distributions.  Note that the associated metric of this space won't be the square root of the KL divergence (this isn't even a valid metric), although it will be ``locally equivalent" to it in the sense that the two functions will approximate each other within a small local neighborhood.


\section{2nd-order Optimization}

\label{sec:2nd-order}

The basic idea in 2nd-order optimization is to compute the update $\delta$ to $\theta \in \Real^n$ by minimizing some local quadratic approximation or ``model" $M_k(\delta)$ of $h(\theta_k + \delta)$ centered around the current iterate $\theta_k$.  That is, we compute $\delta_k^* = \argmin_{\delta} M_k(\delta)$ and then update $\theta$ according to $\theta_{k+1} = \theta_k + \alpha_k \delta_k^*$, where $M_k(\delta)$ is defined by
\begin{align*}
M_k(\delta) = \frac{1}{2} \delta^\top \B_k \delta + \nabla h(\theta_k)^\top \delta + h(\theta_k) \ep{,}
\end{align*}
and where $\B_k \in \Real^{n \times n}$ is the ``curvature matrix", which is symmetric.  The ``sub-problem" of minimizing $M_k(\delta)$ can be solved exactly by solving the $n \times n$ dimensional linear system $\B_k\delta = -\nabla h$, whose solution is $\delta^* = -\B_k^{-1} \nabla h$ when $\B_k$ is positive definite.

Gradient descent, the canonical 1st-order method, can be viewed in the framework of 2nd-order methods as making the choice $B_k = \beta I$ for some $\beta$, resulting in the update $\delta_k^* = -\frac{1}{\beta} \nabla h(\theta_k)$.  In the case where $h$ is convex and Lipschitz-smooth\footnote{By this we mean that $\|\nabla h(\theta) - \nabla h(\theta')\| \leq \mathcal{L} \|\theta - \theta'\|$ for all $\theta$ and $\theta'$.} with constant $\mathcal{L}$, a safe/conservative choice that will ensure convergence with $\alpha_k = 1$ is $\beta = \mathcal{L}$ \citep[e.g.][]{nesterov2013introductory}. The intuition behind this choice is that $\B$ will act as a \emph{global} upper bound on the curvature of $h$, in the sense that $\B_k = \mathcal{L} I \succeq H(\theta)$\footnote{Here we define $A \succeq C$ to mean that $A-C$ is PSD.} for all $\theta$, so that $\delta_k^*$ never extends past the point that would be safe in the worst-case scenario where the curvature is at its upper bound $\mathcal{L}$ the entire way along $\delta^*$. More concretely, one can show that given this choice of $\beta$, $M_k(\delta)$ upper bounds $h(\theta_k + \delta)$, and will therefore never predict a reduction in $h(\theta_k + \delta)$ where there is actually a sharp increase (e.g. due to $h$ curving unexpectedly upward on the path from $\theta_k$ to $\theta_k + \delta$).  Minimizing $M_k(\delta)$ is therefore guaranteed not to increase $h(\theta_k + \delta)$ beyond the current value $h(\theta_k)$ since $M_k(0) = h(\theta_k)$.  But despite these nice properties, this choice will almost always overestimate the curvature in most directions, leading to updates that move unnecessarily slowly along directions of consistent low curvature. 

While neural networks haven't been closely studied by optimization researchers until somewhat recently, many of the local optimization issues related to neural network learning can be seen as special cases of problems which arise more generally in continuous optimization.  For example, tightly coupled parameters with strong local dependencies, and large variations in scale along different directions in parameter space (which may arise due to the ``vanishing gradient" phenomenon \citep{vanishing}), are precisely the sorts of issues for which 2nd-order optimization is well suited.  Gradient descent on the other hand is well known to be very sensitive to such issues, and in order to avoid large oscillations and instability must use a learning rate which is inversely proportional to  $\mathcal{L}$.  2nd-order optimization methods provide a much more powerful and elegant solution to the problem of variations in scale/curvature along different directions by selectively re-scaling the gradient along different eigen-directions of the curvature matrix $\B_k$ according to their associated curvature (eigenvalue), instead of employing a one-size-fits-all curvature estimate.

In the classical Newton's method we take $\B_k = H(\theta_k)$, in which case $M_k(\delta)$ becomes the 2nd-order Taylor-series approximation of $h$ centered at $\theta_k$. This choice gives us the most accurate \emph{local} model of the curvature possible, and allows for rapid exploration of low-curvature directions and thus faster convergence. 

Unfortunately, naive implementations of Newton's method can run into numerous problems when applied to neural network training objectives, such as $H$ being sometimes indefinite (and thus $M_k(\delta)$ being unbounded below in directions of negative curvature) and related issues of ``model trust", where the method implicitly trusts its own local quadratic model of the objective too much, causing it to propose very large updates that may actually increase the $h$. These problems are usually not encountered with first order methods, but only because they use a very conservative local model that is intrinsically incapable of generating large updates. Fortunately, using the Gauss-Newton approximation to the Hessian (as discussed in Section \ref{sec:GGN}), and/or applying various update damping/trust-region techniques (as discussed in Section \ref{sec:role_of_damping}), the issue of model trust issue in 2nd-order methods can be mostly overcome.

Another important issue preventing the naive application of 2nd-order methods to neural networks is the typically very high dimensionality of the parameter space ($n$), which prohibits the calculation/storage/inversion of the $n^2$-entry curvature matrix $\B_k$.  To address this, various approximate Newton methods have been developed within the optimization and machine learning communities.  These methods work by approximating $\B_k$ with something easier to compute/store/invert such as a low-rank or diagonal matrix, or by performing only approximate/incomplete optimization of $M_k(\delta)$. A survey of such methods is outside the scope of this report, but many good references and reviews are available \citep[e.g.][]{nocedal_book, fletcher2013practical}. \citet{martens_thesis} reviews these approaches specifically in the context of neural networks.

Finally, it is worth observing that the \emph{local} optimality of the Hessian-based 2nd-order Taylor series approximation to $h$ won't necessarily yield the fastest possible optimization procedure, as it is possible to imagine quadratic models that take a ``longer view" of the objective. (As an extreme example, given knowledge of a global minimizer $\theta^*$ of $h$, one could construct a quadratic model whose minimizer is exactly $\theta^*$ but which is a very poor local approximation to $h(\theta)$.)  It is possible that the Fisher might give rise to such quadratic models, which would help explain its observed superiority to the Hessian in neural network optimization \citep{schraudolph,HF,KSD}. We elaborate more on this speculative theory in Section \ref{sec:GGN_speculation}.

\section{The Generalized Gauss-Newton Matrix}
\label{sec:GGN}
This section discusses the Generalized Gauss-Newton matrix of \citet{schraudolph}, and justifies its use as an alternative to the Hessian. Its relevance to our discussion of natural gradient methods will be made clear later in Section \ref{sec:connections_to_GGN}, where we establish a correspondence between this matrix and the Fisher.

The classical Gauss-Newton matrix (or more simply the Gauss-Newton matrix) is the curvature matrix $G$ which arises in the Gauss-Newton method for non-linear least squares problems \citep[e.g.][]{dennis_text, ortega_text, nocedal_book}.  It is applicable to our standard neural network training objective $h$ in the case where $L(y,z) = \frac{1}{2}\|y - z\|^2$, and is given by 
\begin{align*}
G = \frac{1}{|S|} \sum_{(x,y) \in S} J_f^\top J_f \ep{,}
\end{align*}
where $J_f$ is the Jacobian of $f(x,\theta)$ w.r.t.~the parameters $\theta$. 
It is usually defined as a modified version of the Hessian $H$ of $h$ (w.r.t.~$\theta$), obtained by dropping the second term inside the sum in the following expression for $H$:
\begin{align*}
H = \frac{1}{|S|} \sum_{(x,y) \in S} \left( J_f^\top J_f - \sum_{j=1}^m [y-f(x,\theta)]_j H_{[f]_j} \right) \ep{,}
\end{align*}
where $H_{[f]_j}$ is the Hessian (w.r.t.~$\theta$) of the $j$-th component of $f(x,\theta)$. We can see from this expression that $G=H$ when $y = f(x,\theta)$. And more generally, if the $y$'s are well-described by the model $f(x,\theta) + \epsilon$ for i.i.d. noise $\epsilon$ then $G=H$ will hold approximately.

An alternative way to derive the classical Gauss-Newton is to simply replace the non-linear function $f(x,\theta)$ by its own local linear approximation, centered at the current iterate $\theta_k$.  In particular, we replace $f$ by $\tilde{f}(x, \theta) = J_f \cdot (\theta - \theta_k)  + f(x, \theta_k)$ so that $h$ becomes a quadratic function of $\theta$, with derivative $\nabla h (\theta_k)$ and Hessian given by $G$.

Beyond the fact that the resulting matrix is PSD and has other nice properties discussed below, there doesn't seem to be any obvious justification for linearizing $f$ (or equivalently, dropping the corresponding term from the Hessian). It's likely that the reasonableness of doing this depends on problem-specific details about $L$ and $f$, and how the curvature matrix will be used by the optimizer. In Subsection \ref{sec:ggn_insights_other_work} we discuss how linearizing $f$ might be justified, specifically for wide neural networks, by some recent theoretical analyses.

\citet{schraudolph} showed how the idea of the Gauss-Newton matrix can be generalized to the situation where $L(y,z)$ is \emph{any} loss function which is convex in $z$.  The generalized formula for $G$ is
\begin{align}
\label{eqn:GGN}
G = \frac{1}{|S|}\sum_{(x,y) \in S} J_f^\top H_L J_f \ep{,}
\end{align}
where $H_L$ is the Hessian of $L(y, z)$ w.r.t.~$z$, evaluated at $z = f(x,\theta)$.  
Because $L(y,z)$ is convex in $z$, $H_L$ will be PSD for each $(x,y)$, and thus so will $G$.  We will call this $G$ the Generalized Gauss-Newton matrix (GGN). (Note that this definition is sensitive to where we draw the dividing line between the loss function $L$ and the network itself (i.e. $z$), in contrast to the definition of the Fisher, which is invariant to this choice.)

Analogously to the case of the classical Gauss-Newton matrix (which assumed $L(y,z) = \frac{1}{2}\|y - z\|^2$), the GGN can be obtained by dropping the second term inside the sum of the following expression for the Hessian $H$:
\begin{align}
\label{eqn:H_decomp}
H = \frac{1}{|S|} \sum_{(x,y) \in S} \left( J_f^\top H_L J_f + \sum_{j=1}^m \left[\left.\nabla_z L(y,z)\right|_{z=f(x,\theta)}\right]_j H_{[f]_j} \right) \ep{.}
\end{align}
Here $\left.\nabla_z L(y,z)\right|_{z=f(x,\theta)}$ is the gradient of $L(y,z)$ w.r.t.~$z$, evaluated at $z = f(x,\theta)$.  Note if we have for some local optimum $\theta^*$ that $\left[\left.\nabla_z L(y,z)\right|_{z=f(x,\theta^*)}\right]_j \approx 0$ for each $(x,y)$ and $j$, which corresponds to the network making an optimal prediction for each training case over each dimension, then $G(\theta^*) = H(\theta^*)$.  In such a case, the behavior of a 2nd-order optimizer using $G$ will approach the behavior of standard Newton's method as it converges to $\theta^*$. A weaker condition implying equivalence is that $\frac{1}{S_y(x)} \sum_{y \in S_y(x)} \left[\left.\nabla_z L(y,z)\right|_{z=f(x,\theta^*)}\right]_j \approx 0$ for all $x \in S_x$ and $j$, where $S_y(x)$ denotes the set of $y$'s s.t.~$(x,y) \in S$, which corresponds to the network making an optimal prediction for each $x$ in the presence of intrinsic uncertainty about the target $y$. (This can be seen by noting that $H_{[f]_j}$ doesn't depend on $y$.)

Like the Hessian, the GGN can be used to define a local quadratic model of $h$, as given by:
\begin{align*}
M_k(\delta) = \frac{1}{2} \delta^\top G(\theta_k) \delta + \nabla h(\theta_k)^\top \delta + h(\theta_k) \ep{.}
\end{align*}
In 2nd-order methods based on the GGN, parameter updates are computed by minimizing $M_k(\delta)$ w.r.t. $\delta$.  The exact minimizer\footnote{This formula assumes $G(\theta_k)$ is invertible. If it's not, the problem will either be unbounded, or the solution can be computed using the pseudo-inverse instead.} $\delta^* = -G(\theta_k)^{-1} \nabla h(\theta_k)$ is often too difficult to compute, and so practical methods will often only approximately minimize $M_k(\delta)$ \citep[e.g][]{dembo1982inexact, steihaug1983conjugate,dennis_text,HF,KSD}.


Since computing the whole matrix explicitly is usually too expensive, the GGN is typically accessed via matrix-vector products.  To compute such products efficiently one can use the method of \citet{schraudolph}, which is a generalization of the well-known method for computing such products with the classical Gauss-Newton (and is also related to the TangentProp method of \citet{simard1992tangent}).  The method is similar in cost and structure to standard backpropagation, although it can sometimes be tricky to implement (see \citet{HF_chapter}).

As pointed out in \citet{hf-rnn}, the GGN can also be derived by a generalization of the previously described derivation of the classical Gauss-Newton matrix to the situation where $L$ is an arbitrary convex loss.  In particular, if we substitute the linearization $\tilde{f}$ for $f$ in $h$ as before (where $\tilde{f}(x, \theta) = J_f \cdot (\theta - \theta_i)  + f(x, \theta_i)$ is the linearization of $f$), it is not difficult to see that the Hessian of the resulting modified $h$ will be equal to the GGN.


\citet{schraudolph} advocated that when computing the GGN, $L$ and $f$ be redefined so that as much as possible of the network's computation is performed within $L$ instead of $f$, while maintaining the convexity of $L$.  This is because, unlike $f$, $L$ is not linearly approximated in the GGN, and so its associated second-order derivative terms are faithfully captured.  What this almost always means in practice is that what is usually thought of as the final non-linearity of the network (i.e. $\nonlin_\ell$) is folded into $L$, and the network itself just computes the identity function at its final layer.  Interestingly, in many natural situations which occur in practice, doing this gives a much simpler and more elegant expression for $H_L$.  Exactly when and why this happens will be made clear in Section \ref{sec:connections_to_GGN}. 

\subsection{Speculation on Possible Advantages of the GGN Over the Hessian}
\label{sec:GGN_speculation}

\subsubsection{Qualitative Observations}

Unlike the Hessian, the GGN is positive semi-definite (PSD). This means that it never models the curvature as negative in any direction.  The most obvious problem with negative curvature is that the quadratic model will predict an unbounded improvement in the objective for moving in the associated directions. Indeed, without the use of some kind of trust-region or damping technique (as discussed in Section \ref{sec:role_of_damping}), or pruning/modification of negative curvature directions \citep{KSD, dauphin2014identifying}, or self-terminating Newton-CG scheme \citep{steihaug1983conjugate}, the update produced by minimizing the quadratic model will be infinitely large in such directions. 

However, attempts to use such methods in combination with the Hessian have yielded lackluster results for neural network optimization compared to methods based on the GGN \citep{HF, HF_chapter, KSD}. So what might be going on here? While the true curvature of $h(\theta)$ can indeed be negative in a local neighborhood (as measured by the Hessian), we know it must quickly become non-negative as we travel along any particular direction, given that our loss $L(y,z)$ is convex in $z$ and bounded below. Meanwhile, positive curvature predicts a quadratic penalty, and in the worst case merely underestimates how badly the objective will eventually increase along a particular direction. We can thus say that negative curvature is somewhat less ``trustworthy" than positive curvature for this reason, and speculate that a 2nd-order method based on the GGN won't have to rely as much on trust-regions etc (which restrict the size of the update and slow down performance) to produce reliable updates.

There is also the issue of estimation from limited data. Because contributions made to the GGN for each training case and each individual component of $f(x,\theta)$ are PSD, there can be no cancellation between positive and negative/indefinite contributions. This means that the GGN can be more robustly estimated from subsets of the training data than the Hessian. (By analogy, consider how much harder it is to estimate the scale of the mean value of a variable when that variable can take on both positive and negative values, and has a mean close to $0$.) This property also means that positive curvature from one case or component will never be cancelled out by negative curvature from another case or component. And if we believe that negative curvature is less trustworthy than positive curvature over larger distances, this is probably a good thing.

Despite these nice properties, the GGN is notably \emph{not} an upper bound on the Hessian (in the PSD sense), as it fails to model \emph{all} of the positive curvature contained in the latter.  But crucially, it only fails to model the (positive or negative) curvature coming from the network function $f(x,\theta)$, as opposed to the curvature coming from the loss function $L(y,z)$.  (To see this, recall the decomposition of the Hessian from eqn.~\ref{eqn:H_decomp}, noting that the term dropped from the Hessian depends only on the gradients of $L$ and the Hessian of components of $f$.)  Curvature coming from $f$, whether it is positive or negative, is arguably less trustworthy/stable across long distance than curvature coming from $L$, as argued below.

\subsubsection{A More Detailed View of the Hessian vs the GGN}

Consider the following decomposition of the Hessian, which is a generalization of the one given in eqn.~\ref{eqn:H_decomp}:
\begin{align*}
H = \frac{1}{|S|} \sum_{(x,y) \in S} \left( J_f^\top H_L J_f + C + C^\top + \sum_{i=1}^\ell \sum_{j=1}^{m_i} \left[\nabla_{a_i} L(y,f)\right]_j J_{s_i}^\top H_{[\phi_i(s_i)]_j} J_{s_i} \right) \ep{.}
\end{align*}
Here, $a_i$, $\phi_i$ and $s_i$ are defined as in Section \ref{sec:neural_networks}, $\nabla_{a_i} L(y,f)$ is the gradient of $L(y,f)$ w.r.t.~$a_i$, $H_{[\phi_i(s_i)]_j}$ is the Hessian of $\phi_i(s_i)$ (i.e. the function which computes $a_i$) w.r.t.~$s_i$, $J_{s_i}$ is the Jacobian of $s_i$ (viewed as a function of $\theta$ and $x$) w.r.t.~$\theta$, and $C$ is given by
\begin{align*}
C &= \left[\begin{array}{c}
     J_{a_0} \otimes \nabla_{s_{1}} L (y, f)\\
     J_{a_1} \otimes \nabla_{s_{2}} L (y, f)\\
     \vdots\\
     J_{a_{\ell}} \otimes \nabla_{s_{\ell-1}} L (y, f)
   \end{array}\right] \ep{,}
\end{align*}
where $\otimes$ denotes the Kronecker product.

The $C+C^\top$ term represents the contribution to the curvature resulting from the interaction of the different layers. Even in a linear network, where each $\phi_i$ computes the identity function, this is non-zero, since $f$ will be an order $\ell$ multilinear function of the parameters. We note that since $a_i$ doesn't depend on $W_j$ for $j > i$, $C$ will be block lower-triangular, with blocks corresponding to the $W_i$'s.

Aside from $C + C^\top$, the contribution to the Hessian made by $f$ comes from a sum of terms of the form $\left[\nabla_{a_i} L(y,f)\right]_j J_{s_i}^\top H_{[\phi_i(s_i)]_j} J_{s_i}$. These terms represent the curvature of the activation functions, and will be zero in a linear network (since we would have $H_{[\phi_i(s_i)]_j} = 0$). It seems reasonable to suspect that the sign of these terms will be subject to rapid and unpredictable change, resulting from sign changes in both $H_{[\phi_i(s_i)]_j}$ and $\left[\nabla_{a_i} L(y,f)\right]_j$. The former is the ``local Hessian" of $\phi_i$, and will change signs as the function $\phi_i$ enters its different convex and concave regions ($\phi_i$ is typically non-convex). $\left[\nabla_{a_i} L(y,f)\right]_j$ meanwhile is the loss derivative w.r.t. that unit's output, and depends on the behavior of all of the layers above $a_i$, and on which ``side" of the training target the network's current prediction is (which may flip back and forth at each iteration).  

This is to be contrasted with the term $J_f^\top H_L J_f$, which represents the curvature of the loss function $L$, and which remains PSD everywhere (and for each individual training case). Arguably, this term will be more stable w.r.t. changes in the parameters, especially when averaged over the training set.  




\subsubsection{Some Insights From Other Works}
\label{sec:ggn_insights_other_work}

In the case of the squared error loss $L(y,z) = \frac{1}{2}\|y - z\|^2$ (which means that the GGN reduces to the standard Gauss-Newton matrix) with $m=1$, \citet{chen2011hessian} established that the GGN is the unique matrix which gives rise to a local quadratic approximation of $L(y,f(x,\theta))$ which is both non-negative (as $L$ itself is), and vanishing on a subspace of dimension $n-1$ (which is the dimension of the \emph{manifold} on which $L$ itself vanishes). Notably, the quadratic approximation produced using the Hessian need not have either of these properties. By summing over output components and averaging over the training set $S$, one should be able to generalize this result to the entire objective $h(\theta)$ with $m \geq 1$. Thus, we see that the GGN gives rise to a quadratic approximation which shares certain global characteristics with the true $h(\theta)$ that the 2nd-order Taylor series doesn't, despite being a less precise approximation to $h(\theta)$ in a strictly \emph{local} sense.

\citet{botev2017practical} observed that for networks with piece-wise linear activation functions, such as the popular RELUs (given by $[\phi_i(s_i)]_j = \max([s_i]_j,0)$), the GGN and the Hessian will coincide on the diagonal blocks whenever the latter is well-defined. This can be seen from the above decomposition of $H$ by noting that $C + C^\top$ is zero on the diagonal blocks, and that for piece-wise linear activation functions we have $H_{[\phi_i(s_i)]_j} = 0$ everywhere that this quantity exists (i.e. everywhere except the ``kinks" in the activation functions).


Finally, under certain realistic assumptions on the network architecture and initialization point, and a lower bound on the width of the layers, recent results have shown that the $f$ function for a neural network behaves very similarly to its local linear approximation (taken at the initial parameters) throughout the entirety of optimization. This happens both for gradient descent \citep{du2018gradient, jacot2018neural, lee2019wide}, and natural gradient descent / GGN-based methods \citep{zhang2019fast, cai2019gram}, applied to certain choices for $L$. Not only does this allow one to prove strong global convergence guarantees for these algorithms, it lends support to the idea that modeling the curvature in $f$ (which is precisely the part of the Hessian that the GGN throws out) may be pointless for the purposes of optimization in neural networks, and perhaps even counter-productive.

\section{Computational Aspects of the Natural Gradient and Connections to the Generalized Gauss-Newton Matrix}
\label{sec:connections_to_GGN}

\subsection{Computing the Fisher (and Matrix-Vector Products With It)}

Note that
\begin{align*}
\nabla \log p(y|x,\theta) = J_f^\top \nabla_z \log r(y|z) \ep{,}
\end{align*}
where $J_f$ is the Jacobian of $f(x,\theta)$ w.r.t.~$\theta$, and $\nabla_z \log r(y|z)$ is the gradient of $\log r(y|z)$ w.r.t.~$z$, evaluated at $z = f(x,\theta)$ (with $r$ defined as near the end of Section \ref{sec:stat_learn}).  

As was first shown by \citet{ng_adaptive}, the Fisher information matrix is thus given by
\begin{align*}
F &= \ex_{Q_x}\left [\ex_{P_{y|x}} \left [\nabla \log p(y|x,\theta) \nabla \log p(y|x,\theta)^\top \right]\right] \\
&= \ex_{Q_x}[ \ex_{P_{y|x}}[ J_f^\top \nabla_z \log r(y|z) \nabla_z \log r(y|z)^\top J_f ] ] \\
&= \ex_{Q_x}[ J_f^\top \ex_{P_{y|x}} [\nabla_z \log r(y|z) \nabla_z \log r(y|z)^\top] J_f ] = \ex_{Q_x}[ J_f^\top F_R J_f ] \ep{,}
\end{align*}
where $F_R$ is the Fisher information matrix of the predictive distribution $R_{y|z}$ at $z = f(x,\theta)$. $F_R$ is itself given by
\begin{align*}
F_R = \ex_{P_{y|x}} [\nabla_z \log r(y|z) \nabla_z \log r(y|z)^\top] = \ex_{R_{y|f(x,\theta)}} [\nabla_z \log r(y|z) \nabla_z \log r(y|z)^\top]
\end{align*}
or
\begin{align*}
F_R = -\ex_{R_{y|f(x,\theta)}} [H_{\log r(y|z)}] \ep{,}
\end{align*}
where $H_{\log r(y|z)}$ is the Hessian of $\log r(y|z)$ w.r.t.~$z$, evaluated at $z = f(x,\theta)$.  

Note that even if $Q_x$'s density function $q(x)$ is known, and is relatively simple, only for certain choices of $R_{y|z}$ and $f(x,\theta$) will it be possible to analytically evaluate the expectation w.r.t.~$Q_x$ in the above expression for $F$.  For example, if we take $Q_x = \Normal(0,I)$, $R_{y|z} = \Normal(z,\sigma^2)$, and $f$ to be a simple neural network with no hidden units and a single tan-sigmoid output unit, then both $F$ and its inverse can be computed efficiently \citep{natural_efficient}. This situation is exceptional however, and for even slightly more complex models, such as neural networks with one or more hidden layers, it has never been demonstrated how to make such computations feasible in high dimensions.


Fortunately the situation improves significantly if $Q_x$ is replaced by $\hat Q_x$, as this gives
\begin{align}
\label{eqn:fisher_nice_exp}
F = \ex_{\hat Q_x}[ J_f^\top F_R J_f ] = \frac{1}{|S|} \sum_{x \in S_x} J_f^\top F_R J_f \ep{,}
\end{align}
which is easy to compute assuming $F_R$ is easy to compute.  Moreover, this is essentially equivalent to the expression in eqn.~\ref{eqn:GGN} for the generalized Gauss-Newton matrix (GGN), except that we have the Fisher $F_R$ of the predictive distribution ($R_{y|z}$) instead of Hessian $H_L$ of the loss ($L$) as the ``inner" matrix.

Eqn.~\ref{eqn:fisher_nice_exp} also suggests a straightforward and efficient way of computing matrix-vector products with $F$, using an approach similar to the one in \citet{schraudolph} for computing matrix-vector products with the GGN.  In particular, one can multiply by $J_f$ using a linearized forward pass (aka forward-mode automatic differentiation), then multiply by $F_R$ (which will be easy if $R_{y|z}$ is sufficiently simple), and then finally multiply by $J_f^\top$ using standard backprop.

\subsection{Qualified Equivalence of the GNN and the Fisher}
As we shall see in this subsection, the connections between the GGN and Fisher run deeper than just similar expressions and algorithms for computing matrix-vector products.

In \citet{ng_adaptive} it was shown that if the density function of $R_{y|z}$ has the form $r(y|z) = \prod_{j=1}^m c( y_j - z_j )$ where $c(a)$ is some univariate density function over $\Real$, then $F$ is equal to a re-scaled\footnote{The re-scaling constant will be determined by the properties of $c(a)$.} version of the classical Gauss-Newton matrix for non-linear least squares, with regression function given by $f$.  And in particular, the choice $c(a) = \exp(-a^2 / 2)$ turns the learning problem into non-linear least squares, and $F$ into the classical Gauss-Newton matrix.

\citet{heskes} showed that the Fisher and the classical Gauss-Newton matrix are equivalent in the case of the squared error loss, and proposed using the Fisher as an alternative to the Hessian in more general contexts. Concurrently with this work, \citet{razvan} showed that for several common loss functions like cross-entropy and squared error, the GGN and Fisher are equivalent.


We will show that in fact there is a much more general equivalence between the two matrices, starting from the observation that the expressions for the GGN in eqn.~\ref{eqn:GGN} and Fisher in eqn.~\ref{eqn:fisher_nice_exp} are identical up to the equivalence of $H_L$ and $F_R$.

First, note that $L(y,z)$ might not even be convex in $z$, so that it wouldn't define a valid GGN matrix. But even if $L(y,z)$ is convex in $z$, it won't be true in general that $F_R = H_L$, and so the GGN and Fisher will differ.  However, there is an important class of $R_{y|z}$'s for which $F_R = H_L$ will hold, provided that we have $L(y,z) = -\log r(y|z)$ (putting us in the framework of Section \ref{sec:stat_learn}).

Notice that $F_R = -\ex_{R_{y|f(x,\theta)}} [H_{\log r(y|z)}]$, and $H_L = -H_{\log r(y|z)}$ (which follows from $L(y,z) = -\log r(y|z)$).  Thus, the two matrices being equal is equivalent to the condition
\begin{align}
\label{eqn:equiv_cond}
\ex_{R_{y|f(x,\theta)}} [H_{\log r(y|z)}] = H_{\log r(y|z)} \ep{.}
\end{align}

While this condition may seem arbitrary, it is actually very natural and holds in the important case where $R_{y|z}$ corresponds to an exponential family model with ``natural" parameters given by $z$. Stated in terms of equations this condition is
\begin{align*}
\log r(y|z) = z^\top T(y) - \log Z(z)
\end{align*}
for some function $T(y)$, where $Z(z)$ is the normalizing constant/partition function.  In this case we have $H_{\log r(y|z)} = -H_{\log Z}$ (which doesn't depend on $y$), and so eqn.~\ref{eqn:equiv_cond} holds trivially.

Examples of such $R_{y|z}$'s include:
\begin{itemize}
\item multivariate normal distributions where $z$ parameterizes only the mean $\mu$
\item multivariate normal distributions where $z$ is the concatenation of $\Sigma^{-1}\mu$ and the vectorization of $\Sigma^{-1}$
\item multinomial distributions where the softmax of $z$ is the vector of probabilities for each class
\end{itemize}

Note that the loss function $L$ corresponding to the multivariate normal is the familiar squared error, and the loss corresponding to the multinomial distribution is the familiar cross-entropy.

Interestingly, the relationship observed by \citet{ollivier2018online} between natural gradient descent and methods based on the extended Kalman filter for neural network training relies on precisely the same condition on $R_{y|z}$. This makes intuitive sense, since the extended Kalman filter is derived by approximating $f$ as affine and then applying the standard Kalman filter for linear/Gaussian systems (which implicitly involves computing a Hessian of a linear model under a squared loss), which is the same approximation that can be used to derive the GGN from the Hessian (see Section \ref{sec:GGN}).

As discussed in Section \ref{sec:GGN}, when constructing the GGN one must pay attention to how $f$ and $L$ are defined with regards to what parts of the neural network's computation are performed by each function (this choice is irrelevant to the Fisher). For example, the softmax computation performed at the final layer of a classification network is usually considered to be part of the network itself and hence to be part of $f$.  The output $f(x,\theta)$ of this computation are normalized probabilities, which are then fed into a cross-entropy loss of the form $L(y,z) = -\sum_j y_j \log z_j$.  But the other way of doing it, which \citet{schraudolph} recommends, is to have the softmax function be part of $L$ instead of $f$, which results in a GGN which is slightly closer to the Hessian due to ``less" of the computational pipeline being linearized before taking the 2nd-order Taylor series approximation.  The corresponding loss function is $L(y,z) = -\sum_j y_j z_j + \log( \sum_j \exp(z_j) )$ in this case.  As we have established above, doing it this way also has the nice side effect of making the GGN equivalent to the Fisher, provided that $R_{y|z}$ is an exponential family model with $z$ as its natural parameters.

This (qualified) equivalence between the Fisher and the GGN suggests how the GGN can be generalized to cases where it might not otherwise be well-defined.  In particular, it suggests formulating the loss as the negative log density for some distribution, and then taking the Fisher of this distribution.  Sometimes, this might be as simple as defining $r(y | z) \propto \exp(-L (y, z))$ as per the discussion at the end of Section \ref{sec:stat_learn}.

For example, suppose our loss is defined as the negative log probability of a multi-variate normal distribution $R_{y|z} = N(\mu, \sigma^2)$ parameterized by $\mu$ and $\gamma = \log \sigma^2$ (so that $z = \begin{bmatrix}\mu \\ \gamma \end{bmatrix}$).  In other words, suppose that
\begin{align*}
L(y,z) = -\log r(y|z) \propto \frac{1}{2} \gamma + \frac{1}{2\exp(\gamma)}(x-\mu)^2 \ep{.}
\end{align*}
In this case the loss Hessian is equal to
\begin{align*}
H_L = \frac{1}{\exp(\gamma)} \begin{bmatrix} 1 & x-\mu \\ x-\mu & \frac{1}{2}(x-\mu)^2 \end{bmatrix} \ep{.}
\end{align*}
It is not hard to verify that this matrix is indefinite for certain settings of $x$ and $z$ (e.g. $x = 2$, $\mu = \gamma = 0$).  Therefore, $L$ is not convex in $z$ and we cannot define a valid GGN matrix from it.

To resolve this problem we can use the Fisher $F_R$ in place of $H_L$ in the formula for the GGN, which by eqn.~\ref{eqn:fisher_nice_exp} yields $F$.  Alternatively, we can insert reparameterization operations into our network to transform $\mu$ and $\gamma$ into the natural parameters $\frac{\mu}{\sigma^2} = \frac{\mu}{\exp(\gamma)}$ and $-\frac{1}{2\sigma^2} = -\frac{1}{2\exp(\gamma)}$, and then proceed to compute the GGN as usual, noting that $H_L = F_R$ in this case, so that $H_L$ will be PSD.  Either way will yield the same curvature matrix, due to the above discussed equivalence of the Fisher and GGN matrix for natural parameterizations.

\section{Constructing Practical Natural Gradient Methods, and the Critical Role of Damping}
\label{sec:role_of_damping}

%
%
%
%
%

Assuming that it is easy to compute, the simplest way to use the natural gradient in optimization is to substitute it in place of the standard gradient within a basic gradient descent approach.  This gives the iteration
\begin{align}
\label{eqn:ng_simple_opt}
\theta_{k+1} = \theta_k - \alpha_k \tilde{\nabla}  h(\theta_k) \ep{,}
\end{align}
where $\{\alpha_k\}_k$ is a schedule of step-sizes/learning-rates.

Choosing the step-size schedule can be difficult.  There are adaptive schemes which are largely heuristic in nature \citep{natural_efficient} and some non-adaptive prescriptions such as $\alpha_k = \rho/k$ for some constant $\rho$, which have certain theoretical convergence guarantees in the stochastic setting, but which won't necessarily work well in practice.  

In principle, we could apply the natural gradient method with infinitesimally small steps and produce a smooth idealized path through the space of realizable distributions
. 
But since this is usually impossible in practice, and we don't have access to any other simple description of the class of distributions parameterized by $\theta$ that we could work with more directly, our only option is to take non-negligible discrete steps in the given parameter space\footnote{In principle, we could move to a much more general class of distributions, such as those given by some non-parametric formulation, where we could work directly with the distributions themselves.  But even assuming such an approach would be practical from a computational efficiency standpoint, we would lose the various advantages that we get from working with powerful parametric models like neural networks.  In particular, we would lose their ability to generalize to unseen data by modeling the ``computational process" which explains the data, instead of merely using smoothness and locality to generalize.}.

 

The fundamental problem with simple schemes such as the one in eqn.~\ref{eqn:ng_simple_opt} is that they implicitly assume that the natural gradient is a good direction to follow over non-negligible distances in the original parameter space, which will not be true in general.  Traveling along a straight line in the original parameter space will not yield a straight line in distribution space, and so the resulting path may instead veer far away from the target that the natural gradient originally pointed towards.  This is illustrated in Figure \ref{fig:geo_diag}.

\begin{figure}
\begin{center}
\includegraphics[width=.9\columnwidth]{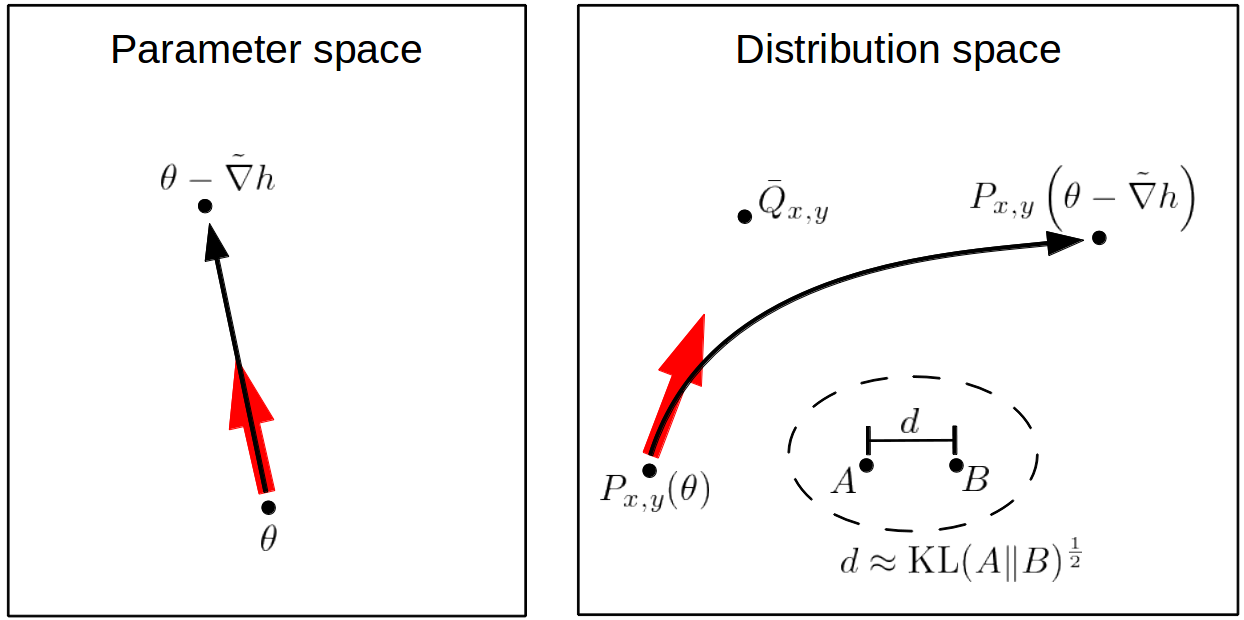}
\caption{\small A typical situation encountered when performing large discrete updates in the original parameter space.  The red arrow is the natural gradient direction (given by the vector $\tilde{\nabla} h$ in parameter space) and the black arrow is the path generated by taking $\theta - \alpha \tilde{\nabla} h$ for $\alpha \in [0,1]$. \label{fig:geo_diag} }
\end{center}
\end{figure}

Fortunately, we can exploit the (qualified) equivalence between the Fisher and the GGN in order to produce natural gradient-like updates which will often be appropriate to take with $\alpha_k = 1$.  In particular, we know from the discussion in Section \ref{sec:GGN} that the GGN matrix $G$ can serve as a reasonable proxy for the Hessian $H$ of $h$, and may even be superior in certain contexts. Meanwhile, the update $\delta$ produced by minimizing the GGN-based local quadratic model $M_k(\delta) = \frac{1}{2} \delta^\top G(\theta_k) \delta + \nabla h(\theta_k)^\top \delta + h(\theta_k)$ is given by $-G(\theta_k)^{-1} \nabla h(\theta_k)$, which will be equal to the negative natural gradient when $F = G$.  Thus, the (negative) natural gradient, with scaling factor $\alpha = 1$, can be seen as the optimal update according to a particular local 2nd-order approximation of $h$. And just as in the case of other 2nd-order methods, the break-down in the accuracy of this quadratic approximation over long distances, combined with the potential for the natural gradient to be very large (e.g. when $F$ contains some very small eigenvalues), can often lead to very large and very poor update proposals. Simply re-scaling the update by reducing $\alpha$ may be too crude a mechanism to deal with this subtle problem, as it will affect all eigen-directions (of $F$) equally, including those in which the natural gradient is already sensible, or even overly conservative.  

Instead, the connection between natural gradient descent and 2nd-order methods motivates the use of ``update damping" techniques that have been developed for the latter, which work by constraining or penalizing the solution for $\delta$ in various ways during the optimization of $M_k(\delta)$.  Examples include Tikhonov regularization/damping and the closely related trust-region method \citep[e.g.][]{tikhonov1943stability, more1983computing, conn2000trust, nocedal_book}, and other ones such as the ``structural damping" approach of \citet{hf-rnn}, or the approach present in Krylov Subspace Descent \citep{KSD}. See \citet{HF_chapter} for an in-depth discussion of these and other damping techniques in the context of neural network optimization.

This idea is supported by practical experience in neural network optimization. For example, the Hessian-free optimization approach of \citet{HF} generates its updates using a Tikhonov damping scheme applied to the exact GGN matrix (which was equivalent to the Fisher in that work). These updates, which can be applied with a step-size of 1, make a lot more progress optimizing the objective than updates computed without any damping (which must instead rely on a carefully chosen step-size to even be feasible).

It is worth pointing out that other interpretations of natural gradient descent can also motivate the use of damping/regularization terms. In particular, \citet{ollivier2018online} has shown that online natural gradient descent, with a particular flavor of Tikhonov regularization, closely resembles a certain type of extended Kalman filter-based training algorithm for neural networks \citep{singhal1989training, ruck1992comparative}, where $\theta$ is treated as an evolving hidden state that is estimated by the filter (using training targets as noisy observations and inputs as control signals).

\section{The Empirical Fisher}
\label{sec:emp_fish}


An approximation of the Fisher known as the ``empirical Fisher" \citep{schraudolph}, which we denote by $\bar{F}$, is commonly used in practical natural gradient methods. It is obtained by taking the inner expectation of eqn.~\ref{eqn:F_expression_1} over the target distribution $Q_{x,y}$ (or its empirical surrogate $\hat{Q}_{x,y}$) instead of the model's distribution $P_{x,y}$. 



In the case where one uses $\hat{Q}_{x,y}$, this yields the following simple form:
\begin{align*}
\bar{F} &= \ex_{\hat{Q}_{x,y}}\left[ \nabla \log p(x,y|\theta) \nabla \log p(x,y|\theta)^\top \right] \\
&= \ex_{\hat{Q}_x}\left[\ex_{\hat{Q}_{y|x}}\left[\nabla \log p(y|x,\theta) \nabla \log p(y|x,\theta)^\top\right]\right] \\
&= \frac{1}{|S|} \sum_{(x,y) \in S} \nabla \log p(y|x,\theta) \nabla \log p(y|x,\theta)^\top \ep{.}
\end{align*}

This matrix is often incorrectly referred to as the Fisher, or even the Gauss-Newton, even though it is not equivalent to either of these matrices in general.


\subsection{Comparisons to the Standard Fisher}
\label{sec:empfish_comparison}

Like the Fisher $F$, the empirical Fisher $\bar{F}$ is PSD.  But unlike $F$, it is essentially free to compute, provided that one is already computing the gradient of $h$.  And it can also be applied to objective functions which might not involve a probabilistic model in any obvious way.  

Compared to $F$, which is of rank $\leq |S|\rank(F_R)$, $\bar{F}$ has a rank of $\leq |S|$, which can make it easier to work with in practice.  For example, the problem of computing the diagonal (or various blocks) is easier for the empirical Fisher than it is for higher rank matrices like the standard Fisher \citep{curvprop}.  This has motivated its use in optimization methods such as TONGA \citep{TONGA}, and as the diagonal preconditioner of choice in the Hessian-free optimization method \citep{HF}.  Interestingly however, there are stochastic estimation methods \citep{chapelle, curvprop} which can be used to efficiently estimate the diagonal (or various blocks) of the standard Fisher $F$, and these work quite well in practice. (These include the obvious method of sampling $y$'s from the model's conditional distribution and computing gradients from them, but also includes methods based on matrix factorization and random signs. See \citet{curvprop} for comparative analysis of the variance of these methods.)

Despite the various practical advantages of using $\bar{F}$, there are good reasons to use true Fisher $F$ instead of $\bar{F}$ whenever possible.  In addition to Amari's extensive theory developed for the exact natural gradient (which uses $F$), perhaps the best reason for using $F$ over $\bar{F}$ is that $F$ turns out to be a reasonable approximation/substitute to the Hessian $H$ of $h$ in certain important special cases, which is a property that $\bar{F}$ lacks in general.  

For example, as discussed in Section \ref{sec:nat_grad}, when the loss is given by $-\log p(y|x)$ (as in Section \ref{sec:stat_learn}), $F$ can be seen as an approximation of $H$, because both matrices have the interpretation of being the expected Hessian of the loss under some distribution.  Due to the similarity of the expression for $F$ in eqn.~\ref{eqn:F_expression_1} and the one above for $\bar{F}$, it might be tempting to think that $\bar{F}$ is given by the expected Hessian of the loss under $\hat{Q}_{x,y}$ (which is actually the formula for $H$) in the same way that $F$ is given by eqn.~\ref{eqn:F_expression_2}. But this is not the case in general.

And as we saw in Section \ref{sec:connections_to_GGN}, given certain assumptions about how the GGN is computed, and some additional assumptions about the form of the loss function $L$, $F$ turns out to be equivalent to the GGN.  This is very useful since the GGN can be used to define a local quadratic approximation of $h$, whereas $F$ normally doesn't have such an interpretation.  Moreover, \citet{schraudolph} and later \citet{HF} compared $\bar{F}$ to the GGN and observed that the latter performed much better as a curvature matrix within various neural network optimization methods.  

As concrete evidence for why the empirical Fisher is, at best, a questionable choice for the curvature matrix, we will consider the following example.  Set $n = 1$, $f(x, \theta) = \theta$, $R_{y|z} = \Normal(z, 1)$, and $S = \{(0,0)\}$, so that $h(\theta)$ is a simple convex quadratic function of $\theta$, given by $h(\theta) = \frac{1}{2} \theta^2$.  In this example we have that $\nabla h = \theta$, $\bar{F} = \theta^2$, while $F = 1$.  If we use $\bar{F}^\xi$ as our curvature matrix for some exponent $\frac1{2} \leq \xi \leq 1$, then it is easy to see that an iteration of the form
\begin{align*}
\theta_{k+1} &= \theta_k - \alpha_k (\bar{F}(\theta_k)^\xi)^{-1} \nabla h(\theta_k) = \theta_k - \alpha_k (\theta_k^2)^{-\xi} \theta_k = ( 1 - \alpha_k |\theta_k|^{-2\xi} )\theta_k
\end{align*}
will fail to converge to the minimizer (at $\theta = 0$) unless $\xi < 1$ and the step-size $\alpha_k$ goes to $0$ sufficiently fast.  And even when it does converge, it will only be at a rate comparable to the speed at which $\alpha_k$ goes to $0$, which in typical situations will be either $\bigO(1/k)$ or $\bigO(1/\sqrt{k})$.  Meanwhile, a similar iteration of the form
\begin{align*}
\theta_{k+1} &= \theta_k - \alpha_k F^{-1} \nabla h(\theta_k) = \theta_k - \alpha_k \theta_k = (1-\alpha_k) \theta_k \ep{,}
\end{align*}
which uses the exact Fisher $F$ as the curvature matrix, will experience very fast linear convergence\footnote{Here we mean ``linear" in the classical sense that $|\theta_k - 0| \leq |\theta_0 - 0| |1-\alpha|^k$.} with rate $|1 - \alpha|$, for any fixed step-size $\alpha_k = \alpha$ satisfying $0 < \alpha < 2$.

It is important to note that this example uses a noise-free version of the gradient, and that this kind of linear convergence is (provably) impossible in most realistic stochastic/online settings.  Nevertheless, we would argue that a highly desirable property of any stochastic optimization method should be that it can, in principle, revert to an optimal (or nearly optimal) behavior in the deterministic setting.  This might matter a lot in practice, since the gradient may end up being sufficiently well estimated in earlier stages of optimization from only a small amount of data (which is a common occurrence in our experience), or in later stages provided that larger mini-batches or other variance-reducing procedures are employed \citep[e.g.][]{SAG, varreduction}.  More concretely, the pre-asymptotic convergence rate of stochastic 2nd-order optimizers can still depend strongly on the choice of the curvature matrix, as we will show in Section \ref{sec:asymptotic_speed}.

\subsection{A Discussion of Recent Diagonal Methods Based on the Empirical Fisher}

\label{sec:diag_empfish_methods}

Recently, a spate of stochastic optimization methods have been proposed that are all based on diagonal approximations of the empirical Fisher $\bar{F}$.  These include the diagonal version of AdaGrad \citep{ADAGRAD}, RMSProp \citep{RMSprop}, Adam \citep{adam}, etc.  Such methods use iterations of the following form (possibly with some slight modifications):
\begin{align}
\label{eqn:diag_iter}
\theta_{k+1} = \theta_k - \alpha_k (\B_k + \lambda I)^{-\xi} g_k(\theta_k) \ep{,}
\end{align}
where the curvature matrix $\B_k$ is taken to be a diagonal matrix $\diag(u_k)$ with $u_k$ adapted to maintain some kind of estimate of the diagonal of $\bar{F}$ (possibly using information from previous iterates/mini-batches), $g_k(\theta_k)$ is an estimate of $\nabla h(\theta_k)$ produced from the current mini-batch, ${\alpha_k}_k$ is a schedule of step-sizes, and $0 < \lambda$ and $0 < \xi \leq 1$ are hyperparameters (discussed later in this section).

There are also slightly more sophisticated methods \citep{pesky,adadelta} which use preconditioners that combine the diagonal of $\bar{F}$ with other quantities (such as an approximation of the diagonal of the Gauss-Newton/Fisher in the case of \citet{pesky}) in order to correct for how the empirical Fisher doesn't have the right ``scale" (which is ultimately the reason why it does poorly in the example given at the end of Section \ref{sec:empfish_comparison}).

A diagonal preconditioner \citep{nash1985preconditioning} of the form used in eqn.~\ref{eqn:diag_iter} was also used by \citep{HF} to accelerate the conjugate gradient (CG) sub-optimizations performed within a truncated-Newton method (using the GGN matrix). In the context of CG, the improper scale of $\bar{F}$ is not as serious an issue due to the fact that CG is invariant to the overall scale of its preconditioner (since it computes an optimal ``step-size" at each step which automatically adjusts for the scale).  However, it still makes more sense to use the diagonal of the true Fisher $F$ as a preconditioner, and thanks to the method proposed by \citet{chapelle}, this can be estimated efficiently and accurately.

The idea of using the diagonal of $F$, $\bar{F}$, or the Gauss-Newton as a preconditioner for stochastic gradient descent (SGD) and was likely first applied to neural networks with the work of Lecun and collaborators \citep{diag_lecun,lecun_tricks}, who proposed an iteration of the form in eqn.~\ref{eqn:diag_iter} with $\xi = 1$ where $u_k$ approximates the diagonal of the Hessian or the Gauss-Newton matrix (which as shown in Section \ref{sec:connections_to_GGN}, is actually equivalent to $F$ for the common squared-error loss). Following this work, various neural network optimization methods have been developed over the last couple of decades that use diagonal, block-diagonal, low-rank, or Krylov-subspace based approximations of $F$ or $\bar{F}$ as a curvature matrix/preconditioner. In addition to methods based on diagonal approximations already mentioned, some methods based on non-diagonal approximations include the method of \citet{ng_adaptive}, TONGA \citep{TONGA}, Natural Newton \citep{naturalnewton}, HF \citep{HF}, KSD \citep{KSD} and many more.

The idea of computing an estimate of the (empirical) Fisher using a history of previous iterates/mini-batches also appeared in various early works.  The particular way of doing this proposed \citet{ADAGRAD}, which is to use an equally weighted average of all past gradients, was motivated from a regret-based asymptotic convergence analysis and tends not to work well in practice \citep{RMSprop}.
The traditional and more intuitive approach of using an exponentially decayed running average \citep[e.g.][]{lecun_tricks,ng_adaptive} works better, at least pre-asymptotically, as it is able to naturally ``forget" very old contributions to the estimate (which are based on stale parameter values).

It is important to observe that the way $\bar{F}$ is estimated can affect the convergence characteristics of an iteration like eqn.~\ref{eqn:diag_iter} in subtle but important ways.  For example, if $\bar{F}$ is estimated using gradients from previous iterations, and especially if it is the average of \emph{all} past gradients (as in AdaGrad), it may shrink sufficiently slowly that the convergence issues seen in the example at the end of Section \ref{sec:empfish_comparison} are avoided.  Moreover, for reasons related to this phenomenon, it seems likely that the proofs of regret bounds in \citet{ADAGRAD} and the related work of \citet{Hazan_newton} could \emph{not} be modified to work if the exact $\bar{F}$, computed only at the current $\theta$, were used.  Developing a better understanding of this issue, and the relationship between methods developed in the online learning literature (such as AdaGrad), and classical stochastic 2nd-order methods based on notions of curvature, remains an interesting direction for future research.


\subsection{The Constants $\lambda$ and $\xi$}

The constants $\lambda$ and $\xi$ present in eqn.~\ref{eqn:diag_iter} are often thought of as fudge factors designed to correct for the ``poor conditioning" \citep{diag_lecun} of the curvature matrix, or to guarantee boundedness of the updates and prevent the optimizer from ``blowing up" \citep{lecun_tricks}.  However, these explanations are oversimplifications that reference the symptoms instead of the cause.   A more compelling and functional explanation, at least in the case of $\lambda$, comes from viewing the update in eqn.~\ref{eqn:diag_iter} as being the minimizer of a local quadratic approximation $M_k(\delta) = \frac{1}{2} \delta^\top \B_k \delta + \nabla h(\theta_k)^\top \delta + h(\theta_k)$ to $h(\theta_k + \delta)$, as discussed in Section \ref{sec:role_of_damping}.  In this view, $\lambda$ plays the role of a Tikhonov damping parameter \citep{tikhonov1943stability, conn2000trust, nocedal_book, HF_chapter} which is added to $\B_k$ in order to ensure that the proposed update stays within a certain region around zero in which $M_k(\delta)$ remains a reasonable approximation to $h(\theta_k + \delta)$.  Note that this explanation implies that no single fixed value of $\lambda$ will be appropriate throughout the entire course of optimization, since the local properties of the objective will change, and so an adaptive adjustment scheme, such as the one present in HF \citep{HF} (which is based on the Levenberg-Marquardt method), should be used.

The use of the exponent $\xi = 3/4$ first appeared in HF as part of its diagonal preconditioner for CG, and was justified as a way of making the curvature estimate ``more conservative" by making it closer to a multiple of the identity, to compensate for the diagonal approximation being made (among other things).  Around the same time, \citet{ADAGRAD} proposed to use $\xi = 1/2$ within an update of the form of eqn.~\ref{eqn:diag_iter}, which was required in order to prove certain regret bounds for non-strongly-convex objectives.


To shed some light on the question of $\xi$, we can consider the work of \citet{Hazan_newton}, who like \citet{ADAGRAD}, developed and analyzed an online approximate Newton method within the framework of online convex optimization. Like the non-diagonal version of AdaGrad, the method proposed by \citet{Hazan_newton} uses an estimate of the empirical Fisher $\bar{F}$ computed as the average of gradients from all previous iterations.  While impractical for high dimensional problems like any non-diagonal method is (or at least, one that doesn't make some other strong approximation of the curvature matrix), this method achieves a better bound on the regret to what \citet{ADAGRAD} was able to show for AdaGrad ($\bigO(\log(k))$ instead of $\bigO(\sqrt{k})$, where $k$ is the total number of iterations), which was possible in part due to the use of stronger hypotheses about the properties of $h$ (e.g. that for each $x$ and $y$, $L(y,f(x,\theta))$ is a strongly convex function of $\theta$).  Notably, this method uses $\xi = 1$, just as in standard natural gradient descent, which provides support for such a choice, especially since the $h$ used in neural networks will typically satisfy these stronger assumptions in a local neighborhood of the optimum, at least when standard $\ell_2$ regularization is used.  

However, it is important to note that \citet{Hazan_newton} also proves a $\bigO(\log(k))$ bound on the regret for a basic version of SGD, and that what actually differentiates the various methods they analyze is the constant hidden in the big-O notation, which is much larger for the version of SGD they consider than for their approximate Newton method.  In particular, the former depends on a quantity which grows with the condition number of the Hessian $H$ at $\theta^*$ while the latter does not, in a way that echos the various analyses performed on stochastic gradient descent and stochastic approximations of Newton's method in the more classical ``local-convergence" setting \citep[e.g.][]{Murata, bottou_very_large}.

\section{A Critical Analysis of Parameterization Invariance}
\label{sec:param_invar}

One of the main selling points of the natural gradient method is its invariance to reparameterizations of the model.  In particular, the smooth path through the space of distributions generated by the idealized natural gradient method with infinitesimally small steps will be invariant to any smooth invertible reparameterization of the $f$. 

More precisely, it can be said that this path will be the same whether we use the default parameterization (given by $P_{y|x}(\theta)$), or parameterize our model as $P_{y|x}(\zeta(\gamma))$, where $\zeta : \Real^n \rightarrow \Real^n$ is a smooth invertible ``reparameterization function" which relates $\theta$ to $\gamma$ as $\theta = \zeta(\gamma)$.


In this section we will examine this ``smooth path parameterization invariance" property more closely in order to answer the following questions:
\begin{itemize} 
\item How can we characterize it using only basic properties of the curvature matrix?
\item Is there an elementary proof that can be applied in a variety of settings?
\item What other kinds of curvature matrices give rise to it, and is the Hessian included among these? 
\item Will this invariance property imply that \emph{practical} optimization algorithms based on the natural gradient (i.e. those that use large steps) will behave in a way that is invariant to the parameterization?
\end{itemize} 


Let $\zeta$ be as above, and let $d_\theta$ and $d_\gamma$ be updates given in $\theta$-space and $\gamma$-space (resp.).  Additively updating $\gamma$ by $d_\gamma$ and translating it back to $\theta$-space via $\zeta$ gives $\zeta(\gamma + d_\gamma)$.  Measured by some non-specific norm $\| \cdot \|$, this differs from $\theta + d_\theta$ by:
\begin{align*}
\| \zeta(\gamma + d_\gamma) - (\theta + d_\theta) \| \ep{.}
\end{align*}

This can be rewritten and bounded as
\begin{align}
\label{eqn:invar_bound}
\| (\zeta(\gamma + d_\gamma) - (\zeta(\gamma) + J_\zeta d_\gamma)) + (J_\zeta d_\gamma -  d_\theta) \| \leq \| \zeta(\gamma + d_\gamma) - (\zeta(\gamma) + J_\zeta d_\gamma) \| + \| J_\zeta d_\gamma -  d_\theta \| \ep{,}
\end{align}
where $J_\zeta$ is the Jacobian of $\zeta$, and we have used $\theta = \zeta(\gamma)$.

The first term on the RHS of eqn.~\ref{eqn:invar_bound} measures the extent to which $\zeta( \gamma + d_\gamma)$ fails to be predicted by the first-order Taylor series approximation of $\zeta$ centered at $\gamma$ (i.e. the local affine approximation of $\zeta$ at $\gamma$).  This quantity will depend on the size of $d_\gamma$, and the amount of curvature in $\gamma$. In the case where $\zeta$ is affine, it will be exactly $0$. We can further bound it by applying Taylor's theorem for each component of $\zeta$, which gives
\begin{align}
\label{eqn:smoothbnd_taylor}
\| \zeta(\gamma + d_\gamma) - (\zeta(\gamma) + J_\zeta d_\gamma) \| \leq \frac{1}{2} \left \|
\begin{bmatrix}
d_\gamma^\top H_{[\zeta]_1}( \gamma + c_1 d_\gamma ) d_\gamma \\
d_\gamma^\top H_{[\zeta]_2}( \gamma + c_2 d_\gamma ) d_\gamma \\
\vdots \\
d_\gamma^\top H_{[\zeta]_n}( \gamma + c_n d_\gamma ) d_\gamma
\end{bmatrix}
\right \|
\end{align}
for some $c_i \in (0,1)$.  If we assume that there is some $C > 0$ so that for all $i$ and $\gamma$, $\|H_{[\zeta]_i}( \gamma )\|_2 \leq C$, then using the fact that $|d_\gamma^\top H_{[\zeta]_i}( \gamma + c_n d_\gamma ) d_\gamma| \leq \frac{1}{2} \| H_{[\zeta]_i}( \gamma + c_i d_\gamma ) \|_2 \|d_\gamma\|^2$, we can further upper bound this by $\frac{1}{2} C \sqrt{n} \|d_\gamma\|^2$.

The second term on the RHS of eqn.~\ref{eqn:invar_bound} will be zero when
\begin{align}
\label{eqn:invar_absol}
J_\zeta d_\gamma = d_\theta \ep{,}
\end{align}
which (as we will see) is a condition that is satisfied in certain natural situations. A slightly weakened version of this condition is that $J_\zeta d_\gamma \propto d_\theta$.  Because we have
\begin{align*}
\lim_{\epsilon \rightarrow 0} \frac{\zeta(\gamma + \epsilon d_\gamma) - \zeta(\gamma)}{\epsilon} = J_\zeta d_\gamma
\end{align*}
this condition can thus be interpreted as saying that $d_\gamma$, when translated appropriately via $\zeta$, points in the same direction away from $\theta$ that $d_\theta$ does.  In the smooth path case, where the optimizer only moves an infinitesimally small distance in the direction of $d_\gamma$ (or $d_\theta$) at each iteration before recomputing it at the new $\gamma$ (or $\theta$), this condition is sufficient to establish that the path in $\gamma$ space, when mapped back to $\theta$ space via the $\zeta$ function, will be the same as the path which would have been taken if the optimizer had worked directly in $\theta$ space.  

However, for a practical update scheme where we move the entire distance of $d_\gamma$ or $d_\theta$ before recomputing the update vector, such as the one in eqn.~\ref{eqn:ng_simple_opt}, this kind of invariance will not strictly hold even when $J_\zeta d_\gamma = d_\theta$.  But given that $J_\zeta d_\gamma = d_\theta$, the per-iteration error will be bounded by the first term on the RHS of eqn.~\ref{eqn:invar_bound}, and will thus be small provided that $d_\gamma$ is sufficiently small and $\zeta$ is sufficiently smooth (as shown above).

Now, suppose we generate the updates $d_\theta$ and $d_\gamma$ from curvature matrices $\B_\theta$ and $\B_\gamma$ according to $d_\theta = -\alpha\B_\theta^{-1} \nabla h$ and $d_\gamma = -\alpha\B_\gamma^{-1} \nabla_\gamma h$, where $\nabla_\gamma h$ is the gradient of $h(\zeta(\gamma))$ w.r.t.~$\gamma$.  Then noting that $\nabla_\gamma h = J_\zeta^\top \nabla h$, the condition in eqn.~\ref{eqn:invar_absol} becomes equivalent to
\begin{align*}
J_\zeta \B_\gamma^{-1} J_\zeta^\top \nabla h = \B_\theta^{-1} \nabla h \ep{.}
\end{align*}
For this to hold, a \emph{sufficient} condition is that $\B_\theta^{-1} = J_\zeta \B_\gamma^{-1} J_\zeta^\top$.  Since $J_\zeta$ is invertible (because $\zeta$ is) an equivalent condition is
\begin{align}
\label{eqn:sufficient_cond_invar}
J_\zeta^\top \B_\theta J_\zeta = \B_\gamma \ep{.}
\end{align}

The following theorem summarizes our results so far.

\begin{theorem}
\label{thm:invar}
Suppose that $\theta = \zeta(\gamma)$ and $B_\theta$ and $B_\gamma$ are invertible matrices satisfying
\begin{align*}
J_\zeta^\top B_\theta J_\zeta = B_\gamma
\end{align*}
Then we have that additively updating $\theta$ by $d_\theta = -\alpha B_\theta^{-1} \nabla h$ is \textbf{approximately} equivalent to additively updating $\gamma$ by $d_\gamma = -\alpha B_\gamma^{-1} \nabla_{\gamma} h$, in the sense that $\zeta(\gamma + d_\gamma) \approx \theta + d_\theta$, with error bounded according to
\begin{align*}
\|\zeta(\gamma + d_\gamma) - (\theta + d_\theta)\| \leq \| \zeta(\gamma + d_\gamma) - (\zeta(\gamma) + J_\zeta d_\gamma) \| \ep{.}
\end{align*}
Moreover, this error can be further bounded as in eqn.~\ref{eqn:smoothbnd_taylor}, and will be exactly $0$ if $\zeta$ is affine. And if there is a $C \geq 0$ such that $\|H_{[\zeta]_i}( \gamma )\|_2 \leq C$ for all $i$ and $\gamma$, then we can even further bound this as $\frac{1}{2} C \sqrt{n} \|d_\gamma\|^2$.
\end{theorem}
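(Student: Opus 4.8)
The plan is to assemble the theorem directly from the ingredients already developed in the discussion preceding the statement, since the bulk of the analytic work—the triangle-inequality decomposition in eqn.~\ref{eqn:invar_bound} and the Taylor bound in eqn.~\ref{eqn:smoothbnd_taylor}—is in place. The only genuinely new step is verifying that the hypothesis $J_\zeta^\top B_\theta J_\zeta = B_\gamma$ forces the second term on the right-hand side of eqn.~\ref{eqn:invar_bound} to vanish, reducing the overall error to the first term alone.

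First I would recall the starting decomposition from eqn.~\ref{eqn:invar_bound},
\[
\|\zeta(\gamma + d_\gamma) - (\theta + d_\theta)\| \leq \|\zeta(\gamma + d_\gamma) - (\zeta(\gamma) + J_\zeta d_\gamma)\| + \|J_\zeta d_\gamma - d_\theta\|,
\]
which uses only the triangle inequality together with $\theta = \zeta(\gamma)$. It then suffices to show that the second summand is identically zero, i.e.\ that the condition in eqn.~\ref{eqn:invar_absol} holds.

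Next I would establish $J_\zeta d_\gamma = d_\theta$ by direct substitution. Using the chain rule identity $\nabla_\gamma h = J_\zeta^\top \nabla h$ noted earlier, we have $d_\gamma = -\alpha B_\gamma^{-1} J_\zeta^\top \nabla h$, so that $J_\zeta d_\gamma = -\alpha J_\zeta B_\gamma^{-1} J_\zeta^\top \nabla h$. Inverting the hypothesis—which is legitimate because $\zeta$ being invertible makes $J_\zeta$ invertible, and $B_\theta$ is invertible by assumption—gives $B_\gamma^{-1} = J_\zeta^{-1} B_\theta^{-1} J_\zeta^{-\top}$, whence $J_\zeta B_\gamma^{-1} J_\zeta^\top = B_\theta^{-1}$ and therefore $J_\zeta d_\gamma = -\alpha B_\theta^{-1} \nabla h = d_\theta$. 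This is exactly the sufficient condition already isolated in eqn.~\ref{eqn:sufficient_cond_invar}, so the computation is short and self-contained.

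Finally I would combine the two observations: with the second term gone, the error is bounded by $\|\zeta(\gamma + d_\gamma) - (\zeta(\gamma) + J_\zeta d_\gamma)\|$ as claimed, and the refined bound of eqn.~\ref{eqn:smoothbnd_taylor} follows by quoting Taylor's theorem. For the affine case, the first-order Taylor expansion of $\zeta$ is exact, so $\zeta(\gamma + d_\gamma) = \zeta(\gamma) + J_\zeta d_\gamma$ and the remaining term also vanishes, giving exact equivalence. I do not anticipate a serious obstacle; the one point requiring care is justifying invertibility of $J_\zeta$ everywhere (a consequence of $\zeta$ being a smooth invertible map with smooth inverse), which is precisely what licenses the matrix-inversion step.
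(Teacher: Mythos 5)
Your proposal is correct and follows essentially the same route as the paper: the paper's theorem is itself a summary of the preceding derivation, namely the triangle-inequality decomposition of eqn.~\ref{eqn:invar_bound}, the verification that the hypothesis $J_\zeta^\top B_\theta J_\zeta = B_\gamma$ (via invertibility of $J_\zeta$ and the identity $\nabla_\gamma h = J_\zeta^\top \nabla h$) makes the second term vanish, and the Taylor bound of eqn.~\ref{eqn:smoothbnd_taylor} for the first term. Your substitution argument reproduces exactly the paper's sufficient-condition computation leading to eqn.~\ref{eqn:sufficient_cond_invar}, so there is nothing to add.
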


Because the error bound is zero when $\zeta$ is affine, this result will trivially extend to entire sequences of arbitrary number of steps for such $\zeta$'s. And in the more general case, since the error scales as $\alpha^2$, we can obtain equivalence of sequences of $T/\alpha$ steps in the limit as $\alpha \to 0$.  Because the length of the updates scale as $\alpha$, and we have $T/\alpha$ of them, the sequences converges to smooth paths of fixed length in the limit. The following corollary establishes this result, under a few additional (mild) hypotheses. Its proof is in Appendix \ref{app:cor_invar_proof}.
\begin{corollary}
\label{cor:invar}
Suppose that $B_\theta$ and $B_\gamma$ are invertible matrices satisfying
\begin{align*}
J_\zeta^\top B_\theta J_\zeta = B_\gamma
\end{align*}
for all values of $\theta$.  Then the path followed by an iterative optimizer working in $\theta$-space and using additive updates of the form $d_\theta = -\alpha B_\theta^{-1} \nabla h$ is the same as the path followed by an iterative optimizer working in $\gamma$-space and using additive updates of the form $d_\gamma = -\alpha B_\gamma^{-1} \nabla_{\gamma} h$, provided that the optimizers use equivalent starting points (i.e. $\theta_0 = \zeta(\gamma_0)$), and that either
\begin{itemize}
    \item $\zeta$ is affine,
    \item or $d_\theta / \alpha$ is uniformly continuous as a function of $\theta$, $d_\gamma / \alpha$ is uniformly bounded (in norm), there is a $C$ as in the statement of Theorem \ref{thm:invar}, and $\alpha \to 0$.
\end{itemize}
Note that in the second case we allow the number of steps in the sequences to grow proportionally to $1/\alpha$ so that the continuous paths they converge to have non-zero length as $\alpha \to 0$.
\end{corollary}


So from these results we see that natural gradient-based methods that take finite steps will \emph{not} be invariant to smooth invertible reparameterizations $\zeta$, although they will be \emph{approximately} invariant, and in a way that depends on the degree of curvature of $\zeta$ and the size $\alpha$ of the step-size.

\subsection{When is the Condition $J_\zeta^\top B_\theta J_\zeta = B_\gamma$ Satisfied?}

Suppose the curvature matrix $B_\theta$ has the form
\begin{align*}
\B_\theta = \ex_{D_{x,y}}[ J_f^\top A J_f ] \ep{,}
\end{align*}
where $D_{x,y}$ is some arbitrary distribution over $x$ and $y$ (such as the training distribution), and $A \in \Real^{m \times m}$ is an invertible matrix-valued function of $x$, $y$ and $\theta$, whose value is parameterization invariant (i.e. its value depends only on the value of $\theta$ that a given $\gamma$ maps to under the $\gamma$ parameterization).  Note that this type of curvature matrix includes as special cases the Generalized Gauss-Newton (whether or not it's equivalent to the Fisher), the Fisher, and the empirical Fisher (discussed in Section \ref{sec:emp_fish}). 

To obtain the analogous curvature matrix $\B_\gamma$ for the $\gamma$ parameterization we replace $f$ by $f \circ \zeta$ which gives
\begin{align*}
\B_\gamma &= \ex_{D_{x,y}}[ J_{f\circ \zeta}^\top \, A \, J_{f\circ \zeta} ] \ep{.}
\end{align*}
Then noting that $J_{f\circ \zeta} = J_f J_\zeta$, where $J_\zeta$ is the Jacobian of $\zeta$, we have
\begin{align*}
\B_\gamma &= \ex_{D_{x,y}}[ (J_f J_\zeta)^\top A (J_f J_\zeta) ] = J_\zeta^\top \ex_{D_{x,y}}[ J_f^\top A J_f ] J_\zeta = J_\zeta^\top \B_\theta J_\zeta \ep{.}
\end{align*}
(Here we have used the fact that the reparameterization function $\zeta$ is independent of $x$ and $y$.) Thus, this type of curvature matrix satisfies the sufficient condition in eqn.~\ref{eqn:sufficient_cond_invar}.   

The Hessian on the other hand does not satisfy this sufficient condition, except in certain special cases.  To see this, note that taking the curvature matrix to be the Hessian gives
\begin{align*}
\B_\gamma = J_\zeta^\top H J_\zeta + \frac{1}{|S|} \sum_{(x,y) \in S} \sum_{j=1}^n [\nabla h]_j H_{[\zeta]_j} \ep{,}
\end{align*}
where $H = \B_\theta$ is the Hessian of $h$ w.r.t.~$\theta$.  Thus, when the curvature matrix is the Hessian, the sufficient condition $J_\zeta^\top \B_\theta J_\zeta = J_\zeta^\top H J_\zeta \propto \B_\gamma$ holds if and only if
\begin{align*}
\frac{1}{|S|} \sum_{(x,y) \in S}  \sum_{j=1}^n [\nabla h]_j H_{[\zeta]_j} = J_\zeta^\top H J_\zeta \ep{,}
\end{align*}
where $\nabla L$ is the gradient of $L(y,z)$ w.r.t.~$z$ (evaluated at $z = f(x,\theta)$), and we allow a proportionality constant of $0$.  Rearranging this gives
\begin{align*}
\frac{1}{|S|} \sum_{(x,y) \in S} \sum_{j=1}^n [\nabla h]_j J_\zeta^{-\top} H_{[\zeta]_j} J_\zeta^{-1} = H \ep{.}
\end{align*}

This relation is unlikely to be satisfied unless the left hand side is equal to $0$.  One situation where this will occur is when $H_{[\zeta]_j} = 0$ for each $j$, which holds when $[\zeta]_j$ is an affine function of $\gamma$.  Another situation is where we have $\nabla h = 0$ for each $(x,y) \in S$. 


\section{A New Interpretation of the Natural Gradient}



As discussed in Section \ref{sec:role_of_damping}, the negative natural gradient is given by the minimizer of a local quadratic approximation $M(\delta)$ to $h$ whose curvature matrix is the Fisher $F$.  And if we have that the gradient $\nabla h$ and $F$ are computed on the same set $S$ of data points, $M(\delta$) can be written as
\begin{align*}
M(\delta) &= \frac{1}{2} \delta^\top F \delta + \nabla h^\top \delta + h(\theta) \\
&= \frac{1}{|S|} \sum_{(x,y) \in S} \left[ \frac{1}{2} \delta^\top J_f^\top F_R J_f \delta  +  (J_f^\top \nabla_z \log r(y|z))^\top \delta \right] +  h(\theta) \\
&= \frac{1}{|S|} \sum_{(x,y) \in S} \left[ \frac{1}{2} (J_f \delta)^\top F_R (J_f \delta) +  \nabla_z \log r(y|z)^\top {F_R}^{-1} F_R (J_f \delta) \right. \\
&\hspace{52.5mm} + \frac{1}{2} (\nabla_z \log r(y|z))^\top F_R^{-1} F_R F_R^{-1} \nabla_z \log r(y|z) \\
&\hspace{52.5mm} \left. - \frac{1}{2} (\nabla_z \log r(y|z))^\top F_R^{-1} F_R F_R^{-1} \nabla_z \log r(y|z) \right] + h(\theta)\\
&= \frac{1}{|S|} \sum_{(x,y) \in S} \frac{1}{2} (J_f \delta + F_R^{-1} \nabla_z \log r(y|z))^\top F_R (J_f \delta + F_R^{-1} \nabla_z \log r(y|z)) + c \\
&= \frac{1}{|S|} \sum_{(x,y) \in S} \frac{1}{2} \| J_f \delta + F_R^{-1} \nabla_z \log r(y|z) \|_{F_R}^2 + c \ep{,}
\end{align*}
where 
$F_R$ is the Fisher of the predictive distribution $R_{y|z}$ (as originally defined in Section \ref{sec:connections_to_GGN}), $\| v \|_{F_R} = \sqrt{v^\top F_R v}$, and  $c = h(\theta) - \frac{1}{2}(\sum_{(x,y) \in S} \nabla_z \log r(y|z)^\top F_R^{-1} \nabla_z \log r(y|z))/|S|$ is a constant (independent of $\delta$).

Note that for a given $(x,y) \in S$, $F_R^{-1} \nabla_z \log r(y|z)$ can be interpreted as the natural gradient direction in $z$-space for an objective corresponding to the KL divergence between the predictive distribution $R_{y|z}$ and a delta distribution on the given $y$.  In other words, it points in the direction which moves $R_{y|z}$ most quickly towards to said delta distribution, as measured by the KL divergence (see Section \ref{sec:geom}).  And assuming that the GGN interpretation of $F$ holds (as discussed in Section \ref{sec:connections_to_GGN}), we know that it also corresponds to the optimal change in $z$ according to the 2nd-order Taylor series approximation of the loss function $L(y,z)$.

Thus, $M(\delta)$ can be interpreted as the sum of squared distances (as measured using the Fisher metric tensor) between these ``optimal" changes in the $z$'s, and the changes in the $z$'s which result from adding $\delta$ to $\theta$, as predicted using 1st-order Taylor-series approximations to $f$.

In addition to giving us a new interpretation for the natural gradient, this expression also gives us an easy-to-compute bound on the largest possible improvement to $h$ (as predicted by $M(\delta)$).  In particular, since the squared error terms are non-negative, we have
\begin{align*}
M(\delta) - h(\theta) \geq -\frac{1}{2|S|}\sum_{(x,y) \in S} \nabla_z \log r(y|z)^\top F_R^{-1} \nabla_z \log r(y|z) \ep{.}
\end{align*}
Given $F_R = H_L$, this quantity has the simple interpretation of being the optimal improvement in $h$ (as predicted by a 2nd-order order model of $L(y,z)$ for each case in $S$) achieved in the hypothetical scenario where we can change the $z$'s independently for each case.  

The existence of this bound shows that the natural gradient can be meaningfully defined even when $F^{-1}$ may not exist, provided that we compute $F$ and $\nabla h$ on the \emph{same data}, and that each $F_R$ is invertible.  In particular, it can be defined as the minimizer of $M(\delta)$ that has minimum norm (which must exist since $M(\delta)$ is bounded below), which in practice could be computed by using the pseudo-inverse of $F$ in place of $F^{-1}$. Other choices are possible, although care would have to be taken to ensure invariance of the choice with respect to parameterization. 

\section{Asymptotic Convergence Speed}
\label{sec:asymptotic_speed}

\subsection{Amari's Fisher Efficiency Result}
\label{sec:Fisher_efficient}

A property of natural gradient descent which is frequently referenced in the literature is that it is ``Fisher efficient".  In particular, \citet{natural_efficient} showed that an iteration of the form
\begin{align}
\label{eqn:general_iter_old}
\theta_{k+1} = \theta_k - \alpha_k \tilde{g}_k(\theta_k)
\end{align}
when applied to an objective of the form discussed in Section \ref{sec:stat_learn}, with $\alpha_k$ shrinking as $1/k$, and with $\tilde{g}_k(\theta_k) = F^{-1} g_k(\theta_k)$ where $g_k(\theta_k)$ is a stochastic estimate of $\nabla h(\theta_k)$ (from a single training case), will produce an estimator $\theta_k$ which is asymptotically ``Fisher efficient".  This means that $\theta_k$ will tend to an unbiased estimator of the global optimum $\theta^*$ of $h(\theta)$, and that its expected squared error matrix (which tends to its variance) will satisfy
\begin{align}
\label{eqn:fisher_efficiency}
\ex[ (\theta_k - \theta^*)(\theta_k - \theta^*)^\top ] = \frac{1}{k} F(\theta^*)^{-1} + \bigO\left(\frac{1}{k^2}\right) \ep{,}
\end{align}
which is (asymptotically) the smallest\footnote{With the usual definition of $\preceq$ for matrices: $A \preceq C$ iff $C-A$ is PSD.} possible variance matrix that any unbiased estimator computed from $k$ training cases can have, according to the Cram\'er-Rao lower bound\footnote{Note that to apply the Cram\'er-Rao lower bound in this context one must assume that the training data set, on which we compute the objective (and which determines $\theta^*$), is infinitely large, or more precisely that its conditional distribution over $y$ has a density function. For finite training sets one can easily obtain an estimator with exactly zero error for a sufficiently large $k$ (assuming a rich enough model class), and so these requirements are not surprising. If we believe that there is a true underlying distribution of the data that has a density function, and from which the training set is just a finite collection of samples, then Cram\'er-Rao can be thought of as applying to the problem of estimating the true parameters of this distribution from said samples (with $F$ computed using the true distribution), and will accurately bound the rate of convergence to the true parameters until we start to see samples repeat. After that point, convergence to the true parameters will slow down and eventually stop, while convergence on the training objective may start to beat the bound. This of course implies that any convergence on the training set that happens faster than the Cram\'er-Rao bound will necessarily correspond to over-fitting (since this faster convergence cannot happen for the test loss).}.


This result can also be straightforwardly extended to handle the case where $g_k(\theta_k)$ is computed using a mini-batch of size $m$ (which uses $m$ independently sampled cases at each iteration), in which case the above asymptotic variance bound becomes
\begin{align*}
\frac{1}{mk} F(\theta^*)^{-1} + \bigO\left(\frac{1}{k^2}\right) \ep{,}
\end{align*}
which again matches the Cram\'er-Rao lower bound.

\begin{center}
\fbox{\begin{varwidth}{\dimexpr\textwidth-2\fboxsep-2\fboxrule\relax}
Note that all expectations in this section will be taken with respect to all random variables, both present and historical (i.e. from previous iterations). So for example, $\ex[\theta_k]$ is computed by marginalizing over the distribution of $g_i$ for all $i<k$. If ``$\theta$" appears inside an expectation without a subscript then it is not an iterate of the optimizer and is instead just treated as fixed non-stochastic value.
\end{varwidth}}
\end{center}

This result applies to the version of natural gradient descent where $F$ is computed using the training distribution $\hat{Q}_x$ and the model's conditional distribution $P_{y|x}$ (see Section \ref{sec:nat_grad}).  If we instead consider the version where $F$ is computed using the true data distribution $Q_x$, then a similar result will still apply, provided that we sample $x$ from $Q_x$ and $y$ from $Q_{y|x}$ when computing the stochastic gradient $g_k(\theta_k)$, and that $\theta^*$ is defined as the minimum of the idealized objective $\KL( Q_{x,y} \| P_{x,y}(\theta) )$ (see Section \ref{sec:stat_learn}).

While this Fisher efficiency result would seem to suggest that natural gradient descent is the best possible optimization method in the stochastic setting, it unfortunately comes with several important caveats and conditions, which we will discuss. (Moreover, as we will later, it is also possessed by much simpler methods, and so isn't a great justification for the use of natural gradient descent by itself.)

Firstly, the proof assumes that the iteration in eqn.~\ref{eqn:general_iter_old} eventually converges to the global optimum $\theta^*$ (at an unspecified speed).  While this assumption can be justified when the objective $h$ is convex (provided that $\alpha_k$ is chosen appropriately), it won't be true in general for non-convex objectives, such as those encountered in neural network training.  In practice however, a reasonable local optimum $\theta^*$ might be a good enough surrogate for the global optimum, in which case a property analogous to Fisher efficiency may still hold, at least approximately. 


Secondly, it is assumed in Amari's proof that $F$ is computed using the full training distribution $\hat{Q}_x$, which in the case of neural network optimization usually amounts to an entire pass over the training set $S$.  So while the proof allows for the gradient $\nabla h$ to be stochastically estimated from a mini-batch, it doesn't allow this for the Fisher $F$.  This is a serious challenge to the idea that (stochastic) natural gradient descent gives an estimator which makes optimal use of the training data that it sees.   And note that while one can approximate $F$ using minibatches from $S$, which is a solution that often works well in practice (especially when combined with a decayed-averaging scheme\footnote{By this we mean a scheme which maintains an estimate where past contributions decay exponentially at some fixed rate. In other words, we estimate $F$ at each iteration as $(1-\beta) F_{\tmop{new}} + \beta F_{\tmop{old}}$ for some $0<\beta<1$ where $F_{\tmop{new}}$ is the Fisher as computed on the current mini-batch (for the current setting of $\theta$), and $F_{\tmop{old}}$ is the old estimate (which will be based on stale $\theta$ values).}), a Fisher efficiency result like the one proved by \citet{natural_efficient} will likely no longer hold.  Investigating the manner and degree in which it may hold \emph{approximately} when $F$ is estimated in this way is an interesting direction for future research.

A third issue with Amari's result is that it is given in terms of the convergence of $\theta_k$ (as measured by the Euclidean norm) instead of the objective function value, which is arguably much more relevant.  Fortunately, it is straightforward to obtain the former from the latter.  In particular, by applying Taylor's theorem and using $\nabla h(\theta^*) = 0$ we have
\begin{align}
h(\theta_k) - h(\theta^*) &= \frac{1}{2} (\theta_k - \theta^*)^\top {H^*} (\theta_k - \theta^*) + \nabla h(\theta^*)^\top (\theta_k - \theta^*) + \bigO \left( (\theta_k - \theta^*)^3 \right) \notag \\
&= \frac{1}{2} (\theta_k - \theta^*)^\top {H^*} (\theta_k - \theta^*) + \bigO \left ( (\theta_k - \theta^*)^3 \right ) \ep{,}
\label{eqn:Taylor_h}
\end{align}
where ${H^*} = H(\theta^*)$ and $\bigO \left ( (\theta_k - \theta^*)^3 \right )$ is short-hand to mean a function which is cubic in the entries of $\theta_k - \theta^*$. From this it follows\footnote{The last line of this derivation uses $\ex \left[\bigO \left ( (\theta_k - \theta^*)^3 \right ) \right] = o( 1/k )$, which is an (unjustified) assumption that is used in Amari's proof. 
This assumption has intuitive appeal since $\ex \left[\bigO \left ( (\theta_k - \theta^*)^2 \right ) \right] = \bigO( 1/k )$, and so it makes sense that $\ex \left[\bigO \left ( (\theta_k - \theta^*)^3 \right ) \right]$ would shrink faster.  However, extreme counterexamples are possible which involve very heavy-tailed distributions on $\theta_k$ over unbounded regions.  By adding some mild hypotheses such as $\theta_k$ being restricted to some bounded region, which is an assumption frequently used in the convex optimization literature, it is possible to justify this assumption rigorously.  Rather than linger on this issue we will refer the reader to \citet{bottou_very_large}, which provides a more rigorous treatment of these kinds of asymptotic results, using various generalizations of the big-O notation.} that
\begin{align}
\ex[h(\theta_k)] - h(\theta^*) &= \frac{1}{2} \ex \left[ (\theta_k - \theta^*)^\top {H^*} (\theta_k - \theta^*) \right ] + \ex \left[\bigO \left ( (\theta_k - \theta^*)^3 \right ) \right] \notag \\
&= \frac{1}{2} \tr\left( {H^*} \ex \left[ (\theta_k - \theta^*)(\theta_k - \theta^*)^\top \right ] \right) + \ex \left[\bigO \left ( (\theta_k - \theta^*)^3 \right ) \right] \notag \\
&= \frac{1}{2k} \tr\left( {H^*} F(\theta^*)^{-1} \right) + \ex \left[\bigO \left ( (\theta_k - \theta^*)^3 \right ) \right] \notag \\
&= \frac{n}{2k} + o\left( \frac{1}{k} \right) \ep{,} \label{eqn:expected_h_simple}
\end{align}
where we have used ${H^*} = F(\theta^*)$, which follows from the ``realizability" hypothesis used to prove the Fisher efficiency result (see below). Note that while this is the same convergence rate ($\bigO(1/k)$) as the one which appears in \citet{Hazan_newton} (see our Section \ref{sec:emp_fish}), the constant is much better.  However, the comparison is slightly unfair, as \citet{Hazan_newton} doesn't require that the curvature matrix be estimated on the entire data set (as discussed above).  

The fourth and final caveat of Amari's Fisher efficiency result is that Amari's proof assumes that the training distribution $\hat{Q}_{x,y}$ and the optimal model distribution $P_{x,y}(\theta^*)$ coincide, a condition called ``realizability" (which is also required in order for the Cram\'er-Rao lower bound to apply).  This essentially means that the model perfectly captures the training distribution at $\theta = \theta^*$.  This assumption is used in Amari's proof of the Fisher efficiency result to show that the Fisher $F$, when evaluated at $\theta = \theta^*$, is equal to both the empirical Fisher $\bar{F}$ and the Hessian $H$ of $h$. (These equalities follow immediately from $\hat{Q}_{x,y} = P_{x,y}(\theta^*)$ using the forms of the Fisher presented in Section \ref{sec:nat_grad}.) Note that realizability is a subtle condition. It can fail to hold if the model isn't powerful enough to capture the training distribution. But also if the training distribution is a finite set of pairs $(x,y)$ and the model is powerful enough to perfectly capture this (as a Delta distribution), in which case $F(\theta^*)$ is no longer well-defined (because its associated density function isn't), and convergence faster than the Cram\'er-Rao bound becomes possible.

It is not clear from Amari's proof what happens when this correspondence fails to hold at $\theta = \theta^*$, and whether a (perhaps) weaker asymptotic upper bound on the variance might still be provable.  Fortunately, various authors \citep{Murata, bottou_very_large, sgd-qn} building on early work of \citet{amari_oldtheory}, provide some further insight into this question by studying asymptotic behavior of general iterations of the form\footnote{Note that some authors define $\B_k$ to be the matrix that multiplies the gradient, instead of its inverse (as we do instead).}
\begin{align}
\label{eqn:general_iter}
\theta_{k+1} = \theta_k - \alpha_k \B_k^{-1} g_k(\theta_k) \ep{,}
\end{align}
where $\B_k = \B$ is a fixed\footnote{Note that for a non-constant $\B_k$ where $\B_k^{-1}$ converges sufficiently quickly to a fixed $\B^{-1}$ as $\theta_k$ converges to $\theta^*$, these analyses will likely still apply, at least approximately.} curvature matrix (which is independent of $\theta_k$ and $k$), and where $g_k(\theta_k)$ is a stochastic estimate of $\nabla h(\theta_k)$. 

In particular, \citet{Murata} gives exact (although implicit) expressions for the asymptotic mean and variance of $\theta_k$ in the above iteration for the case where $\alpha_k = 1/(k+1)$ or $\alpha_k$ is constant.  These expressions describe the (asymptotic) behavior of this iteration in cases where the curvature matrix $\B$ is not the Hessian $H$ or the Fisher $F$, covering the non-realizable case, as well as the case where the curvature matrix is only an approximation of the Hessian or Fisher.  \citet{sgd-qn} meanwhile gives expressions for $\ex[h(\theta_k)]$ in the case where $\alpha_k$ shrinks as $1/k$, thus generalizing eqn.~\ref{eqn:expected_h_simple} in a similar manner.  

In the following subsections we will examine these results in more depth, and improve on those of \citet{sgd-qn} (at least in the \emph{quadratic} case) by giving an \emph{exact} asymptotic expression for $\ex[h(\theta_k)]$. We will also analyze iterate averaging (aka Polyak averaging ; see Section \ref{sec:averaging}) in the same setting.

Some interesting consequences of this analysis are discussed in Sections \ref{sec:consequences_asymptotic_main} and \ref{sec:consequences_averaging}. Of particular note is that for any choice of $\B$, $\ex[h(\theta_k)] - h(\theta_0)$ can be expressed as a sum of two terms: one that scales as $\bigO(1/k)$ and doesn't depend on the starting point $\theta_0$, and one that does depend on the starting point and scales as $\bigO(1/k^2)$ or better.  Moreover, the first term, which is asymptotically dominant, carries all the dependence on the noise covariance, and crucially isn't improved by the use of a non-trivial choices for $\B$ such as $F$ or $H$, assuming the use of Polyak averaging. Indeed, if Polyak averaging is used, this term matches the Cram\'er-Rao lower bound, and thus even plain stochastic gradient descent becomes Fisher efficient! (This also follows from the analysis of \citet{Polyak_averaging}.) Meanwhile, if learning rate decay is used instead of Polyak averaging, one \emph{can} improve the constant on this term by using 2nd-order methods, although not the overall $1/k$ rate.

While these results strongly suggest that 2nd-order methods like natural gradient descent won't be of much help \emph{asymptotically} in the stochastic setting, we argue that the constant on the starting point dependent $\bigO(1/k^2)$ term can still be improved significantly through the use of such methods, and this term may matter more in practice given a limited iteration budget (despite being negligible for very large $k$).

\subsection{Some New Results Concerning Asymptotic Convergence Speed of General Stochastic 2nd-order Methods}
\label{sec:speed_analysis}


In this subsection we will give two results which characterize the asymptotic convergence of the stochastic iteration in eqn.~\ref{eqn:general_iter} as applied to the convex quadratic objective $h(\theta) = \frac{1}{2} (\theta - \theta^*)^\top {H^*} (\theta - \theta^*)$ (whose minimizer is $\theta^*$). The proofs of both results are in Appendix \ref{app:convergence_proofs}.

We begin by defining
\begin{align*}
\Sigma_g(\theta) = \var(g(\theta)) = \ex\left[ (g(\theta)-\ex[g(\theta)]) (g(\theta)-\ex[g(\theta)])^\top \right] \ep{,}
\end{align*}
where $g(\theta)$ denotes the random variable whose distribution coincides with the conditional distribution of $g_k(\theta_k)$ given $\theta_k$. Note that this notation is well-defined as long as $g_k(\theta_k)$ depends only on the value of $\theta_k$, and not on $k$ itself. (This will be true, for example, if the $g_k(\theta_k)$'s are generated by sampling a fixed-size mini-batch of iid training data.)

To simplify our analysis we will assume that $\Sigma_g(\theta)$ is \emph{constant} with respect to $\theta$, allowing us to write it simply as $\Sigma_g$. While somewhat unrealistic, one can reasonably argue that this assumption will become approximately true as $\theta$ converges to the optimum $\theta^*$. It should be noted that convergence of stochastic optimization methods can happen faster than in our analysis, and indeed than is allowed by the Cram\'er-Rao lower bound, if $\Sigma_g(\theta)$ approaches $0$ sufficiently fast as $\theta$ goes to $\theta^*$ . This can happen if the model can obtain zero error on all cases in the training distribution \citep[e.g][]{loizou2017momentum}, a situation which is ruled out by the hypothesis that this distribution has a density function (as is required by Cram\'er-Rao). But it's worth observing that this kind of faster convergence can't happen on the test data distribution (assuming it satisfies Cram\'er-Rao's hypotheses), and thus will only correspond to faster over-fitting.


Before stating our first result we will define some additional notation. We denote the variance of $\theta_k$ by
\begin{align*}
V_k = \var(\theta_k) = \ex \left[ (\theta_k - \ex[\theta_k])(\theta_k - \ex[\theta_k])^\top \right ] \ep{.}
\end{align*}
And we define the following linear operators\footnote{Note that these operators are \emph{not} $n \times n$ matrices themselves, although they can be represented as $n^2 \times n^2$ matrices if we vectorize their $n \times n$ matrix arguments.  Also note that such operators can be linearly combined and composed, where we will use the standard $\pm$ notation for linear combination, multiplication for composition, and where $I$ will be the identity operator.  So, for example, $(I + \Xi^2)(X) = X + \Xi(\Xi(X))$.} that map square matrices to matrices of the same size:
\begin{align*}
\Xi (X) &= \B^{-1} H^* X + \left(\B^{-1} H^* X\right)^\top = \B^{-1} H^* X + X H^* \B^{-1} \ep{, and} \\
\Psi_\beta(X) &= \left(I - \beta \B^{-1} H^*\right) X \left(I - \beta \B^{-1} H^*\right)^\top \ep{.}
\end{align*}
We also define
\begin{align*}
U = \B^{-1} \Sigma_g \B^{-1}
\end{align*}
for notational brevity, as this is an expression which will appear frequently. Finally, for an $n$-dimensional symmetric matrix $A$ we will denote its $i$-th largest eigenvalue by $\lambda_i(A)$, so that $\lambda_1(A) \geq \lambda_2(A) \geq \ldots \geq \lambda_n(A)$.  

The following theorem represents a more detailed and rigorous treatment of the type of asymptotic expressions for the mean and variance of $\theta_k$ given by \citet{Murata}, although specialized to the quadratic case. Note that symbols like $V_\infty$ have a slightly different meaning here than in \citet{Murata}.
\begin{theorem}
\label{thm:murata_replace}
Suppose that $\theta_k$ is generated by the stochastic iteration in eqn.~\ref{eqn:general_iter} while optimizing a quadratic objective $h(\theta) = \frac{1}{2} (\theta - \theta^*)^\top {H^*} (\theta - \theta^*)$. 

If $\alpha_k$ is equal to a constant $\alpha$ satisfying $\alpha \lambda_1(\B^{-1} H^*) \leq 1$, then the mean and variance of $\theta_k$ are given by
\begin{align*}
\ex[\theta_k] &= \theta^* + (I - \alpha B^{-1} {H^*})^k (\theta_0 - \theta^*) \\
V_k &= \left(I - \Lambda^k\right) (V_\infty) \ep{,}
\end{align*}
where $\Lambda = \Psi_{\alpha}$ and $V_\infty = \alpha^2 \left(I - \Lambda\right)^{-1} (U)$.

If on the other hand we have $\alpha_k = 1/(k+a+1)$ for some $a \geq 1$, with $b \equiv \lambda_n\left( B^{-1} {H^*} \right) > \frac{1}{2}$ and $\lambda_1\left( B^{-1} {H^*} \right) \leq a + 1$, then the mean and variance of $\theta_k$ are given by
\begin{align*}
\ex[\theta_k] &= \theta^* + \prod_{j=0}^{k-1} \left(I - \alpha_j B^{-1} {H^*} \right) (\theta_0 - \theta^*) \\
V_k &= \frac{1}{k+a} \left( \Xi - I\right)^{-1}\left(U\right) + E_k \ep{,}
\end{align*}
where $E_k$ is a matrix valued ``error" that shrinks as $1/k^2$.
\end{theorem}
\begin{remark}
Due to the properties of quadratic functions, and the assumptions of constant values for $B$ and $\Sigma_g$, one could state and/or prove this theorem (and the ones that follow) while taking one of ${H^*}$, $B$, or $\Sigma_g$ to be the identity matrix, without any loss of generality. This is due to the fact that optimizing with a preconditioner is equivalent to optimizing a linearly-reparameterized version of the objective with plain stochastic gradient descent.
\end{remark}


One interesting observation that we can immediately make from Theorem \ref{thm:murata_replace} is that, at least in the case where the objective is a convex quadratic, $\ex[\theta_k]$ progresses in a way that is fully independent of the distribution of noise in the gradient estimate (which is captured by the $\Sigma_g$ matrix).  Indeed, it proceeds as $\theta_k$ itself would in the case of fully deterministic optimization.  It is only the variance of $\theta_k$ around $\ex[\theta_k]$ that depends on the gradient estimator's noise.  

To see why this happens, note that if $h(\theta)$ is quadratic then $\nabla h(\theta)$ will be an affine function, and thus will commute with expectation. This allows us to write
\begin{align*}
\ex[ g(\theta_k) ] = \ex[ \nabla h(\theta_k) ] = \nabla h(\ex[\theta_k]) \ep{.}
\end{align*}
Provided that $\alpha_k$ doesn't depend on $\theta_k$ in any way (as we are implicitly assuming), we then have
\begin{align*}
\ex[\theta_{k+1}] = \ex[ \theta_k - \alpha_k \B^{-1} g(\theta_k) ] = \ex[\theta_k] - \alpha_k \B^{-1} \nabla h(\ex[\theta_k]) \ep{,}
\end{align*}
which is precisely the deterministic version of eqn.~\ref{eqn:general_iter}, where we treat $\ex[\theta_k]$ as the parameter vector being optimized].

While Theorem \ref{thm:murata_replace} provides a detailed picture of how well $\theta^*$ is estimated by $\theta_k$, it doesn't tell us anything directly about how quickly progress is being made on the objective, which is arguably a much more relevant concern in practice.  Fortunately, as observed by \citet{Murata}, we have the basic identity (proved for completeness in Appendix \ref{app:basic_ident_proof}):
\begin{align*}
\ex\left[ (\theta_k - \theta^*)(\theta_k - \theta^*)^\top \right] &= V_k + (\ex[\theta_k] - \theta^*)(\ex[\theta_k] - \theta^*)^\top \ep{.}
\end{align*}
And it thus follows that
\begin{align}
\ex[h(\theta_k)] - h(\theta^*) &= \frac{1}{2} \tr\left( {H^*} \ex \left[ (\theta_k - \theta^*)(\theta_k - \theta^*)^\top \right ] \right) 
\notag \\
&= \frac{1}{2} \tr \left( {H^*} \left(V_k + (\ex[\theta_k] - \theta^*)(\ex[\theta_k] - \theta^*)^\top \right) \right) 
\notag \\
&= \frac{1}{2} \tr \left( {H^*} V_k \right) + \frac{1}{2} \tr \left( {H^*} (\ex[\theta_k] - \theta^*)(\ex[\theta_k] - \theta^*)^\top \right) 
\label{eqn:obj_expression} 
\end{align}
which allows us to relate the convergence of $E[\theta_k]$ (which behaves like $\theta_k$ in the deterministic version of the algorithm) and the size/shape of the variance of $\theta_k$ to the convergence of $\ex[h(\theta_k)]$.  In particular, we see that in this simple case where $h(\theta)$ is quadratic, $\ex[h(\theta_k)] - h(\theta^*)$ neatly decomposes as the sum of two independent terms that quantify the roles of these respective factors in the convergence of $\ex[h(\theta_k)]$ to $h(\theta^*)$.

In the proof of the following theorem (which is in the appendix), we will use the above expression and Theorem \ref{thm:murata_replace} to precisely characterize the asymptotic convergence of $\ex[h(\theta_k)]$.  Note that while \citet{Murata} gives expressions for this as well, they cannot be directly evaluated except in certain special cases (such as when $B = H$), and only include the asymptotically dominant terms.

\begin{theorem}
\label{thm:asymptotic_main}
Suppose that $\theta_k$ is generated by the stochastic iteration in eqn.~\ref{eqn:general_iter} while optimizing a quadratic objective $h(\theta) = \frac{1}{2} (\theta - \theta^*)^\top {H^*} (\theta - \theta^*)$. 

If $\alpha_k$ is equal to a constant $\alpha$ satisfying $\alpha \lambda_1(\B^{-1} H^*) \leq 1$, then the expected objective $\ex[h(\theta_k)]$ satisfies
\begin{align*}
l(k) \leq  \ex[h(\theta_k)] - h(\theta^*) \leq u(k) \ep{,}
\end{align*}
where
\begin{align*}
u(k) &= \left[ 1 - \left(1 - \epsilon_1 \right)^{2k} \right] \frac{\alpha}{4} \tr\left( \left(\B - \frac{\alpha}{2} {H^*}\right)^{-1} \Sigma_g \right) + \left(1 - \epsilon_2 \right)^{2k} h(\theta_0)
\end{align*}
and
\begin{align*}
l(k) &= \left[ 1 - \left(1 - \epsilon_2 \right)^{2k} \right] \frac{\alpha}{4} \tr\left( \left(\B - \frac{\alpha}{2} {H^*}\right)^{-1} \Sigma_g \right) + \left(1 - \epsilon_1 \right)^{2k} h(\theta_0) \ep{,}
\end{align*}
with $\epsilon_1 = \lambda_1(C) = \alpha \lambda_1\left(B^{-1} {H^*}\right)$ and $\epsilon_2 = \lambda_n(C) = \alpha \lambda_n\left(B^{-1} {H^*}\right)$.

\vspace{0.3in}
If on the other hand we have $\alpha_k = 1/(k+a+1)$ for some $a \geq 1$, with $b \equiv \lambda_n\left( B^{-1} {H^*} \right) > \frac{1}{2}$ and $\lambda_1\left( B^{-1} {H^*} \right) \leq a + 1$, then the expected objective satisfies
\begin{align*}
l(k) \leq  \ex[h(\theta_k)] - h(\theta^*) \leq u(k) \ep{,}
\end{align*}
where
\begin{align*}
u(k) = \frac{1}{4(k+a)} \tr\left(\left(\B^{-1} - \frac{1}{2} {H^*}^{-1} \right)^{-1} U\right) &+ \frac{\nu(a) k}{4(k+a)^3} \tr\left(\left(\B^{-1} - \frac{1}{2} {H^*}^{-1} \right)^{-1} \Psi_1(U) \right) \\
&\quad + h(\theta_0) \left(\frac{1+a}{k+a}\right)^{2 b}
\end{align*}
and
\begin{align*}
l(k) = \frac{1}{4(k+a)} \tr\left(\left(\B^{-1} - \frac{1}{2} {H^*}^{-1} \right)^{-1} U\right) - \frac{1}{4a} \left(\frac{1+a}{k+a}\right)^{2b} \tr\left(\left(\B^{-1} - \frac{1}{2} {H^*}^{-1} \right)^{-1} U\right) \ep{,}
\end{align*}
where $\nu(a) = (a+2)^3/(a(a+1)^2)$.

\end{theorem}


\subsubsection{Some Consequences of Theorem \ref{thm:asymptotic_main}} 
\label{sec:consequences_asymptotic_main}

In the case of a fixed step-size $\alpha_k = \alpha$, Theorem \ref{thm:asymptotic_main} shows that $\ex[h(\theta_k)]$ will tend to the constant
\begin{align*}
h(\theta^*) + \frac{\alpha}{4} \tr\left( \left(\B - \frac{\alpha}{2} {H^*}\right)^{-1} \Sigma_g \right) \ep{.}
\end{align*}
The size of this extra additive factor is correlated with the step-size $\alpha$ and gradient noise covariance $\Sigma_g$.  If the covariance or step-sizes are small enough, it may not be very large in practice.

Moreover, one can use the fact that the iterates $\{\theta_k\}_{k=1}^\infty$ are (non-independent) asymptotically unbiased estimators of $\theta^*$ to produce an asymptotically unbiased estimator with shrinking variance by averaging them together. This is done in the Polyak Averaging method \citep[e.g.][]{Polyak_averaging}, which we analyze in Section \ref{sec:averaging}.  

In the scenario where $\alpha_k = 1/(k+a+1)$, if one performs stochastic 2nd-order optimization with $\B = H^*$ (so that $b=1$) and any $a \geq 1$, Theorem \ref{thm:asymptotic_main} gives that
\begin{align*}
\ex[h(\theta_k)] - h(\theta^*) \leq \frac{1}{2(k+a)} \tr\left({H^*}^{-1} \Sigma_g\right) + h(\theta_0) \left(\frac{1+a}{k+a}\right)^2
\end{align*}
(where we have used the fact that $\Psi_1 = 0$ when $B = H^*$). And if one considers the scenario corresponding to 1st-order optimization where we take $B = \lambda_n(H^*) I$ (so that $b = 1$) and $a = \kappa(H^*)$, where $\kappa(H^*) = \lambda_1({H^*}) / \lambda_n({H^*})$ is the condition number of $H^*$, we get
\begin{align*}
\ex[h(\theta_k)] - h(\theta^*) \leq &\left(\frac{1}{4(k+\kappa(H^*))} + \bigO\left(\frac{1}{(k+\kappa(H^*))^2}\right)\right) \frac{1}{\lambda_n({H^*})}\\
& \quad \cdot \tr\left(\left(I - \frac{\lambda_n({H^*})}{2} {H^*}^{-1} \right)^{-1} \Sigma_g \right)
+ h(\theta_0) \left(\frac{1+\kappa(H^*)}{k+\kappa(H^*)}\right)^2 \ep{.}
\end{align*}
In deriving this we have applied Lemma \ref{lemma:PSDtrace_bound} while exploiting the fact that $\Psi_1(U) \preceq U$ (i.e. $U - \Psi_1(U)$ is PSD).

These two bounds can be made more similar looking by choosing $a = \kappa(H^*)$ in the first one. (However this is unlikely to be the optimal choice in general, and the extra freedom in choosing $a$ seems to be one of the advantages of using $\B = H^*$.) In this case, the starting-point dependent terms (which are noise independent) seem to exhibit the same asymptotics, although this is just an artifact of the analysis, and it is possible to obtain tighter bounds on these terms by considering the entire spectrum instead of just the most extreme eigenvalue (as was done in eqn.~\ref{eqn:crude_bound_step} while applying Lemma \ref{lemma:wang_basic_bound}).

The noise-dependent terms, which are the ones that dominate asymptotically as $k \to \infty$ and were derived using a very tight analysis (and indeed we have a matching lower bound for these), exhibit a more obvious difference in the above expressions.  To compare their sizes we can apply Lemma \ref{lemma:wang_basic_bound} to obtain the following bounds (see Appendix \ref{app:extra_deriv2}):
\begin{align*}
\frac{1}{2\lambda_1 \left({H^*} \right)}\tr(\Sigma_g) \leq \frac{1}{2}\tr\left( {H^*}^{-1} \Sigma_g\right) \leq \frac{1}{2\lambda_n \left({H^*} \right)}\tr(\Sigma_g)
\end{align*}
and
\begin{align*}
\frac{1}{4\lambda_n({H^*})} \tr(\Sigma_g) \leq  \frac{1}{4\lambda_n({H^*})}\tr\left(\left(I - \frac{\lambda_n({H^*})}{2} {H^*}^{-1} \right)^{-1} \Sigma_g \right) \leq \frac{1}{2\lambda_n({H^*})} \tr(\Sigma_g) \ep{.}
\end{align*}
Because the lower bound is much smaller in the $B = H^*$ case, these bounds thus allow for the possibility that the noise dependent term will be much smaller in that case. A necessary condition for this to happen is that $H^*$ is ill-conditioned (so that $\lambda_1 \left({H^*} \right) \gg \lambda_n \left({H^*} \right)$), although this alone is not sufficient.  

To provide an actual concrete example where the noise-dependent term is smaller, we must make further assumptions about the nature of the gradient noise covariance matrix $\Sigma_g$.  As an important example, we consider the scenario where the stochastic gradients are computed using (single) randomly sampled cases from the training set $S$, and where we are in the realizable regime (so that $H^* = \bar{F^*} = \Sigma_g$; see Section \ref{sec:Fisher_efficient}). When $B = H^*$, the constant on the noise dependent term will thus scale as
\begin{align*}
\frac{1}{2}\tr\left( {H^*}^{-1} \Sigma_g\right) = \frac{1}{2}\tr\left( {H^*}^{-1} {H^*}\right) = \frac{n}{2} \ep{,}
\end{align*}
while if $B = \lambda_n({H^*}) I$ it will scale as
\begin{align*}
\frac{1}{4\lambda_n({H^*})} \tr\left(\left(I - \frac{\lambda_n({H^*})}{2} {H^*}^{-1} \right)^{-1} \Sigma_g \right) &= \frac{1}{4\lambda_n({H^*})} \tr\left(\left(I - \frac{\lambda_n({H^*})}{2} {H^*}^{-1} \right)^{-1} {H^*} \right) \\
&= \frac{1}{4\lambda_n({H^*})} \sum_{i=1}^n \frac{\lambda_i({H^*})}{1 - \frac{\lambda_n({H^*})}{2\lambda_i({H^*})} } = \frac{1}{4} \sum_{i=1}^n \frac{r_i}{1 - \frac{1}{2r_i} } \ep{,}
\end{align*}
where we have defined $r_i = \lambda_i({H^*}) / \lambda_n({H^*})$. (To go from the first to the second line we have used the general fact that for any rational function $g$ the $i$-th eigenvalue of $g(H^*)$ is given by $g(\lambda_i)$. And also that the trace is the sum of the eigenvalues.)

Observing that $1 \leq r_i$, we thus have $r_i \leq r_i / (1-1/(2 r_i))$, from which it also follows that
\begin{align*}
\frac{1}{4} \frac{\tr(H^*)}{\lambda_n(H^*)} \: = \: \frac{1}{4}\sum_{i=1}^n r_i \: \leq  \: \frac{1}{4}\sum_{i=1}^n \frac{r_i}{1 - \frac{1}{2r_i} } \ep{.}
\end{align*}
and
\begin{align*}
\frac{n}{2} \: \leq  \: 2 \cdot \frac{1}{4}\sum_{i=1}^n \frac{r_i}{1 - \frac{1}{2r_i} } \ep{.}
\end{align*}
From these bounds we can see that noise scaling in the $B = H^*$ case is no worse than twice that of the $B = \lambda_n({H^*}) I$ case.  And it has the potential to be much smaller, such as when $\frac{\tr(H^*)}{\lambda_n(H^*)} \gg n$, or when the spectrum of ${H^*}$ covers a large range. For example, if $\lambda_i({H^*}) = n-i+1$ then the noise scales as $\Omega(n^2 / k)$ in the $B = \lambda_n({H^*}) I$ case.

\subsubsection{Related Results}

\label{sec:related_bound_results_1}

The related result most directly comparable to Theorem \ref{thm:asymptotic_main} is Theorem 1 of \citet{sgd-qn}, which provides upper and lower bounds for $\ex[h(\theta_k)] - h(\theta^*)$ in the case where $\alpha_k = 1/(k+a+1)$ and $\lambda_n\left(B^{-1} {H^*}\right) > 1/2$.   In particular, using a different technique from our own, \citet{sgd-qn} show that\footnote{Note that the notation `$B$' as it is used by \citet{sgd-qn} means the \emph{inverse} of the matrix $B$ as it appears in this paper.  And while \citet{sgd-qn} presents their bounds with $\bar{F}$ in place of $\Sigma_g$, these are the same matrix when evaluated at $\theta = \theta^*$ as we have $\ex[g(\theta^*)] = 0$ (since $\theta^*$ is a local optimum).}
\begin{align*}
\frac{1}{k} \cdot \frac{\tr({H^*} U)}{4 \left(\lambda_1(\B^{-1} {H^*}) - \frac{1}{2}\right)}  + o\left( \frac{h(\theta_0)}{k} \right) \: \leq \: \ex[h(\theta_k)] - h(\theta^*) \: \leq \: \frac{1}{k} \cdot & \frac{\tr({H^*} U)}{4 \left(\lambda_n( \B^{-1} {H^*}) - \frac{1}{2}\right)} \\ 
&+ o\left( \frac{h(\theta_0)}{k} \right) \ep{.}
\end{align*}

This result is more general in the sense that it doesn't assume the objective is quadratic.  However, it is also far less detailed than our result, and in particular doesn't describe the asymptotic value of $\ex[h(\theta_k)] - h(\theta^*)$, instead only giving (fairly loose) upper and lower bounds on it. It can be obtained from our Theorem \ref{thm:asymptotic_main}, in the quadratic case, by a straightforward application of Lemma \ref{lemma:wang_basic_bound}.

There are other relevant results in the vast literature on general strongly convex functions, such as \citet{kushner2003stochastic} and the references therein, and \citet{moulines2011non}. These results, while usually only presented for standard stochastic gradient descent, can be applied to the same setting considered in Theorem \ref{thm:asymptotic_main} by performing a simple linear reparameterization.  A comprehensive review of such results would be outside the scope of this report, but to the best of our knowledge there is no result which would totally subsume Theorem \ref{thm:asymptotic_main} for convex quadratics. The bounds in \citet{moulines2011non} for example, are more general in a number of ways, but also appear to be less detailed than ours, and harder to interpret.

\subsection{An Analysis of Averaging}
\label{sec:averaging}


In this subsection we will extend the analysis from Subsection \ref{sec:speed_analysis} to incorporate basic iterate averaging of the standard type \citep[e.g.][]{Polyak_averaging}.  In particular, we will bound $\ex[h(\bar{\theta}_k)]$ where
\begin{align*}
\bar{\theta}_k = \frac{1}{k+1} \sum_{i = 0}^k \theta_i \ep{.}
\end{align*}

Note that while this type of averaging leads to elegant bounds (as we will see), a form of averaging based on an exponentially-decayed moving average typically works much better in practice.  This is given by
\begin{align*}
\bar{\theta}_k = (1-\beta_k) \theta_k + \beta_{k}\bar{\theta}_{k-1} \quad\quad\quad\quad \bar{\theta}_0 = \theta_0
\end{align*}
for $\beta_k = \min\{ 1-1/k, \beta_{\max}\}$ with $0 < \beta_{\max} < 1$ close to 1 (e.g. $\beta_{\max} = 0.99$).  This type of averaging has the advantage that it more quickly ``forgets" the very early $\theta_i$'s, since their ``weight" in the average decays exponentially.  However, the cost of doing this is that the variance will never converge exactly to zero, which arguably matters more in theory than it does in practice.

The main result of this subsection, which is proved in Appendix \ref{app:convergence_proofs}, is stated as follows:
\begin{theorem}
\label{thm:averaging}
Suppose that $\theta_k$ is generated by the stochastic iteration in eqn.~\ref{eqn:general_iter} with constant step-size $\alpha_k = \alpha$ while optimizing a quadratic objective $h(\theta) = \frac{1}{2} (\theta - \theta^*)^\top {H^*} (\theta - \theta^*)$. Further, suppose that $\alpha \lambda_1(B^{-1} {H^*}) < 1$, and define $\bar{\theta}_k = \frac{1}{k+1} \sum_{i = 0}^k \theta_i$. Then we have the following bound:
\begin{align*}
\ex[h(\bar{\theta}_k)] - h(\theta^*) &\leq \min \left \{\frac{1}{2(k+1)} \tr\left({H^*}^{-1} \Sigma_g\right) , \:\: \frac{\alpha}{4} \tr\left( \left(\B - \frac{\alpha}{2} {H^*}\right)^{-1} \Sigma_g \right) \right\}  \\
&\quad\quad\quad\quad + \min \left \{ \frac{1}{2(k+1)^2\alpha^2} \left\| {H^*}^{-1/2} B (\theta_0 - \theta^*) \right\|^2, \right. \\
&\quad\quad\quad\quad \quad\quad\quad \quad\!\!\! \left. \frac{1}{2(k+1) \alpha} \left\| B^{1/2} (\theta_0 - \theta^*) \right\|^2, \:\: h(\theta_0) \right \} \ep{.}
\end{align*}
\end{theorem}

Note that this bound can be written asymptotically ($k \to \infty$) as 
\begin{align*}
    \ex[h(\bar{\theta}_k)] - h(\theta^*) \leq \bigO \left( \frac{1}{2(k+1)}\tr \left({H^*}^{-1} \Sigma_g \right) \right) \ep{,}
\end{align*}
which notably doesn't depend on either $\alpha$ or $\B$. 

This is a somewhat surprising property of averaging to be sure, and can be intuitively explained as follows.  Increasing the step-size along any direction $d$ (as measured by $\alpha d^\top \B^{-1} d$) will increase the variance in that direction for each iterate (since the step-size multiplies the noise in the stochastic gradient estimate), but will also cause the iterates to decorrelate faster in that direction (as can be seen from eqn.~\ref{eqn:avg_decorr}).  Increased decorrelation in the iterates leads to lower variance in their average, which counteracts the aforementioned increase in the variance. As it turns out, these competing effects will exactly cancel in the limit, which the proof of Theorem \ref{thm:averaging} rigorously establishes.

\subsubsection{Some Consequences of Theorem \ref{thm:averaging}} 
\label{sec:consequences_averaging}

In the case of stochastic 2nd-order optimization of a convex quadratic objective where we take $B = {H^*}$ (which allows us to use an $\alpha$ close to $1$) this gives
\begin{align*}
\ex[h(\bar{\theta}_k)] - h(\theta^*) &\leq \frac{\tr\left( {H^*}^{-1} \Sigma_g \right)}{2(k+1)} + \frac{h(\theta_0)}{(k+1)^2\alpha^2} \ep{.}
\end{align*}
Then choosing the maximum allowable value of $\alpha$ this becomes
\begin{align*}
\ex[h(\bar{\theta}_k)] - h(\theta^*) &\leq \frac{\tr\left( {H^*}^{-1} \Sigma_g \right)}{2(k+1)} + \frac{h(\theta_0)}{(k+1)^2} \ep{,}
\end{align*}
which is a similar bound to the one described in Section \ref{sec:consequences_asymptotic_main} for stochastic 2nd-order optimization (with $\B = H^*$) using an annealed step-size $\alpha_k = 1/(k+1)$.

For the sake of comparison, applying Theorem \ref{thm:averaging} with $B = I$ gives
\begin{align}
\ex[h(\bar{\theta}_k)] - h(\theta^*) \leq \frac{\tr\left( {H^*}^{-1} \Sigma_g \right)}{2(k+1)} + \frac{\left\| {H^*}^{-1/2}(\theta_0 - \theta^*) \right\|^2}{2(k+1)^2\alpha^2} \label{eqn:bound_avg_B=I}
\end{align} 
under the assumption that $\alpha \lambda_1({H^*}) < 1$.  For the maximum allowable value of $\alpha$ this becomes
\begin{align*}
\ex[h(\bar{\theta}_k)] - h(\theta^*) \leq \frac{\tr\left( {H^*}^{-1} \Sigma_g \right)}{2(k+1)} + \frac{ \lambda_1({H^*})^2 \left\| {H^*}^{-1/2}(\theta_0 - \theta^*) \right\|^2}{2(k+1)^2} \ep{.}
\end{align*} 

An interesting observation we can make about these bounds is that they \emph{do not} demonstrate any improvement through the use of 2nd-order optimization on the \emph{asymptotically dominant} noise-dependent term in the bound (a phenomenon first observed by \citet{Polyak_averaging} in a more general setting).  Moreover, in the case where the stochastic gradients (the $g_k(\theta_k)$'s) are sampled using random training cases in the usual way so that $\Sigma_g = \bar{F}(\theta)$, and the realizability hypothesis is satisfied so that $H^* = F(\theta^*) = \bar{F}(\theta^*)$ (see Section \ref{sec:Fisher_efficient}), we can see that simple stochastic gradient descent with averaging achieves a similar \emph{asymptotic} convergence speed (given by $n/(k+1) + o(1/k)$) to that possessed by Fisher efficient methods like stochastic natural gradient descent (c.f. eqn.~\ref{eqn:expected_h_simple}), despite not involving the use of curvature matrices!

Moreover, in the non-realizable case, $1/(2k) \tr\left( {H^*}^{-1} \Sigma_g\right)$ turns out to be the same asymptotic rate as that achieved by the ``empirical risk minimizer" (i.e. the estimator of $\theta$ that minimizes the expected loss over the training cases processed thus far) and is thus ``optimal" in a certain strong sense for infinite data sets. See \citet{frostig2014competing} for a good recent discussion of this.

However, despite these observations, these bounds \emph{do} demonstrate an improvement to the noise-independent term (which depends on the starting point $\theta_0$) through the use of 2nd-order optimization. When $H^*$ is ill-conditioned and $\theta_0 - \theta^*$ has a large component in the direction of eigenvectors of $H^*$ with small eigenvalues, we will have
\begin{align*}
\lambda_1({H^*})^2 \| {H^*}^{-1/2}(\theta_0 - \theta^*) \|^2 \gg h(\theta_0) \ep{.}
\end{align*}
Crucially, this noise-independent term may often matter more in practice (despite being \emph{asymptotically} negligible), as the LHS expression may be very large compared to $\Sigma_g$, and we may be interested in stopping the optimization long before the more slowly shrinking noise-dependent term begins to dominate asymptotically (e.g. if we have a fixed iteration budget, or are employing early-stopping).  This is especially likely to be the case if the gradient noise is low due to the use of large mini-batches.

It is also worth pointing out that compared to standard stochastic 2nd-order optimization with a fixed step-size (as considered by the first part of Theorem \ref{thm:asymptotic_main}), the noise-independent term shrinks much more slowly when we use averaging (quadratically vs exponentially), or for that matter when we use an annealed step-size $\alpha_k = 1/(k+1)$ with $b > 1$.  This seems to be the price one has to pay in order to ensure that the noise-dependent term shrinks as $1/k$.  (Although in practice one can potentially obtain a more favorable dependence on the starting point by adopting the ``forgetful" exponentially-decaying variant of averaging discussed previously.)


\subsubsection{Related Results}

\label{sec:related_bound_results_2}

Under weaker assumptions about the nature of the stochastic gradient noise, \citet{Polyak_averaging} showed that
\begin{align*}
\ex \left[ (\bar{\theta}_k - \theta^*)(\bar{\theta}_k - \theta^*)^\top \right ] = \frac{1}{k+1} {H^*}^{-1/2} \Sigma_g {H^*}^{-1/2}  +   o\left(\frac{1}{k}\right) \ep{,}
\end{align*}
which using the first line of eqn.~\ref{eqn:obj_expression} yields
\begin{align*}
\ex[h(\bar{\theta}_k)] - h(\theta^*) = \frac{\tr\left( {H^*}^{-1} \Sigma_g \right)}{2(k+1)} + o\left(\frac{1}{k}\right) \ep{.}
\end{align*}
While consistent with Theorem \ref{thm:averaging}, this bound gives a less detailed picture of convergence, and in particular fails to quantify the relative contribution of the noise-dependent and independent terms, and thus doesn't properly distinguish between the behavior of stochastic 1st or 2nd-order optimization methods (i.e. $B = I$ vs $B = {H^*}$).


Assuming a model for the gradient noise which is consistent with linear least-squares regression and $B = I$, \citet{defossez2014constant} showed that 
\begin{align*}
\ex[h(\bar{\theta}_k)] - h(\theta^*) \approx \frac{\tr\left( {H^*}^{-1} \Sigma_g \right)}{k+1} + \frac{\left\| {H^*}^{-1/2}(\theta_0 - \theta^*) \right\|^2}{(k+1)^2\alpha^2}
\end{align*}
holds in the asymptotic limit as $\alpha \to 0$ and $k \to \infty$.  

This expression is similar to the one generated by Theorem \ref{thm:averaging} (see eqn.~\ref{eqn:bound_avg_B=I}), although it only holds in the asymptotic limit of small $\alpha$ and large $k$, and assumes a particular $\theta$-dependent form for the noise (arising in least-squares linear regression), which represents neither a subset nor a super-set of our general $\theta$-independent formulation. An interesting question for future research is whether Theorem \ref{thm:murata_replace} could be extended in way that would allow $\Sigma_g$ to vary with $\theta$, and whether this would allow us to prove a more general version of Theorem \ref{thm:averaging} that would cover the case of linear least-squares.

A result which is more directly comparable to our Theorem \ref{thm:averaging} is ``Theorem 3" of \citet{stepsize_flammarion}, which when applied to the same general case considered in Theorem \ref{thm:averaging} gives the following upper bound (assuming that $B = I$\footnote{Note that the assumption that $B = I$ doesn't actually limit this result since stochastic 2nd-order optimization of a quadratic using a \emph{fixed} $B$ can be understood as stochastic gradient descent applied to a transformed version of the original quadratic (with an appropriately transformed gradient noise matrix $\Sigma_g$).} and $\alpha \lambda_1({H^*}) \leq 1$):
\begin{align*}
\ex[h(\bar{\theta}_k)] - h(\theta^*) \leq 4 \alpha \tr(\Sigma_g) + \frac{\|\theta_0 - \theta^*\|^2}{(k+1)\alpha } \ep{.}
\end{align*}
Unlike the bound proved in Theorem \ref{thm:averaging}, this bound fails to establish that $\ex[h(\bar{\theta_k})]$ even converges, since the term $4 \alpha \tr(\Sigma_g)$ is constant in $k$. 

There are other older results in the literature analyzing averaging in general settings, such as those contain in \citet{kushner2003stochastic} and the references therein. However, to the best of our knowledge there is no result in the literature which would totally subsume Theorem \ref{thm:averaging} for the case of convex quadratics, particularly with regard to the level of detail in the non-asymptotically-dominant terms of the bound (which is very important for our purposes here).


\section{Conclusions and Open Questions}
\label{sec:future_work}

In this report we have examined several aspects of the natural gradient method, such as its relationship to 2nd-order methods, its local convergence speed, and its invariance properties.  

The link we have established between natural gradient descent and (stochastic) 2nd-order optimization with the Generalized Gauss-Newton matrix (GGN) provides intuition for why it might work well with large step-sizes, and gives prescriptions for how to make it work robustly in practice. In particular, we advocate viewing natural gradient descent as a GGN-based 2nd-order method in disguise (assuming the equivalence between the Fisher and GGN holds), and adopting standard practices from the optimization literature to ensure fast and robust performance, such as trust-region/damping/Tikhonov regularization, and Levenberg-Marquardt adjustment heuristics \citep{levenberg_marquardt}.  

However, even in the case of squared loss, where the GGN becomes the standard Gauss-Newton matrix, we don't yet have a completely rigorous understanding of 2nd-order optimization with the GGN.  A completely rigorous account of its global convergence remains elusive (even if we can assume convexity), and convergence rate bounds such as those proved in Section \ref{sec:asymptotic_speed} don't even provide a complete picture of its \emph{local} convergence properties. 

Another issue with these kinds of local convergence bounds, which assume the objective is quadratic (or is well-approximated as such), is that they are always improved by using the Hessian instead of the GGN, and thus fail to explain the empirically observed superiority of the GGN over the Hessian for neural network training. This is because they assume that the objective function has constant curvature (given by the Hessian at the optimum), so that optimization could not possibly be helped by using an alternative curvature matrix like the GGN. 
 And for this reason they also fail to explain why damping methods are so crucial in practice.

Moreover, even just interpreting the constants in these bounds can be hard when using the Fisher or GGN. For example, if we pay attention only to the noise-independent/starting point-dependent term in the bound from Theorem \ref{thm:averaging}, which is given by
\begin{align*}
    \frac{1}{2(k+1)^2\alpha^2} \left\| {H^*}^{-1/2} B (\theta_0 - \theta^*) \right\|^2 \ep{,}
\end{align*}
and plug in $\B = F$ and the maximum-allowable learning rate $\alpha = 1/\lambda_1(B^{-1} {H^*})$, we get the somewhat opaque expression
\begin{align*}
    \frac{\lambda_1(F^{-1} {H^*})^2}{2(k+1)^2} \left\| {H^*}^{-1/2} F (\theta_0 - \theta^*) \right\|^2 \ep{.}
\end{align*}
It's not immediately obvious how we can further bound this expression in the non-realizable case (i.e. where we don't necessarily have $F = H^*$) using easily accessible/interpretable properties of the objective function. This is due to the complicated nature of the relationship between the GGN and Hessian, which we haven't explored in this report beyond the speculative discussion in Section \ref{sec:GGN_speculation}.

Finally, we leave the reader with a few open questions.
\begin{itemize}
    \item Can the observed advantages of the GGN vs the Hessian be rigorously justified for neural networks (assuming proper use of damping/trust-regions in both cases to deal with issues like negative curvature)?
    \item Are there situations where the Fisher and GGN are distinct and one is clearly preferable over the other?
    \item When will be pre-asymptotic advantage of stochastic 2nd-order methods vs SGD with Polyak averaging matter in practice? And can this be characterized rigorously using accessible properties of the target objective?
\end{itemize}

\section*{Acknowledgments}
We gratefully acknowledge support from Google, DeepMind, and the University of Toronto.  We would like to thank L\'{e}on Bottou, Guillaume Desjardins, Alex Botev, and especially the very thorough anonymous JMLR reviewers for their useful feedback on earlier versions of this manuscript.

\newpage

\renewcommand{\theHsection}{A\arabic{section}}
\appendix

\section{Proof of Basic Identity from \citet{Murata}}
\label{app:basic_ident_proof}

\begin{proposition}
Given the definitions of Section \ref{sec:asymptotic_speed} we have
\begin{align*}
\ex\left[ (\theta_k - \theta^*)(\theta_k - \theta^*)^\top \right] &= V_k + (\ex[\theta_k] - \theta^*)(\ex[\theta_k] - \theta^*)^\top \ep{.}
\end{align*}
\end{proposition}

\begin{proof}
We have
\begin{align*}
\ex\left[ (\theta_k - \theta^*)(\theta_k - \theta^*)^\top \right] &= \ex\left[ (\theta_k - \ex[\theta_k] + \ex[\theta_k] - \theta^*)(\theta_k - \ex[\theta_k] + \ex[\theta_k] - \theta^*)^\top \right] \\
&= \ex \left[ (\theta_k - \ex[\theta_k])(\theta_k - \ex[\theta_k])^\top \right] + \ex \left[ (\ex[\theta_k] - \theta^*)(\theta_k - \ex[\theta_k])^\top \right] \\
&\quad \quad + \ex \left[ (\theta_k - \ex[\theta_k])(\ex[\theta_k] - \theta^*)^\top \right] + \ex \left[ (\ex[\theta_k] - \theta^*)(\ex[\theta_k] - \theta^*)^\top \right] \\
&= V_k + (\ex[\theta_k] - \theta^*) \ex \left[ (\theta_k - \ex[\theta_k]) \right]^\top \\
&\quad \quad + \ex \left[ (\theta_k - \ex[\theta_k]) \right](\ex[\theta_k] - \theta^*)^\top + (\ex[\theta_k] - \theta^*)(\ex[\theta_k] - \theta^*)^\top \\
&= V_k + (\ex[\theta_k] - \theta^*)(\ex[\theta_k] - \theta^*)^\top \ep{,}
\end{align*}
where we have used $\ex \left[ (\theta_k - \ex[\theta_k]) \right] = \ex[\theta_k] - \ex[\theta_k] = 0$.
\end{proof}

\section{Proofs of Convergence Theorems}
\label{app:convergence_proofs}

In this section we will prove Theorems \ref{thm:murata_replace}, \ref{thm:asymptotic_main}, and \ref{thm:averaging}.

To begin, we recall the following linear operator definitions from the beginning of Section \ref{sec:speed_analysis}:
\begin{align*}
\Xi (X) &= \B^{-1} H^* X + \left(\B^{-1} H^* X\right)^\top = \B^{-1} H^* X + X H^* \B^{-1} \ep{,} \\
\Psi_\beta(X) &= \left(I - \beta \B^{-1} H^*\right) X \left(I - \beta \B^{-1} H^*\right)^\top = I - \beta \Xi (X) + \beta^2 \Omega (X) \ep{,}
\end{align*}
to which we add
\begin{align*}
\Omega (X) &= \B^{-1} H^* X \left(\B^{-1} H^*\right)^\top = \B^{-1} H^* X H^* \B^{-1} \ep{.}
\end{align*}
We note that
\begin{align*}
\Xi (\Omega (X)) &= \B^{-1} H^* \left(\B^{-1} H^* X H^* \B^{-1}\right) + \left(\B^{-1} H^* X H^* \B^{-1}\right) H^* \B^{-1} \\
&= \B^{-1} H^* \left(\B^{-1} H^* X  + X H^* \B^{-1}\right) H^* \B^{-1} \\
&= \Omega (\Xi (X))
\end{align*}
and so $\Omega$ and $\Xi$ commute as operators. And because the remaining operators defined above are all linear combinations of these, it follows that they all commute with each other.

According to eqn.~\ref{eqn:general_iter} we have
\begin{eqnarray}
\label{eqn:thetak_exrp}
  \theta_{k + 1} - \theta^{\ast} & = & \theta_k - \alpha_k B^{- 1} g_k
  (\theta_k) - \theta^{*} \notag \\
  & = & \theta_k - \theta^{*} - \alpha_k B^{- 1} (\nabla h(\theta_{k}) + \epsilon_k(\theta_k)) \notag \\
  & = & \theta_k - \theta^{\ast} - \alpha_k B^{- 1} H^{\ast} (\theta_k -
  \theta^{*}) - \alpha_k B^{- 1} \epsilon_k(\theta_k) \notag \\
  & = & (I - \alpha_k B^{- 1} H^{\ast}) (\theta_k - \theta^{*}) - \alpha_k
  B^{- 1} \epsilon_k(\theta_k) \ep{,}
\end{eqnarray}
where we have defined $\epsilon_k(\theta_k) = g_k (\theta_k) - \nabla h(\theta_{k})$ and used $\nabla h(\theta_{k}) = {H^*} (\theta_{k} - \theta^*)$. Taking the expectation of both sides while using $\ex \left[ \epsilon_k(\theta_k) \right] = 0$ we get
\begin{eqnarray*}
\ex[\theta_{k + 1}] - \theta^{\ast} = (I - \alpha_k B^{- 1} H^{\ast}) (\ex[\theta_k] - \theta^{*}) \ep{.}
\end{eqnarray*}
Iterating this we get
\begin{align}
\ex[\theta_k] - \theta^* = \prod_{j=0}^{k-1} \left(I - \alpha_j B^{-1} {H^*} \right) (\theta_0 - \theta^*) \ep{.} \label{eqn:exthetak_exrp}
\end{align}

Then observing that $\var(\theta_k - \theta^*) = \var(\theta_k) = V_k$ for all $k$, and exploiting the uncorrelatedness of $\theta_k$ and $\epsilon_k(\theta_k)$, we can take the variance of both sides of eqn.~\ref{eqn:thetak_exrp} to get that $V_k$ evolves according to
\begin{align*}
V_{k+1} &= \var((I - \alpha_k \B^{-1} H^*)(\theta_k - \theta^*)) + \var(\alpha_k \B^{-1} \epsilon_k(\theta_k)) \\
&= (I - \alpha_k \B^{-1} H^*) V_k (I - \alpha_k \B^{-1} H^*)^\top + \alpha_k^2 \B^{-1} \Sigma_g \B^{-1} \\
&= \Lambda_k (V_k) + \alpha_k^2 U \ep{,}
\end{align*}
where we have defined
\begin{align*}
U &= \B^{-1} \Sigma_g \B^{-1} \ep{, and} \\
\Lambda_k &= \Psi_{\alpha_k} \ep{.}
\end{align*}
This recursion for $V_k$ will be central in our analysis.

\subsection{The Constant $\alpha_k = \alpha$ Case}

In this subsection we will prove the claims made in Theorems \ref{thm:murata_replace} and \ref{thm:asymptotic_main} pertaining to the case where:
\begin{itemize}
    \item $\alpha_k = \alpha$ for a constant $\alpha$, and
    \item $\alpha \lambda_1(\B^{-1} H^*) \leq 1$.
\end{itemize}

To begin, we define the notation $\Lambda = \Psi_{\alpha}$.

\begin{corollary}
\label{cor:I_minus_Lambda_properties}
The operator $I - \Lambda$ has all positive eigenvalues and is thus invertible.  Moreover, we have 
\begin{align*}
(I - \Lambda)^{-1} = \sum_{i = 0}^{\infty} \Lambda^i \ep{.}
\end{align*}
And so if $X$ is a PSD matrix then $\left(I - \Lambda\right)^{-1}(X)$ is as well.
\end{corollary}

\begin{proof}
Let $D = I - \alpha \B^{-1} H^*$, so that $\Lambda(X) = D X D^\top$.

The eigenvalues of $\B^{-1} H^*$ are the same as those of $\B^{-1/2} H^* \B^{-1/2}$ (since the matrices differ by a similarity transform), and are thus real-valued and positive. Moreover, the minimum eigenvalue of $D$ is $\lambda_n(D) = 1 - \alpha \lambda_1(\B^{-1} H^*) \geq 0$, and the maximum eigenvalue is $\lambda_1(D) = 1 - \alpha \lambda_n(\B^{-1} H^*) < 1$, where we have used $\lambda_n(\B^{-1} H^*) = \lambda_n(\B^{-1/2} H^* \B^{-1/2}) > 0$ (both $H^*$ and $\B$ are positive definite).

The claim then follows by an application of Lemma \ref{lem:op_props}.
\end{proof}

\begin{lemma}
\label{lem:variance_formula}
The variance matrix $V_k$ is given by the formula
\begin{align*}
V_k = \left(I - \Lambda^k\right)(V_\infty) \ep{,}
\end{align*}
where $V_\infty = \alpha^2(I-\Lambda)^{-1}(U)$.
\end{lemma}
\begin{proof}
By expanding the recursion for $V_k$ we have
\[ V_k = \alpha^2  \sum_{i = 0}^{k - 1} \Lambda^i (U) . \]
And Corollary \ref{cor:I_minus_Lambda_properties} tells us that
\[ (I - \Lambda)^{- 1} = \sum_{i = 0}^{\infty} \Lambda^i . \]
Thus, we can write:
\begin{eqnarray*}
  \sum_{i = 0}^{k-1} \Lambda^i & = & \sum_{i = 0}^{\infty} \Lambda^i - \sum_{i =
  k}^{\infty} \Lambda^i\\
  & = & \sum_{i = 0}^{\infty} \Lambda^i - \Lambda^k  \sum_{i = 0}^{\infty} \Lambda^i\\
  & = & (I - \Lambda^k)  (I - \Lambda)^{- 1} .
\end{eqnarray*}
We thus conclude that
\[ V_k = \alpha^2  \sum_{i = 0}^{k-1} \Lambda^i (U) = \alpha^2  (I - \Lambda^k) 
   (I - \Lambda)^{- 1} (U) = (I - \Lambda^k)  (V_{\infty}), \]
as claimed.
\end{proof}

\begin{proposition}
\label{prop:trace_formula_2}
For any appropriately sized matrix $X$ we have
\begin{align*}
\tr\left(H^* \left(I - \Lambda\right)^{-1}(X)\right) = \frac{1}{2\alpha} \tr\left( \left(\B - \frac{\alpha}{2} {H^*}\right)^{-1} \B X \B \right) \ep{.}
\end{align*}

\end{proposition}

\begin{proof}

Let $Y = \left(I - \Lambda\right)^{-1}(X)$, so that $\left(I - \Lambda\right)(Y) = X$. Written as a matrix equation this is
\begin{align*}
\alpha \B^{-1} {H^*} Y + \alpha Y {H^*} \B^{-1} - \alpha^2 \B^{-1} {H^*} Y {H^*} \B^{-1} = X \ep{.}
\end{align*}

Left and right multiplying both sides by $\B$ we have
\begin{align*}
\alpha {H^*} Y \B + \alpha \B Y {H^*} - \alpha^2 {H^*} Y {H^*} = \B X \B \ep{.}
\end{align*}
This can be written as $A^\top P + PA + Q = 0$, where
\begin{align*}
A &= Y {H^*} \\
P &= \alpha\left(\B - \frac{\alpha}{2} {H^*}\right) \\
Q &= -\B X \B \ep{.}
\end{align*}

In order to compute $\tr({H^*} Y)$ we can thus apply Lemma \ref{lemma:pseudo-CALE}. However, we must first verify that our $P$ is invertible. To this end we will show that $\B - \frac{\alpha}{2} {H^*}$ is positive definite.  This is equivalent to the condition that $B^{-1/2}(\B - \frac{\alpha}{2} {H^*})B^{-1/2} = I - \frac{\alpha}{2} B^{-1/2} {H^*} B^{-1/2}$ is positive definite, or in other words that $\lambda_1(B^{-1/2} {H^*} B^{-1/2}) = \lambda_1(B^{-1} {H^*}) < 2$.  This is true by hypothesis (indeed, we are assuming $\alpha \lambda_1(\B^{-1} {H^*}) < 1$).

By Lemma \ref{lemma:pseudo-CALE} it thus follows that
\begin{align*}
\tr({H^*} Y) = \tr(Y {H^*}) = \tr(A) = -\frac{1}{2}\tr(P^{-1}Q) = \frac{1}{2\alpha} \tr\left( \left(\B - \frac{\alpha}{2} {H^*}\right)^{-1} \B X \B \right) \ep{.}
\end{align*}

\end{proof}

\begin{lemma}
The expected objective value satisfies
\begin{align*}
l(k) \leq  \ex[h(\theta_k)] - h(\theta^*) \leq u(k) \ep{,}
\end{align*}
where
\begin{align*}
u(k) &= \left[ 1 - \left(1 - \epsilon_1 \right)^{2k} \right] \frac{\alpha}{4} \tr\left( \left(\B - \frac{\alpha}{2} {H^*}\right)^{-1} \Sigma_g \right) + \left(1 - \epsilon_2 \right)^{2k} h(\theta_0)
\end{align*}
and
\begin{align*}
l(k) &= \left[ 1 - \left(1 - \epsilon_2 \right)^{2k} \right] \frac{\alpha}{4} \tr\left( \left(\B - \frac{\alpha}{2} {H^*}\right)^{-1} \Sigma_g \right) + \left(1 - \epsilon_1 \right)^{2k} h(\theta_0) \ep{,}
\end{align*}
with $\epsilon_1 = \lambda_1(C) = \alpha \lambda_1\left(B^{-1} {H^*}\right)$ and $\epsilon_2 = \lambda_n(C) = \alpha \lambda_n\left(B^{-1} {H^*}\right)$.

\end{lemma}

\begin{proof}
Note that for any appropriately sized matrix $X$ we have
\begin{align*}
{H^*}^{1/2} &\Lambda (X) \: {H^*}^{1/2} = {H^*}^{1/2} \left( (I - \alpha B^{-1} {H^*}) X (I - \alpha {H^*} B^{-1}) \right) \: {H^*}^{1/2} \\
&= \left( (I - \alpha {H^*}^{1/2} B^{-1} {H^*}^{1/2}) {H^*}^{1/2}X{H^*}^{1/2} (I - \alpha {H^*}^{1/2} B^{-1}{H^*}^{1/2}) \right) \\
&= \tilde{\Lambda}\left( {H^*}^{1/2}X{H^*}^{1/2} \right) \ep{,}
\end{align*}
where we have defined
\begin{align*}
\tilde{\Lambda}(Y) = (I - C) Y (I - C)^\top = (I - C) Y (I - C)
\end{align*}
with
\begin{align*}
C = \alpha {H^*}^{1/2} B^{-1} {H^*}^{1/2} \ep{.}
\end{align*}

Applying this repeatedly we obtain
\begin{align*}
{H^*}^{1/2} \Lambda^k (X) \: {H^*}^{1/2} = \tilde{\Lambda}^k ({H^*}^{1/2} X {H^*}^{1/2}) \ep{.}
\end{align*}

Then, using the expression for $V_k$ from Lemma \ref{lem:variance_formula}, it follows that
\begin{align}
{H^*}^{1/2} V_k {H^*}^{1/2} &= {H^*}^{1/2} \left( V_\infty - \Lambda^k (V_\infty) \right)  {H^*}^{1/2} \notag \\
&= {H^*}^{1/2} \left( I - \Lambda^k \right)(V_\infty)  {H^*}^{1/2} \notag \\
&= \left(I - \tilde{\Lambda}^k\right) ({H^*}^{1/2} V_\infty {H^*}^{1/2}) \ep{.} \label{eqn:HVkH}
\end{align}

And thus
\begin{align}
\frac{1}{2} \tr \left( {H^*} V_k \right) &= \frac{1}{2} \tr \left( {H^*}^{1/2} V_k {H^*}^{1/2} \right) \notag \\
&= \frac{1}{2} \tr\left( {H^*}^{1/2} V_\infty {H^*}^{1/2} \right ) - \frac{1}{2} \tr\left( \tilde{\Lambda}^k ({H^*}^{1/2} V_\infty {H^*}^{1/2}) \right) \ep{.} \label{eqn:obj_expression_fixeda_2}
\end{align}

Next, we observe that for any matrix $X$
\begin{align}
\label{eqn:XiC_ident}
\tr\left(\tilde{\Lambda}^k (X)\right) = \tr\left( (I - C)^k X (I - C)^k \right) = \tr\left( (I - C)^{2k} X \right) \ep{.}
\end{align}

Because the eigenvalues of a product of square matrices are invariant under cyclic permutation of those matrices, we have $\lambda_1(C) = \lambda_1( \alpha  {H^*}^{1/2} B^{-1} {H^*}^{1/2}) = \alpha \lambda_1( B^{-1} {H^*} ) \leq 1$ so that $I - C$ is PSD, and it thus follows that $\lambda_i((I - C)^{2k}) = (1 - \lambda_{n-i+1}(C))^{2k}$.  Then, assuming that $X$ is also PSD, we can use Lemma \ref{lemma:wang_basic_bound} to get
\begin{align*}
(1 - \lambda_1(C))^{2k} \tr(X) \leq \tr\left((I - C)^{2k} X\right) \leq (1 - \lambda_n(C))^{2k} \tr(X) \ep{.}
\end{align*}

Applying this to eqn.~\ref{eqn:obj_expression_fixeda_2} we thus have the upper bound
\begin{align}
\frac{1}{2} \tr \left( {H^*} V_k \right) \leq \left(1 - (1 - \lambda_1(C))^{2k}\right) \frac{1}{2} \tr({H^*} V_\infty) \ep{,} \label{eqn:trHVk_upper}
\end{align}
and the lower bound
\begin{align}
\frac{1}{2} \tr \left( {H^*} V_k \right) \geq \left(1 - (1 - \lambda_n(C))^{2k}\right) \frac{1}{2} \tr({H^*} V_\infty) \ep{.} \label{eqn:trHVk_lower}
\end{align}


From Lemma \ref{lem:variance_formula}, $V_\infty$ is given by $V_\infty = \alpha^2(I-\Lambda)^{-1} (B^{-1} \Sigma_g B^{-1})$. Thus, by Proposition \ref{prop:trace_formula_2} we have
\begin{align}
\label{eqn:trHVinf}
\tr({H^*}V_\infty ) = \alpha^2 \frac{1}{2\alpha} \tr\left( \left(\B - \frac{\alpha}{2} {H^*}\right)^{-1} \B \B^{-1} \Sigma_g \B^{-1} \B \right) \notag \\
= \frac{\alpha}{2} \tr\left( \left(\B - \frac{\alpha}{2} {H^*}\right)^{-1} \Sigma_g \right) \ep{.}
\end{align}

\vspace{0.3in}

Next, we will compute/bound the term $\tr\left( {H^*} (\ex[\theta_k] - \theta^*)(\ex[\theta_k] - \theta^*)^\top \right)$. 

From eqn.~\ref{eqn:exthetak_exrp} we have
\begin{align*}
\ex[\theta_k] - \theta^* &= (I - \alpha B^{-1} {H^*})^k (\theta_0 - \theta^*) \ep{.}
\end{align*}
Then observing
\begin{align*}
{H^*}^{1/2} (I - \alpha B^{-1} {H^*}) = \left(I - \alpha {H^*}^{1/2} B^{-1} {H^*}^{1/2}\right) {H^*}^{1/2} = \left(I - C\right) {H^*}^{1/2}
\end{align*}
and applying this recursively, it follows that
\begin{align}
{H^*}^{1/2} (\ex[\theta_k] - \theta^*) &= {H^*}^{1/2} (I - \alpha B^{-1} {H^*})^k (\theta_0 - \theta^*) \notag \\
&= \left(I - C\right)^k  {H^*}^{1/2} (\theta_0 - \theta^*) \ep{.} \label{eqn:errexp_1}
\end{align}
Thus we have
\begin{align*}
\frac{1}{2} \tr \left( {H^*} (\ex[\theta_k] - \theta^*)(\ex[\theta_k] - \theta^*)^\top \right) &= \frac{1}{2} \tr \left( {H^*}^{1/2} (\ex[\theta_k] - \theta^*)(\ex[\theta_k] - \theta^*)^\top {H^*}^{1/2} \right)\\
&= \frac{1}{2} \tr \left(  \left(I - C \right)^k {H^*}^{1/2} (\theta_0 - \theta^*)(\theta_0 - \theta^*)^\top {H^*}^{1/2} {\left(I - C \right)^k}^\top  \right) \\
&= \frac{1}{2} \tr \left(  \left(I - C \right)^{2k}  \left ({H^*}^{1/2} (\theta_0 - \theta^*)(\theta_0 - \theta^*)^\top {H^*}^{1/2} \right) \right) \ep{.}
\end{align*}

Applying Lemma \ref{lemma:wang_basic_bound} in a similar manner to before, and using the fact that
\begin{align*}
h(\theta) = \frac{1}{2} (\theta-\theta^*)^\top H^* (\theta-\theta^*) = \frac{1}{2} \tr \left( {H^*}^{1/2} (\theta_0 - \theta^*)(\theta_0 - \theta^*)^\top {H^*}^{1/2} \right) \ep{,}
\end{align*}
we have the upper bound
\begin{align}
\label{eqn:trHdiff2_upper}
\frac{1}{2} \tr \left( {H^*} (\ex[\theta_k] - \theta^*)(\ex[\theta_k] - \theta^*)^\top \right) \leq \left(1 - \lambda_n\left(C\right)\right)^{2k} h(\theta_0) \ep{,}
\end{align}
and the lower bound
\begin{align}
\label{eqn:trHdiff2_lower}
\frac{1}{2} \tr \left( {H^*} (\ex[\theta_k] - \theta^*)(\ex[\theta_k] - \theta^*)^\top \right) \geq \left(1 - \lambda_1\left(C \right)\right)^{2k} h(\theta_0) \ep{.}
\end{align}

Combining eqn.~\ref{eqn:obj_expression}, eqn.~\ref{eqn:trHVk_upper}, eqn.~\ref{eqn:trHVk_lower}, eqn.~\ref{eqn:trHVinf}, eqn.~\ref{eqn:trHdiff2_upper}, and eqn.~\ref{eqn:trHdiff2_lower} yields the claimed bound.

\end{proof}

\subsection{The $\alpha_k = 1/(k+a+1)$ Case}

In this subsection we will prove the claims made in Theorems \ref{thm:murata_replace} and \ref{thm:asymptotic_main} pertaining to the case where
\begin{itemize}
    \item $\alpha_k = 1/(k+a+1)$,
    \item $b = \lambda_n\left( B^{-1} {H^*} \right) > 1/2$, and
    \item $\lambda_1\left( B^{-1} {H^*} \right) \leq a+1$.
\end{itemize}

\begin{corollary}
\label{cor:Xi_minus_I_properties}
The operator $\Xi - I$ has all positive eigenvalues and is thus invertible.  Moreover, if $X$ is a PSD matrix then $\left(\Xi - I\right)^{-1}(X)$ is as well.
\end{corollary}

\begin{proof}
We note that the operator $\Omega(X) = \B^{-1} H^* X H^* \B^{-1}$ is represented by the matrix $\B^{-1} H^* \otimes \B^{-1} H^*$. Because Kronecker products respect eigenvalue decompositions, the eigenvalues of this matrix are given by $\{ \lambda_i(\B^{-1} H^*) \lambda_j(\B^{-1} H^*) \: | \: 0 \leq i, j\leq n\}$, and are thus all positive.  (Because both $H^*$ and $\B$ are positive definite, the eigenvalues of $\B^{-1} H^*$ are all positive.)  Thus, $\Omega^{-1}$ exists and has all positive eigenvalues. And observing that $(\Omega^{-1})(X) = {H^*}^{-1} \B X \B {H^*}^{-1}$, we see that $\Omega^{-1}$ preserves PSD matrices.

Define the operator $\Phi(X) = \Omega^{-1} \left( (\Xi - I)(X) \right)$.  Because $\Omega^{-1}$ has all positive eigenvalues and commutes with $\Xi$, $\Phi$ will have all positive eigenvalues if and only if $\Xi - I$ does.  Moreover, because $\Omega^{-1}$ is a bijection from the set of PSD matrices to itself, $\Phi^{-1}$ will map PSD matrices to PSD matrices if and only if $(\Xi - I)^{-1}$ does.

With these facts in hand it suffices to prove our various claims about the operator $\Phi$.

Note that
\begin{align*}
\Omega^{-1} \left( (\Xi - I)(X) \right) &= \Omega^{-1} \left( \B^{-1} H^* X + X H^* \B^{-1} - X \right) \\
&= X \B {H^*}^{-1}  +  {H^*}^{-1} \B X - {H^*}^{-1} \B X \B {H^*}^{-1} \\
&= X - (I - {H^*}^{-1} \B) X (I - {H^*}^{-1} \B)^\top \\
&= X - D X D^\top \ep{,}
\end{align*}
where we have defined $D = I - {H^*}^{-1} \B$.

The eigenvalues of ${H^*}^{-1} \B$ are the same as those of $\B^{1/2} {H^*}^{-1} \B^{1/2}$ (since the matrices differ by a similarity transform), and are thus real-valued and positive.  Moreover, they are the inverse of those of $\B^{-1} H^*$. So the minimum eigenvalue of $D$ is $\lambda_n(D) = 1 - \lambda_1({H^*}^{-1} \B) = 1 - 1/\lambda_n(\B^{-1} {H^*} ) \geq 1 - 1/0.5 = -1$.  And the maximum eigenvalue of $D$ is just $\lambda_n(D) = 1 - \lambda_n({H^*}^{-1} \B) \leq 1$.

Lemma \ref{lem:op_props} is therefore applicable to $\Phi$ and the claim follows.
\end{proof}

\begin{lemma}
\label{lemma:var_expression_decay}
For all $k \geq 0$
\begin{align*}
V_k = \frac{1}{k+a} \left( \Xi - I\right)^{-1}(U) + E_k \ep{,}
\end{align*}
where $E_k$ is a matrix-valued ``error" defined by the recursion
\begin{align*}
E_{k+1} = \Lambda_k (E_k) + \frac{1}{(k+a)(k+a+1)^2} Z \quad \quad E_0 = -\frac{1}{a} \left( \Xi - I\right)^{-1}(U) \ep{,}
\end{align*}
with $Z = \left(\Xi - I\right)^{-1} \Psi_1 (U)$.
\end{lemma}

\begin{proof}
We will proceed by induction on $k$.

Observe that $V_0 = \var(\theta_0) = 0$, and that this agrees with our claimed expression for $V_k$ when evaluated at $k=0$:
\begin{align*}
V_0 &= \frac{1}{0+a} \left(\Xi - I\right)^{-1}(U) + E_0 \\
&= \frac{1}{a} \left(\Xi - I\right)^{-1}(U) - \frac{1}{a} \left(\Xi - I\right)^{-1}(U) = 0 \ep{.}
\end{align*}

For the inductive case, suppose that $V_k = \frac{1}{k+a} \left( \Xi - I\right)^{-1}(U) + E_k$ for some $k$. 

Observe that for any $i > 0$
\begin{align*}
\frac{1}{i} - \frac{1}{i(i+1)} = \frac{(i+1) - 1}{i(i+1)} = \frac{i}{i(i+1)} = \frac{1}{i+1} \ep{,}
\end{align*}
from which it follows that
\begin{align*}
\frac{1}{k+a} - \frac{1}{(k+a)(k+a+1)} &= \frac{1}{k+a+1} \ep{,}
\end{align*}
and thus also
\begin{align*}
\frac{1}{(k+a)(k+a+1)} - \frac{1}{(k+a)(k+a+1)^2} &= \frac{1}{(k+a+1)^2} \ep{.}
\end{align*}

By definition, we have $\Psi_1 = I - \Xi + \Omega$, which implies
\begin{align*}
\left(\Xi - I\right)^{-1} \Psi_1 = \left(\Xi - I\right)^{-1} \Omega - I \ep{,}
\end{align*}
and so $Z = \left(\left(\Xi - I\right)^{-1} \Omega - I\right)(U)$.

Using the above expressions and the fact that the various linear operators commute we have
\begin{align*}
V_{k+1} &= \Lambda_k (V_k) + \alpha_k^2 U \\
&= \Lambda_k (V_k) + \frac{1}{(k+a+1)^2} U \\
&= \Lambda_k \left(\frac{1}{k+a} \left( \Xi - I\right)^{-1}(U) + E_k\right) + \frac{1}{(k+a+1)^2} U \\
&= \Lambda_k \left(\frac{1}{k+a} \left( \Xi - I\right)^{-1}(U) \right) + \left(\frac{1}{(k+a)(k+a+1)} - \frac{1}{(k+a)(k+a+1)^2}\right) (U) + \Lambda_k (E_k)\\
&= \left(\Xi - I\right)^{-1} \left(\frac{1}{k+a}\Lambda_k + \frac{1}{(k+a)(k+a+1)} \left( \Xi - I\right) - \frac{1}{(k+a)(k+a+1)^2}\Omega \right)(U) \\
&\quad \quad + \frac{1}{(k+a)(k+a+1)^2}\left(\left(\Xi - I\right)^{-1}\Omega - I\right)(U) + \Lambda_k (E_k) \\
&= \left(\Xi - I\right)^{-1} \left(\frac{1}{k+a}\Lambda_k + \frac{1}{(k+a)(k+a+1)} \left( \Xi - I\right) - \frac{1}{(k+a)(k+a+1)^2}\Omega \right)(U) + E_{k+1} \ep{.}
\end{align*}

Next, observing that
\begin{align*}
\frac{1}{k+a}\Lambda_k &+ \frac{1}{(k+a)(k+a+1)} \left( \Xi - I\right) - \frac{1}{(k+a)(k+a+1)^2}\Omega \\
&= \frac{1}{k+a}I - \frac{1}{(k+a)(k+a+1)} \Xi + \frac{1}{(k+a)(k+a+1)^2} \Omega \\
&\quad \quad + \frac{1}{(k+a)(k+a+1)} \left( \Xi - I\right) - \frac{1}{(k+a)(k+a+1)^2}\Omega \\
&= \left(\frac{1}{k+a} - \frac{1}{(k+a)(k+a+1)}\right) I \\
&= \frac{1}{k+a+1} I \ep{,}
\end{align*}
our previous expression for $V_{k+1}$ simplifies to
\begin{align*}
V_{k+1} &= \left( \Xi - I\right)^{-1}\left( \frac{1}{k+a+1} I \right)(U) + E_{k+1} \\
&= \frac{1}{k+a+1}\left( \Xi - I\right)^{-1}(U) + E_{k+1} \ep{.}
\end{align*}
\end{proof}

\begin{proposition}
\label{prop:trace_formula_1}
For any appropriately sized matrix $X$ we have
\begin{align*}
\tr\left(H^* \left(\Xi - I\right)^{-1}(X)\right) = \frac{1}{2}\tr\left(\left(\B^{-1} - \frac{1}{2} {H^*}^{-1} \right)^{-1} X\right) \ep{.}
\end{align*}
\end{proposition}

\begin{proof}

Let $Y = \left(\Xi - I\right)^{-1}(X)$, so that $\left(\Xi - I\right)(Y) = X$. Written as a matrix equation this is
\begin{align*}
\B^{-1} {H^*} Y + Y {H^*} \B^{-1} - Y = X \ep{.}
\end{align*}
And rearranging this becomes
\begin{align*}
Y {H^*} \left(\B^{-1} - \frac{1}{2}{H^*}^{-1} \right) + \left(\B^{-1} - \frac{1}{2} {H^*}^{-1} \right) {H^*} Y - X = 0 \ep{,}
\end{align*}
which is of the form $A^\top P + PA + Q = 0$ with
\begin{align*}
A &= {H^*} Y \\
P &= \left(\B^{-1} - \frac{1}{2} {H^*}^{-1} \right) \\
Q &= -X \ep{,} 
\end{align*}

In order to compute $\tr({H^*} Y)$ we can thus apply Lemma \ref{lemma:pseudo-CALE}. However, we must first verify that our $P$ is invertible. To this end we will show that $\B^{-1} - \frac{1}{2} {H^*}^{-1}$ is positive definite.  This is equivalent to the condition that ${H^*}^{1/2}(\B^{-1} - \frac{1}{2} {H^*}^{-1}){H^*}^{1/2} = {H^*}^{1/2}\B^{-1}{H^*}^{1/2} - \frac{1}{2}I$ is positive definite, or in other words that $\lambda_n({H^*}^{1/2}\B^{-1}{H^*}^{1/2}) = \lambda_n(\B^{-1} {H^*}) > 1/2$, which is true by hypothesis.

Thus Lemma \ref{lemma:pseudo-CALE} is applicable, and yields
\begin{align*}
\tr({H^*} Y) &= \tr(A) = -\frac{1}{2}\tr(P^{-1}Q) = \frac{1}{2}\tr\left(\left(\B^{-1} - \frac{1}{2} {H^*}^{-1} \right)^{-1} X\right) \ep{.}
\end{align*}

\end{proof}

\begin{lemma}
\label{lem:tr_H_E_k_bound}
For $E_k$ as defined in Lemma \ref{lemma:var_expression_decay} we have the following upper and lower bounds:
\begin{align*}
\tr(H^* E_k) \leq \frac{\nu(a) k}{2(k+a)^3} \tr\left(\left(\B^{-1} - \frac{1}{2} {H^*}^{-1} \right)^{-1} \Psi_1(U) \right)
\end{align*}
and
\begin{align*}
\tr(H^* E_k) \geq -\frac{1}{2a} \left(\frac{a+1}{k+a}\right)^{2b} \tr\left(\left(\B^{-1} - \frac{1}{2} {H^*}^{-1} \right)^{-1} U\right) \ep{,}
\end{align*}
where $\nu(a) = (a+2)^3/(a(a+1)^2)$.
\end{lemma}

\begin{proof}

Recall that $E_0 = -\frac{1}{a} \left( \Xi - I\right)^{-1}(U)$ and $Z = \left(\Xi - I\right)^{-1} \Psi_1 (U)$. 

Observing that $\Psi_1$ preserves PSD matrices, that $U$ is PSD, and that if a linear operator preserves PSD matrices it also preserves negative semi-definite (NSD) matrices (which follows directly from linearity), we can then apply Corollary \ref{cor:Xi_minus_I_properties} to get that $E_0$ is NSD and $Z$ is PSD.

Define $F_k$ by 
\begin{align*}
F_{k+1} = \Lambda_k (F_k) + \frac{1}{(k+a)(k+a+1)^2} Z \quad \quad F_0 = 0 \ep{,}
\end{align*}
and $D_k$ by
\begin{align*}
D_{k+1} = \Lambda_k (D_k) \quad \quad D_0 = E_0 \ep{.}
\end{align*}

Because $\Lambda_k$ preserves PSD (and NSD) matrices like $\Psi_1$ does, and PSDness is preserved under non-negative linear combinations, it's straightforward to show that  $D_k \preceq E_k \preceq F_k$ and $F_k \succeq 0 \succeq D_k$ for all $k$ (where $X \preceq Y$ means that $Y-X$ is PSD). It thus follows that $\tr(H^* D_k) \leq \tr(H^* E_k) \leq \tr(H^* F_k)$.

Note that for any appropriately sized matrix $X$ we have
\begin{align*}
{H^*}^{1/2} \Lambda_k (X) \: {H^*}^{1/2} &= {H^*}^{1/2} \left( (I - \alpha_k B^{-1} {H^*}) X (I - \alpha_k {H^*} B^{-1}) \right) \: {H^*}^{1/2} \notag \\
&= \left( (I - \alpha_k {H^*}^{1/2} B^{-1} {H^*}^{1/2}) {H^*}^{1/2}X{H^*}^{1/2} (I - \alpha_k {H^*}^{1/2} B^{-1}{H^*}^{1/2}) \right) \notag \\
&= \tilde{\Lambda}_k\left( {H^*}^{1/2}X{H^*}^{1/2} \right) \ep{,}
\end{align*}
where we have defined
\begin{align*}
\tilde{\Lambda}(Y) = (I - C_k) Y (I - C_k)^\top = (I - C_k) Y (I - C_k)
\end{align*}
with
\begin{align*}
C_k = \alpha_k {H^*}^{1/2} B^{-1} {H^*}^{1/2} \ep{.}
\end{align*}

It thus follows that
\begin{align*}
\tr(H^* \Lambda_k(X)) &= \tr\left({H^*}^{1/2} \Lambda_k(X) {H^*}^{1/2}\right) \\
&= \tr\left((I - C_k) {H^*}^{1/2} X {H^*}^{1/2}(I - C_k) \right) \\
&= \tr\left((I - C_k)^2 {H^*}^{1/2} X {H^*}^{1/2} \right) \ep{.}
\end{align*}

Because the eigenvalues of a product of square matrices are invariant under cyclic permutation of those matrices, we have $\lambda_1(C_k) = \lambda_1( \alpha_k {H^*}^{1/2} B^{-1} {H^*}^{1/2}) = \alpha_k \lambda_1( B^{-1} {H^*} ) \leq \frac{1}{a+1} \lambda_1( B^{-1} {H^*} ) \leq 1$ so that $I - C_k$ is PSD, and it thus follows that $\lambda_i((I - C_k)^{2}) = (1 - \lambda_{n-i+1}(C_k))^{2}$.  Then assuming $X$ is also PSD we can use Lemma \ref{lemma:wang_basic_bound} to get
\begin{align*}
\tr\left((I - C_k)^2 {H^*}^{1/2} X {H^*}^{1/2} \right) &\leq \lambda_1\left((I - C_k)^2\right) \tr({H^*}^{1/2} X {H^*}^{1/2}) \\
&= \left(1 - \frac{b}{k+a+1} \right)^2 \tr(H^* X) \ep{,}
\end{align*}
where we have defined $b = \lambda_n(B^{-1} H^*)$.

From this it follows that
\begin{align*}
\tr(H^* F_{k+1}) &= \tr\left(H^* \left(\Lambda_k (F_k) + \frac{1}{(k+a)(k+a+1)^2} Z \right)\right) \\
&= \tr(H^* \Lambda_k(F_k)) + \frac{1}{(k+a)(k+a+1)^2} \tr(H^* Z) \\
&\leq \left(1 - \frac{b}{k+a+1} \right)^2 \tr(H^* F_k) + \frac{1}{(k+a)(k+a+1)^2} \tr(H^* Z) \ep{.}
\end{align*}

Iterating this inequality and using the fact that $F_0 = 0$ we thus have
\begin{align*}
\tr(H^* F_k) &\leq \tr(H^* F_0) \prod_{j=0}^{k-1} \left(1 - \frac{b}{j+a+1} \right)^2 \\
&\quad \quad + \tr(H^* Z) \sum_{i=0}^{k-1} \frac{1}{(i+a)(i+a+1)^2} \prod_{j=i+1}^{k-1} \left(1 - \frac{b}{j+a+1} \right)^2 \\
&= \tr(H^* Z) \sum_{i=0}^{k-1} \frac{1}{(i+a)(i+a+1)^2} \prod_{j=i+1}^{k-1} \left(1 - \frac{b}{j+a+1} \right)^2 
\ep{.}
\end{align*}
Then applying Proposition \ref{prop:prod_ineq} we can further upper bound this by
\begin{align*}
\frac{\nu(a) k}{(k+a)^3} \tr(H^* Z) \ep{,}
\end{align*}
where $\nu(a) = (a+2)^3/(a(a+1)^2)$.

Recalling the definition $Z = \left(\Xi - I\right)^{-1} \Psi_1 (U)$ and applying Proposition \ref{prop:trace_formula_1}, we have
\begin{align*}
\tr\left(H^* Z\right) &= \tr\left(H^* \left(\Xi - I\right)^{-1} \Psi_1 (U)\right) = \frac{1}{2}\tr\left(\left(\B^{-1} - \frac{1}{2} {H^*}^{-1} \right)^{-1} \Psi_1(U) \right) \ep{.}
\end{align*}

And so in summary we have
\begin{align*}
\tr(H^* E_k) \leq \tr(H^* F_k) \leq \frac{\nu(a) k}{2(k+a)^3} \tr\left(\left(\B^{-1} - \frac{1}{2} {H^*}^{-1} \right)^{-1} \Psi_1(U) \right) \ep{.}
\end{align*}

It remains to establish the claimed lower bound. 

For NSD matrices $X$, Corollary \ref{cor:wang_basic_bound} tells us that 
\begin{align*}
\tr\left((I - C_k)^2 {H^*}^{1/2} X {H^*}^{1/2} \right) &\geq \lambda_1\left((I - C_k)^2\right) \tr({H^*}^{1/2} X {H^*}^{1/2}) \\
&= \left(1 - \frac{b}{k+a+1} \right)^2 \tr(H^* X) \ep{.}
\end{align*}
From this it follows that
\begin{align*}
\tr(H^* D_{k+1}) = \tr\left(H^* \left(\Lambda_k (D_k)  \right)\right) \geq \left(1 - \frac{b}{k+a+1} \right)^2 \tr(H^* D_k) \ep{.}
\end{align*}
Iterating this inequality and using the fact that $D_0 = E_0$ we thus have
\begin{align*}
\tr(H^* D_k) &\geq \tr(H^* D_0) \prod_{j=0}^{k-1} \left(1 - \frac{b}{j+a+1} \right)^2 \\
&= \tr(H^* E_0) \prod_{j=0}^{k-1} \left(1 - \frac{b}{j+a+1} \right)^2
\ep{.}
\end{align*}
Then applying Proposition \ref{prop:prod_ineq} we can further lower bound this by
\begin{align*}
\left(\frac{a+1}{k+a}\right)^{2b} \tr(H^* E_0) \ep{.}
\end{align*}

By the definition of $E_0$ and Proposition \ref{prop:trace_formula_1} we have
\begin{align*}
\tr(H^* E_0) = -\frac{1}{a} \tr\left(H^* \left(\Xi - I\right)^{-1}(U)\right) = - \frac{1}{2a}\tr\left(\left(\B^{-1} - \frac{1}{2} {H^*}^{-1} \right)^{-1} U\right) \ep{,}
\end{align*}
and so in summary our lower bound is
\begin{align*}
\tr(H^* E_k) \geq \tr(H^* D_k) \geq -\frac{1}{2a} \left(\frac{a+1}{k+a}\right)^{2b} \tr\left(\left(\B^{-1} - \frac{1}{2} {H^*}^{-1} \right)^{-1} U\right) \ep{.}
\end{align*}

\end{proof}

\begin{lemma}
The expected objective value satisfies
\begin{align*}
l(k) \leq  \ex[h(\theta_k)] - h(\theta^*) \leq u(k) \ep{,}
\end{align*}
where
\begin{align*}
u(k) = \frac{1}{4(k+a)} \tr\left(\left(\B^{-1} - \frac{1}{2} {H^*}^{-1} \right)^{-1} U\right) &+ \frac{\nu(a) k}{4(k+a)^3} \tr\left(\left(\B^{-1} - \frac{1}{2} {H^*}^{-1} \right)^{-1} \Psi_1(U) \right) \\
&\quad + h(\theta_0) \left(\frac{a+1}{k+a}\right)^{2 b}
\end{align*}
and
\begin{align*}
l(k) = \frac{1}{4(k+a)} \tr\left(\left(\B^{-1} - \frac{1}{2} {H^*}^{-1} \right)^{-1} U\right) - \frac{1}{4a} \left(\frac{a+1}{k+a}\right)^{2b} \tr\left(\left(\B^{-1} - \frac{1}{2} {H^*}^{-1} \right)^{-1} U\right) \ep{,}
\end{align*}
where $\nu(a) = (a+2)^3/(a(a+1)^2)$.
\end{lemma}

\begin{proof}

From eqn.~\ref{eqn:obj_expression} we have
\begin{align}
\label{eqn:obj_expression_2}
\ex&[h(\theta_k)] - h(\theta^*) = \frac{1}{2} \tr \left( {H^*} V_k \right) + \frac{1}{2} \tr\left( {H^*} (\ex[\theta_k] - \theta^*)(\ex[\theta_k] - \theta^*)^\top \right) \ep{.} 
\end{align}

By the expression for $V_k$ from Lemma \ref{lemma:var_expression_decay} we have that
\begin{align}
\frac{1}{2} \tr \left( {H^*} V_k \right) &= \frac{1}{2} \tr \left( {H^*} \left( \frac{1}{k+a} \left( \Xi - I\right)^{-1}(U) + E_k \right) \right)  \notag \\
&= \label{eqn:trHVk_2}
\frac{1}{2(k+a)} \tr \left( {H^*} \left( \left( \Xi - I\right)^{-1}(U) \right)\right) + \frac{1}{2}\tr\left(H^* E_k \right) \ep{.}
\end{align}

Applying Proposition \ref{prop:trace_formula_1} we have that the first term above is given by
\begin{align}
\label{eqn:term1}
\frac{1}{2(k+a)} \tr \left( {H^*} \left( \left( \Xi - I\right)^{-1}(U) \right)\right) = \frac{1}{4(k+a)} \tr\left(\left(\B^{-1} - \frac{1}{2} {H^*}^{-1} \right)^{-1} U\right) \ep{.} 
\end{align}

And by Lemma \ref{lem:tr_H_E_k_bound}, the second term is lower and upper bounded as follows:
\begin{align}
\label{eqn:tr_H_E_k_bound}
-\frac{1}{4a} \left(\frac{a+1}{k+a}\right)^{2b} \tr\left(\left(\B^{-1} - \frac{1}{2} {H^*}^{-1} \right)^{-1} U\right) &\leq \frac{1}{2} \tr(H^* E_k) \notag \\
&\quad \leq \frac{\nu(a) k}{4(k+a)^3} \tr\left(\left(\B^{-1} - \frac{1}{2} {H^*}^{-1} \right)^{-1} \Psi_1(U) \right) \ep{,}
\end{align}
where $\nu(a) = (a+2)^3/(a(a+1)^2)$.

It remains to compute/bound the term $\tr\left( {H^*} (\ex[\theta_k] - \theta^*)(\ex[\theta_k] - \theta^*)^\top \right)$.  From eqn.~\ref{eqn:exthetak_exrp} we have
\begin{align*}
\ex[\theta_k] - \theta^* = \prod_{j=0}^{k-1} \left(I - \alpha_j B^{-1} {H^*} \right) (\theta_0 - \theta^*) \ep{.}
\end{align*}

Observing
\begin{align*}
{H^*}^{1/2} (I - \alpha_i B^{-1} {H^*}) = (I - \alpha_i {H^*}^{1/2} B^{-1} {H^*}^{1/2}) {H^*}^{1/2}
\end{align*}
it follows that
\begin{align*}
{H^*}^{1/2} (\ex[\theta_k] - \theta^*) &= {H^*}^{1/2} \prod_{j=0}^{k-1} \left(I - \alpha_j B^{-1} {H^*} \right) (\theta_0 - \theta^*) \\
&= \prod_{j=0}^{k-1} \left(I - \alpha_j {H^*}^{1/2} B^{-1} {H^*}^{1/2} \right) {H^*}^{1/2} (\theta_0 - \theta^*) \\
&= \psi_k\left( {H^*}^{1/2} B^{-1} {H^*}^{1/2} \right) {H^*}^{1/2} (\theta_0 - \theta^*) \ep{,}
\end{align*}
where $\psi_k$ is a polynomial defined by
\begin{align*}
\psi_k(x) = \prod_{j=0}^{k-1} \left( 1 - \alpha_j x \right) = \prod_{j=0}^{k-1} \left( 1 - \frac{x}{j+a+1} \right)  \ep{.}
\end{align*}

In the domain $0 \leq x \leq a+1$ we have that $\psi_k(x)$ is a decreasing function of $x$. Moreover, for $x$'s in this domain Proposition \ref{prop:prod_ineq} says that
\begin{align*}
\psi_k(x) \leq \left(\frac{a+1}{k+a}\right)^{x} \ep{.}
\end{align*}
So because the eigenvalues of $\psi_k(X)$ for any matrix $X$ are given by $\{\psi_k(\lambda_i(X))\}_i$, and $\lambda_1(B^{-1}{H^*}) \leq a+1$ by hypothesis, it thus follows that 
\begin{align*}
\lambda_1\left( \psi_k\left( {H^*}^{1/2} B^{-1} {H^*}^{1/2} \right) \right) = \psi_k\left( \lambda_n\left( {H^*}^{1/2} B^{-1} {H^*}^{1/2} \right) \right) = \psi_k\left( \lambda_n\left( B^{-1} {H^*} \right) \right) \leq  \left(\frac{a+1}{k+a}\right)^b \ep{.}
\end{align*}

And so by Lemma \ref{lemma:wang_basic_bound} we have that
\begin{align}
\frac{1}{2} \tr & \left( {H^*} (\ex[\theta_k] - \theta^*)(\ex[\theta_k] - \theta^*)^\top \right) = \frac{1}{2} \tr \left( {H^*}^{1/2} (\ex[\theta_k] - \theta^*)(\ex[\theta_k] - \theta^*)^\top {H^*}^{1/2} \right) \notag \\
&= \frac{1}{2} \tr \left(  \psi_k\left( {H^*}^{1/2} B^{-1} {H^*}^{1/2} \right)^2 \left ({H^*}^{1/2} (\theta_0 - \theta^*)(\theta_0 - \theta^*)^\top {H^*}^{1/2} \right) \right) \notag  \\
&\leq \lambda_1\left( \psi_k\left( {H^*}^{1/2} B^{-1} {H^*}^{1/2} \right) \right)^2 \frac{1}{2} \tr \left ({H^*}^{1/2} (\theta_0 - \theta^*)(\theta_0 - \theta^*)^\top {H^*}^{1/2} \right) \label{eqn:crude_bound_step} \\
&= \lambda_1\left( \psi_k\left( {H^*}^{1/2} B^{-1} {H^*}^{1/2} \right) \right)^2 \frac{1}{2} (\theta_0 - \theta^*)^\top{H^*}(\theta_0 - \theta^*) \notag \\
&\leq \left(\frac{a+1}{k+a}\right)^{2 b} h(\theta_0) \notag \ep{.}
\end{align}

Combining eqn.~\ref{eqn:obj_expression_2}, eqn.~\ref{eqn:trHVk_2}, eqn.~\ref{eqn:term1}, eqn.~\ref{eqn:tr_H_E_k_bound} and the above bound the claimed result follows.

\end{proof}

\subsection{Proof of Theorem \ref{thm:averaging}}
\label{app:thm_avg_proof}

\begin{theorem*}
Suppose that $\theta_k$ is generated by the stochastic iteration in eqn.~\ref{eqn:general_iter} with constant step-size $\alpha_k = \alpha$ while optimizing a quadratic objective $h(\theta) = \frac{1}{2} (\theta - \theta^*)^\top {H^*} (\theta - \theta^*)$. Further more, suppose that $\alpha \lambda_1(B^{-1} {H^*}) < 1$, and define $\bar{\theta}_k = \frac{1}{k+1} \sum_{i = 0}^k \theta_i$. Then we have the following bound:

\begin{align*}
\ex[h(\bar{\theta}_k)] - h(\theta^*) &\leq \min \left \{\frac{1}{2(k+1)} \tr\left({H^*}^{-1} \Sigma_g\right) , \:\: \frac{\alpha}{4} \tr\left( \left(\B - \frac{\alpha}{2} {H^*}\right)^{-1} \Sigma_g \right) \right\}  \\
&\quad + \min \left \{ \frac{1}{2(k+1)^2\alpha^2} \left\| {H^*}^{-1/2} B (\theta_0 - \theta^*) \right\|^2, \right. \\ &\quad\quad\quad\quad\quad\quad \left. \frac{1}{2(k+1) \alpha} \left\| B^{1/2} (\theta_0 - \theta^*) \right\|^2, \:\: h(\theta_0) \right \} \ep{.}
\end{align*}
\end{theorem*}

\begin{proof}

To begin, we observe that, analogously to eqn.~\ref{eqn:obj_expression},
\begin{align}
\label{eqn:obj_expression_averaging}
\ex[h(\bar{\theta}_k)] - h(\theta^*) = \frac{1}{2} \tr \left( {H^*} \bar{V}_k \right) + \frac{1}{2} \tr \left( {H^*} \left(\ex[\bar{\theta}_k] - \theta^*\right)\left(\ex[\bar{\theta}_k] - \theta^*\right)^\top \right) \ep{,}
\end{align}
where
\begin{align*}
\bar{V}_k = \var(\bar{\theta}_k) = \cov(\bar{\theta}_k, \bar{\theta}_k) = \ex \left[ \left(\bar{\theta}_k - \ex[\bar{\theta}_k]\right)\left(\bar{\theta}_k - \ex[\bar{\theta}_k]\right)^\top \right ] \ep{.}
\end{align*}

Our first major task is to find an expression for $\bar{V}_k$ in order to bound the term $\frac{1}{2} \tr \left( {H^*} \bar{V}_k \right)$.  To this end we observe that
\begin{align*}
\bar{V}_k = \frac{1}{(k+1)^2} \sum_{i = 0}^k \sum_{j = 0}^k \cov( \theta_i, \theta_j ) \ep{.}
\end{align*}
For $j > i$ we have
\begin{align*}
\cov \left( \theta_i, \theta_j \right) = \cov\left( \theta_i, \theta_{j-1} - \alpha \B^{-1} g_{j-1}(\theta_{j-1}) \right) = \cov\left( \theta_i, \theta_{j-1} \right) - \alpha \cov\left( \theta_i, g_{j-1}(\theta_{j-1}) \right) \B^{-1} \ep{,}
\end{align*}
where 
\begin{align*}
\cov\left( \theta_i, g_{j-1}(\theta_{j-1}) \right) &= \ex \left[ (\theta_i - \ex[\theta_i])(g_{j-1}(\theta_{j-1}) - \ex[g_{j-1}(\theta_{j-1})])^\top \right ] \\
&= \ex_{\theta_i,\theta_{j-1}} \left[ \ex_{g_{j-1}(\theta_{j-1}) | \theta_{j-1}} \left[ (\theta_i - \ex[\theta_i])(g_{j-1}(\theta_{j-1}) - \ex[g_{j-1}(\theta_{j-1})])^\top \right ] \right ] \\
&= \ex_{\theta_i,\theta_{j-1}} \left[ (\theta_i - \ex[\theta_i])( \nabla h(\theta_{j-1}) - \ex[g_{j-1}(\theta_{j-1})])^\top \right ] \\
&= \ex \left[ (\theta_i - \ex[\theta_i])( \nabla h(\theta_{j-1}) - \ex[g_{j-1}(\theta_{j-1})])^\top \right ] \ep{.}
\end{align*}
Here we have used the fact that $g_{j-1}(\theta_{j-1})$ is conditionally independent of $\theta_i$ given $\theta_{j-1}$ for $j-1 \geq i$ (which allows us to take the conditional expectation over $g_{j-1}(\theta_{j-1})$ inside), and is an unbiased estimator of $\nabla h(\theta_{j-1})$.  

Then, noting that $\ex[g_{j-1}(\theta_{j-1})] = \ex[\nabla h(\theta_{j-1})] = \ex[ {H^*} ( \theta_{j-1} - \theta^*)] = {H^*} ( \ex[\theta_{j-1}] - \theta^*)$, we have
\begin{align*}
\nabla h(\theta_{j-1}) - \ex[g_{j-1}(\theta_{j-1})] &= {H^*} (\theta_{j-1} - \theta^*) - {H^*} ( \ex[\theta_{j-1}] - \theta^*) \\
&= {H^*} (\theta_{j-1} - \ex[\theta_{j-1}] )
\end{align*}
so that
\begin{align*}
\cov\left( \theta_i, g_{j-1}(\theta_{j-1}) \right) &= \ex \left[ (\theta_i - \ex[\theta_i])( \nabla h(\theta_{j-1}) - \ex[g_{j-1}(\theta_{j-1})] )^\top \right ] \\
&= \ex \left[ (\theta_i - \ex[\theta_i])( {H^*} (\theta_{j-1} - \ex[\theta_{j-1}] ) )^\top \right ] \\
&= \ex \left[ (\theta_i - \ex[\theta_i])(\theta_{j-1} - \ex[\theta_{j-1}] )^\top \right ] {H^*} = \cov( \theta_i, \theta_{j-1} ) {H^*} \ep{.}
\end{align*}
From this we conclude that
\begin{align*}
\cov \left( \theta_i, \theta_j \right) &= \cov\left( \theta_i, \theta_{j-1} \right) - \alpha \cov\left( \theta_i, g_{j-1}(\theta_{j-1}) \right) \B^{-1} \\
&= \cov\left( \theta_i, \theta_{j-1} \right) - \alpha \cov( \theta_i, \theta_{j-1} ) {H^*} \B^{-1} \\
&= \cov\left( \theta_i, \theta_{j-1} \right) \left( I - \alpha \B^{-1} {H^*} \right)^\top \ep{.}
\end{align*}

Applying this recursively we have that for $j \geq i$
\begin{align}
\label{eqn:avg_decorr}
\cov \left( \theta_i, \theta_j \right) = V_i {\left( I - \alpha \B^{-1} {H^*} \right)^{j-i}}^\top \ep{.}
\end{align}
Taking transposes and switching the roles of $i$ and $j$ we similarly have for $i \geq j$ that
\begin{align*}
\cov \left( \theta_i, \theta_j \right) = \left( I - \alpha \B^{-1} {H^*} \right)^{i-j} V_j \ep{.}
\end{align*}

Thus, we have the following expression for the variance $\bar{V}_k$ of the averaged parameter $\bar{\theta}_k$:
\begin{align*}
\bar{V}_k &= \frac{1}{(k+1)^2} \sum_{i = 0}^k \sum_{j = 0}^k \cov( \theta_i, \theta_j ) \\
&= \frac{1}{(k+1)^2} \sum_{i = 0}^k \left( \sum_{j = 0}^{i} \left( I - \alpha \B^{-1} {H^*} \right)^{i-j} V_j +  \sum_{j = i+1}^k V_i {\left( I - \alpha \B^{-1} {H^*} \right)^{j-i}}^\top  \right) \ep{,}
\end{align*}
which by reordering the sums and re-indexing can be written as
\begin{align*}
\bar{V}_k = \frac{1}{(k+1)^2} \sum_{i = 0}^k \left( \sum_{j = 0}^{i} \left( I - \alpha \B^{-1} {H^*} \right)^j V_i  +  \sum_{j = 1}^{k-i} V_i {\left( I - \alpha \B^{-1} {H^*} \right)^j}^\top \right) \ep{.}
\end{align*}

Having computed $\bar{V}_k$ we now deal with the term $\frac{1}{2} \tr \left( {H^*} \bar{V}_k \right)$.  Observing that
\begin{align*}
{H^*}^{1/2} (I - \alpha B^{-1} {H^*}) = \left(I - \alpha {H^*}^{1/2} B^{-1} {H^*}^{1/2}\right) {H^*}^{1/2} = \left(I - C\right) {H^*}^{1/2} \ep{,}
\end{align*}
where $C = \alpha {H^*}^{1/2} B^{-1} {H^*}^{1/2}$, we have
\begin{align*}
{H^*}^{1/2} \bar{V}_k {H^*}^{1/2} = \frac{1}{(k+1)^2} \sum_{i = 0}^k \left(  \sum_{j = 0}^{i} \left( I - C \right)^j ({H^*}^{1/2} V_i {H^*}^{1/2}) + \sum_{j = 1}^{k-i} ({H^*}^{1/2} V_i {H^*}^{1/2}) \left( I - C \right)^j \right) \ep{.}
\end{align*}

It thus follows that
\begin{align*}
\frac{1}{2} \tr \left( {H^*} \bar{V}_k \right) &= \frac{1}{2} \tr \left( {H^*}^{1/2} \bar{V}_k {H^*}^{1/2} \right) \\
&= \frac{1}{2(k+1)^2} \sum_{i = 0}^k \tr \left( \left(\sum_{j = 0}^{i} \left(I - C\right)^j + \sum_{j = 1}^{k-i} \left(I - C\right)^j \right) {H^*}^{1/2} V_i {H^*}^{1/2} \right ) \ep{.}
\end{align*}

Recall that from eqn.~\ref{eqn:HVkH} we have
\begin{align*}
{H^*}^{1/2} V_i {H^*}^{1/2} = \left(I - \tilde{\Lambda}^i\right) ({H^*}^{1/2} V_\infty {H^*}^{1/2}) \ep{,}
\end{align*}
where $\tilde{\Lambda}(Y) = (I - C) Y (I - C)^\top = (I - C) Y (I - C)$.

Plugging this into the previous equation, using the definition of $\tilde{\Lambda}$, and the fact that various powers of $C$ commute, we have
\begin{align}
\frac{1}{2} \tr &\left( {H^*} \bar{V}_k \right) = \frac{1}{2(k+1)^2} \tr \left( \sum_{i = 0}^k \left(\sum_{j = 0}^{i} \left(I - C\right)^j + \sum_{j = 1}^{k-i} \left(I - C\right)^j \right) \left(I - \tilde{\Lambda}^i\right) \left({H^*}^{1/2} V_\infty {H^*}^{1/2}\right) \right ) \notag \\
&= \frac{1}{2(k+1)^2} \tr \left( \sum_{i = 0}^k \left(\sum_{j = 0}^{i} \left(I - C\right)^j + \sum_{j = 1}^{k-i} \left(I - C\right)^j \right) \left(I - (I-C)^{2i}\right) {H^*}^{1/2} V_\infty {H^*}^{1/2} \right ) \ep{.} \label{eqn:trHbarVk_finalexpression}
\end{align}

Because $C$ and $I-C$ are PSD (which follows from the hypothesis $\lambda_1(C) = \alpha \lambda_1( B^{-1} {H^*} ) < 1$), we have the following basic matrix inequalities:
\begin{align}
\sum_{j = 0}^{i} \left(I - C\right)^j + \sum_{j = 1}^{k-i} \left(I - C\right)^j &\preceq 2\sum_{j=0}^\infty \left(I - C \right)^j - I = 2 C^{-1} - I\label{eqn:matbound_1} \\
\sum_{j = 0}^{i} \left(I - C\right)^j + \sum_{j = 1}^{k-i} \left(I - C\right)^j &\preceq (k + 1)I \label{eqn:matbound_2} \\
\sum_{i = 0}^k \left(I - \left(I - C\right)^{2i}\right) &\preceq (k+1)I \ep{,} \label{eqn:matbound_5}
\end{align}
where $X \preceq Y$ means that $Y-X$ is PSD.

As the right and left side of all the previously stated matrix inequalities are commuting matrices (because they are all linear combinations of powers of $C$, and thus share their eigenvectors with $C$), we can apply Lemma \ref{lemma:PSDtrace_bound} to eqn.~\ref{eqn:trHbarVk_finalexpression} to obtain various simplifying upper bounds on $\frac{1}{2} \tr \left( {H^*} \bar{V}_k \right)$.

Applying Lemma \ref{lemma:PSDtrace_bound} using eqn.~\ref{eqn:matbound_1} and then eqn.~\ref{eqn:matbound_5} gives the upper bound
\begin{align*}
\frac{1}{2} \tr \left( {H^*} \bar{V}_k \right) &\leq \frac{1}{2(k+1)^2} \tr \left( \left(2C^{-1} - I\right) \: (k+1)I \: {H^*}^{1/2} V_\infty {H^*}^{1/2} \right ) \\
&= \frac{1}{k+1} \tr \left( \left(\frac{1}{\alpha}\B - \frac{1}{2}{H^*} \right) \: V_\infty \right ) \ep{,}
\end{align*}
where we have used ${H^*}^{1/2} C^{-1} {H^*}^{1/2} = {H^*}^{1/2} \left(\alpha {H^*}^{1/2} \B^{-1} {H^*}^{1/2} \right)^{-1} {H^*}^{1/2} = \frac{1}{\alpha}\B$.

Or we can apply the lemma using eqn.~\ref{eqn:matbound_2} and then  eqn.~\ref{eqn:matbound_5}, which gives a different upper bound of
\begin{align*}
\frac{1}{2} \tr \left( {H^*} \bar{V}_k \right) \leq \frac{1}{2(k+1)^2} \tr \left( (k+1)I \: (k+1)I \: {H^*}^{1/2} V_\infty {H^*}^{1/2} \right ) &\leq \frac{1}{2} \tr \left( {H^*} V_\infty \right ) \\
&= \frac{\alpha}{4} \tr\left( \left(\B - \frac{\alpha}{2} {H^*}\right)^{-1} \Sigma_g \right) \ep{,}
\end{align*}
where we have used eqn.~\ref{eqn:trHVinf} on the last line.

Applying these bounds to eqn.~\ref{eqn:trHbarVk_finalexpression} yields
\begin{align}
\label{eqn:trHbarVk_finalbound}
\frac{1}{2} \tr \left( {H^*} \bar{V}_k \right) &\leq \min \left \{\frac{1}{k+1} \tr \left( \left(\frac{1}{\alpha}\B - \frac{1}{2}{H^*} \right) \: V_\infty \right ) , \:\: \frac{\alpha}{4} \tr\left( \left(\B - \frac{\alpha}{2} {H^*}\right)^{-1} \Sigma_g \right) \right\} \ep{.}
\end{align}


To compute $\tr \left( \left(\frac{1}{\alpha}\B - \frac{1}{2}{H^*} \right) V_\infty \right )$, we begin by recalling the definition $V_\infty = \alpha^2 \left(I - \Lambda\right)^{-1} (U)$.  Applying the operator $\left(I - \Lambda\right)$ to both sides gives $\left(I - \Lambda\right) (V_\infty) = \alpha^2 (U)$, which corresponds to the matrix equation
\begin{align*}
\alpha \B^{-1} {H^*} V_\infty + \alpha V_\infty {H^*} \B^{-1} - \alpha^2 \B^{-1} {H^*} V_\infty {H^*} \B^{-1} = \alpha^2 U = \alpha^2 \B^{-1} \Sigma_g \B^{-1} \ep{.}
\end{align*}
Left and right multiplying both sides by $\frac{1}{\alpha} B$ gives
\begin{align*}
\frac{1}{\alpha} {H^*} V_\infty B + \frac{1}{\alpha} B V_\infty {H^*} - {H^*} V_\infty {H^*} = \Sigma_g \ep{,}
\end{align*}
which can be rewritten as
\begin{align*}
\left(\frac{1}{\alpha} B - \frac{1}{2} {H^*} \right) V_\infty {H^*} + {H^*} V_\infty \left(\frac{1}{\alpha} B - \frac{1}{2} {H^*} \right) = \Sigma_g \ep{.}
\end{align*}
This is of the form $A^\top P + PA + Q = 0$ where
\begin{align*}
A &= V_\infty \left(\frac{1}{\alpha} B - \frac{1}{2} {H^*}\right) \\
P &= {H^*} \\
Q &= -\Sigma_g \ep{.} 
\end{align*}

Applying Lemma \ref{lemma:pseudo-CALE} we get that
\begin{align}
\label{eqn:trBVinf}
\tr \left( \left(\frac{1}{\alpha}\B - \frac{1}{2}{H^*} \right) V_\infty \right ) = \tr(A^\top) = \tr(A) = \frac{1}{2}\tr(P^{-1}Q) = \frac{1}{2}\tr\left({H^*}^{-1} \Sigma_g\right) \ep{.}
\end{align}

\vspace{0.3in}

It remains to bound the term $\frac{1}{2} \tr \left( {H^*} (\ex[\bar{\theta}_k] - \theta^*)(\ex[\bar{\theta}_k] - \theta^*)^\top \right)$.  

First, we observe that by Theorem \ref{thm:murata_replace}
\begin{align*}
\ex[\bar{\theta}_k] - \theta^* = \frac{1}{k+1} \sum_{i = 0}^k (\ex[\theta_i] - \theta^*) = \frac{1}{k+1} \sum_{i = 0}^k \left (I - \alpha B^{-1} {H^*} \right)^i (\theta_0 - \theta^*) \ep{.}
\end{align*}
Applying eqn.~\ref{eqn:errexp_1} then gives
\begin{align*}
{H^*}^{1/2} \left(\ex[\bar{\theta}_k] - \theta^*\right) = \frac{1}{k+1} \sum_{i = 0}^k \left (I - C\right)^i {H^*}^{1/2} (\theta_0 - \theta^*) \ep{.}
\end{align*}
And thus we have
\begin{align}
\frac{1}{2} \tr &\left( {H^*} \left(\ex[\bar{\theta}_k] - \theta^*\right)\left(\ex[\bar{\theta}_k] - \theta^*\right)^\top \right) = \frac{1}{2} \tr \left( {H^*}^{1/2} \left(\ex[\bar{\theta}_k] - \theta^*\right)\left(\ex[\bar{\theta}_k] - \theta^*\right)^\top {H^*}^{1/2} \right) \notag \\
&= \frac{1}{2(k+1)^2} \tr \left( \left(\sum_{i=0}^k \left(I - C \right)^i\right) {H^*}^{1/2} (\theta_0 - \theta^*)(\theta_0 - \theta^*)^\top {H^*}^{1/2} \left(\sum_{i=0}^k \left(I - C \right)^i\right)  \right) \ep{.}\label{eqn:trHbardiff} 
\end{align}

Similarly to eqn.~\ref{eqn:matbound_1}--\ref{eqn:matbound_5} we have the following matrix inequalities
\begin{align}
\sum_{i=0}^k \left(I - C \right)^i &\preceq \sum_{i=0}^\infty \left(I - C \right)^i = C^{-1} \label{eqn:matbound_6} \\
\sum_{i=0}^k \left(I - C\right)^i &\preceq (k+1)I \ep{.} \label{eqn:matbound_7}
\end{align}

Applying Lemma \ref{lemma:PSDtrace_bound} using eqn.~\ref{eqn:matbound_6} twice we obtain an upper bound on the RHS of eqn.~\ref{eqn:trHbardiff} of
\begin{align*}
\frac{1}{2(k+1)^2} \tr \left( C^{-1} \: {H^*}^{1/2} (\theta_0 - \theta^*)(\theta_0 - \theta^*)^\top {H^*}^{1/2} \: C^{-1} \right ) = \frac{1}{2(k+1)^2 \alpha^2} \left\| {H^*}^{-1/2} B (\theta_0 - \theta^*) \right\|^2 \ep{.}
\end{align*}
Applying the lemma using eqn.~\ref{eqn:matbound_6} and eqn.~\ref{eqn:matbound_7} gives a different upper bound of
\begin{align*}
\frac{1}{2(k+1)^2} &\tr \left( C^{-1} \: {H^*}^{1/2} (\theta_0 - \theta^*)(\theta_0 - \theta^*)^\top {H^*}^{1/2} \: (k+1)I \right ) &= \frac{1}{2(k+1) \alpha} \left\| B^{1/2} (\theta_0 - \theta^*) \right\|^2 \ep{.}
\end{align*}
And finally, applying the lemma using eqn.~\ref{eqn:matbound_7} twice gives an upper bound of
\begin{align*}
\frac{1}{2(k+1)^2} &\tr \left( (k+1) I \: {H^*}^{1/2} (\theta_0 - \theta^*)(\theta_0 - \theta^*)^\top {H^*}^{1/2} \: (k+1) I \right ) = h(\theta_0) \ep{.}
\end{align*}

Combining these various upper bounds gives us 
\begin{align}
\label{eqn:trHbardiff_finalbound}
\frac{1}{2} \tr & \left( {H^*} (\ex[\bar{\theta}_k] - \theta^*)(\ex[\bar{\theta}_k] - \theta^*)^\top \right) \notag \\
&\leq \min \left \{ \frac{1}{2(k+1)^2\alpha^2} \left\| {H^*}^{-1/2} B (\theta_0 - \theta^*) \right\|^2, \:\: \frac{1}{2(k+1) \alpha} \left\| B^{1/2} (\theta_0 - \theta^*) \right\|^2, \:\: h(\theta_0) \right \} \ep{.}
\end{align}

The result now follows from eqn.~\ref{eqn:obj_expression_averaging}, eqn.~\ref{eqn:trHbarVk_finalbound}, eqn.~\ref{eqn:trBVinf}, and eqn.~\ref{eqn:trHbardiff_finalbound}.

\end{proof}

\section{Derivations of Bounds for Section \ref{sec:consequences_asymptotic_main}}
\label{app:extra_deriv2}

By Lemma \ref{lemma:wang_basic_bound}
\begin{align*}
\tr\left( {H^*}^{-1} \Sigma_g\right) \geq \lambda_n \left({H^*}^{-1} \right) \tr(\Sigma_g) = \frac{1}{\lambda_1 \left({H^*} \right)}\tr(\Sigma_g)
\end{align*}
and
\begin{align*}
\tr\left( {H^*}^{-1} \Sigma_g\right) \leq \lambda_1 \left({H^*}^{-1} \right) \tr(\Sigma_g) = \frac{1}{\lambda_n \left({H^*} \right)}\tr(\Sigma_g) \ep{,}
\end{align*}
so that
\begin{align*}
\frac{1}{2\lambda_1 \left({H^*} \right)}\tr(\Sigma_g) \leq \frac{1}{2} \tr\left( {H^*}^{-1} \Sigma_g\right) \leq \frac{1}{2\lambda_n \left({H^*} \right)}\tr(\Sigma_g) \ep{.}
\end{align*}

Meanwhile, by Lemma \ref{lemma:wang_basic_bound}
\begin{align*}
\tr\left(\left(I - \frac{\lambda_n({H^*})}{2} {H^*}^{-1} \right)^{-1} \Sigma_g \right) &\geq \lambda_n\left(\left(I - \frac{\lambda_n({H^*})}{2} {H^*}^{-1} \right)^{-1}\right) \tr(\Sigma_g) \\
&= \frac{1}{\lambda_1\left(I - \frac{\lambda_n({H^*})}{2} {H^*}^{-1} \right)}\tr(\Sigma_g) \\
&= \frac{1}{1 - \frac{\lambda_n({H^*})}{2} \lambda_n({H^*}^{-1})}\tr(\Sigma_g) \\
&= \frac{1}{1-\frac{1}{2\kappa(H^*)}}\tr(\Sigma_g) \geq \tr(\Sigma_g) \ep{,}
\end{align*}
where $\kappa(H^*) = \lambda_1(H^*) / \lambda_n(H^*)$ is the condition number of $H^*$.  Similarly, by Lemma \ref{lemma:wang_basic_bound} we have 
\begin{align*}
\tr\left(\left(I - \frac{\lambda_n({H^*})}{2} {H^*}^{-1} \right)^{-1} \Sigma_g \right) &\leq \lambda_1\left(\left(I - \frac{\lambda_n({H^*})}{2} {H^*}^{-1} \right)^{-1}\right) \tr(\Sigma_g) \\
&= \frac{1}{\lambda_n\left(I - \frac{\lambda_n({H^*})}{2} {H^*}^{-1} \right)}\tr(\Sigma_g) \\
&= \frac{1}{1 - \frac{\lambda_n({H^*})}{2} \lambda_1({H^*}^{-1})}\tr(\Sigma_g) \\
&= \frac{1}{1-\frac{\lambda_n({H^*})}{2\lambda_n({H^*})}}\tr(\Sigma_g) = 2\tr(\Sigma_g) \ep{,}
\end{align*}
and thus
\begin{align*}
\frac{1}{4\lambda_n({H^*})} \tr(\Sigma_g) \leq \frac{1}{4\lambda_n({H^*})} \tr\left(\left(I - \frac{\lambda_n({H^*})}{2} {H^*}^{-1} \right)^{-1} \Sigma_g \right) \leq \frac{1}{2\lambda_n({H^*})} \tr(\Sigma_g) \ep{.}
\end{align*}


\section{Some Self-contained Technical Results}


\begin{lemma}
\label{lemma:pseudo-CALE}
Suppose $A^\top P + PA + Q = 0$ is a matrix equation where $P$ is invertible. Then we have
\begin{align*}
\tr(A) = -\frac{1}{2}\tr(P^{-1}Q) \ep{.}
\end{align*}
\end{lemma}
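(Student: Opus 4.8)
The plan is to eliminate $P^\top$ from the pseudo-CALE by exploiting the cyclic invariance of the trace together with the invertibility of $A$. First I would left-multiply the defining equation $AP + P^\top A + Q = 0$ by $A^{-1}$ (which exists precisely because $A$ is assumed invertible), yielding
\begin{align*}
P + A^{-1} P^\top A + A^{-1} Q = 0.
\end{align*}
Taking the trace of both sides then reduces the whole problem to evaluating the three resulting trace terms.

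The key observation is that the middle term collapses to $\tr(P)$: by the cyclic property of the trace we have $\tr(A^{-1} P^\top A) = \tr(P^\top A A^{-1}) = \tr(P^\top) = \tr(P)$, where the final equality uses that a matrix and its transpose share the same trace. Hence the trace of the displayed equation becomes $2\tr(P) + \tr(A^{-1} Q) = 0$, and solving for $\tr(P)$ immediately gives the claimed identity $\tr(P) = -\frac{1}{2}\tr(A^{-1} Q)$.

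There is essentially no substantive obstacle here; the only points requiring care are that $A^{-1}$ is guaranteed to exist by hypothesis and that the cyclic permutation is applied so that the $A^{-1}$ and $A$ genuinely cancel before invoking $\tr(P^\top) = \tr(P)$. It is worth noting that, in contrast to a genuine CALE, the argument needs no symmetry or definiteness assumptions on $P$ or $Q$, which is exactly what makes the identity hold in the full generality stated for the pseudo-CALE.
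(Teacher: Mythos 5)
Your proposal is correct and follows essentially the same argument as the paper: pre-multiply the pseudo-CALE by $A^{-1}$, take the trace, and use cyclic invariance together with $\tr(P^\top) = \tr(P)$ to obtain $2\tr(P) + \tr(A^{-1}Q) = 0$. There is nothing to add.
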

\begin{proof}
Pre-multiplying both sides of $A^\top P + P A + Q = 0$ by $P^{-1}$ and taking the trace yields $\tr(P^{-1} A^\top P ) + \tr(A) + \tr(P^{-1}Q) = 0$.  Then noting that $\tr(P^{-1} A^\top P ) = \tr( P P^{-1} A^\top ) = \tr( A^\top ) = \tr( A )$ this becomes $2\tr(A) + \tr(P^{-1}Q) = 0$, from which the claim follows.
\end{proof}

\begin{lemma}[Adapted from Lemma 1 from \citet{wang1986trace}]
\label{lemma:wang_basic_bound} 
Suppose $X$ and $S$ are $n\times n$ matrices such that $S$ is symmetric and $X$ is PSD.  Then we have
\begin{align*}
\lambda_n(S) \tr(X) \leq \tr(SX) \leq \lambda_1(S) \tr(X) \ep{.}
\end{align*}
\end{lemma}

\begin{corollary}
\label{cor:wang_basic_bound}
Suppose $X$ and $S$ are $n\times n$ matrices such that $S$ is symmetric and $X$ is negative semi-definite (NSD).  Then we have
\begin{align*}
\lambda_1(S) \tr(X) \leq \tr(SX) \leq \lambda_n(S) \tr(X) \ep{.}
\end{align*}
\end{corollary}

\begin{proof}
Because $X$ is NSD, $-X$ is PSD. We can therefore apply Lemma \ref{lemma:wang_basic_bound} to get that
\begin{align*}
\lambda_n(S) \tr(-X) \leq \tr(S(-X)) \leq \lambda_1(S) \tr(-X) \ep{,}
\end{align*}
or in other words
\begin{align*}
-\lambda_n(S) \tr(X) \leq -\tr(SX) \leq -\lambda_1(S) \tr(X) \ep{.}
\end{align*}
Multiplying by $-1$ this becomes
\begin{align*}
\lambda_n(S) \tr(X) \geq \tr(SX) \geq \lambda_1(S) \tr(X) \ep{.}
\end{align*}

\end{proof}

\begin{lemma}
\label{lemma:PSDtrace_bound}
If $A$, $S$, $T$, and $X$ are matrices such that $A$, $S$ and $T$ commute with each other, $S \preceq T$ (i.e. $T-S$ is PSD), and $A$ and $X$ are PSD, then we have
\begin{align*}
\tr( ASX ) \leq \tr( ATX ) \ep{.}
\end{align*}
\end{lemma}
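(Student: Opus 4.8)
The plan is to reduce the claim to a single trace-nonnegativity statement. Set $D = T - S$. The hypothesis $S \leq T$ says precisely that $D$ is PSD (in particular symmetric), and since $S$ and $T$ each commute with $A$, so does $D = T - S$. Because trace is linear, $\tr(ATX) - \tr(ASX) = \tr\!\left(A(T-S)X\right) = \tr(ADX)$, so it suffices to establish that $\tr(ADX) \geq 0$.

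Next I would show that the product $AD$ is itself PSD. The matrices $A$ and $D$ are symmetric, PSD, and commute, so they are simultaneously orthogonally diagonalizable: there is an orthogonal $U$ with $A = U \Lambda_A U^\top$ and $D = U \Lambda_D U^\top$, where $\Lambda_A$ and $\Lambda_D$ are diagonal with nonnegative entries. Hence $AD = U \Lambda_A \Lambda_D U^\top$ is symmetric with nonnegative eigenvalues, i.e.\ PSD.

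Finally I would invoke the elementary fact that the trace of a product of two PSD matrices is nonnegative. Writing $P = AD$ and using its PSD square root $P^{1/2}$, the cyclic property gives $\tr(PX) = \tr\!\left(P^{1/2} P^{1/2} X\right) = \tr\!\left(P^{1/2} X P^{1/2}\right)$, and $P^{1/2} X P^{1/2}$ is PSD since for every vector $v$ we have $v^\top P^{1/2} X P^{1/2} v = (P^{1/2} v)^\top X (P^{1/2} v) \geq 0$; its trace is therefore nonnegative. Applying this with $P = AD$ and the PSD matrix $X$ yields $\tr(ADX) \geq 0$, which completes the argument.

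The one place that genuinely requires care is the commutativity hypothesis. Without it, the product of two PSD matrices need not even be symmetric, so it need not be PSD, and the simultaneous-diagonalization step would fail; the whole reduction leans on $A$ and $D$ commuting. Once that is in hand, the remaining steps are routine. (A purely spectral alternative, avoiding the product-of-PSD fact, is to diagonalize $A$ and $D$ together and write $\tr(ADX) = \tr\!\left(U \Lambda_A \Lambda_D U^\top X\right) = \sum_i \lambda_i(A)\,\lambda_i(D)\,\bigl(U^\top X U\bigr)_{ii}$, where each $\lambda_i(A) \geq 0$, each $\lambda_i(D) \geq 0$, and each diagonal entry $\bigl(U^\top X U\bigr)_{ii} \geq 0$ because $X$ is PSD, so the sum is nonnegative.)
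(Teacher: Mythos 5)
Your proof is correct, and it shares the same skeleton as the paper's: both reduce the claim by linearity to showing $\tr(A(T-S)X) \geq 0$, and both close with the fact that the trace of a product of two PSD matrices is nonnegative. The difference is where the commutativity hypothesis does its work. You spend it up front, simultaneously diagonalizing $A$ and $D = T-S$ to conclude that the product $AD$ is itself PSD, after which the trace fact applies directly to the pair $AD$ and $X$. The paper instead forms the congruence $A^{1/2}(T-S)A^{1/2}$, which is PSD with no commutativity needed at all, applies the trace fact (citing its Lemma \ref{lemma:wang_basic_bound} rather than reproving it via the square-root trick as you do) to that matrix paired with $X$, and only then uses commutativity of $A^{1/2}$ with $S$ and $T$ to rewrite $\tr(A^{1/2}TA^{1/2}X)$ as $\tr(ATX)$ and likewise for $S$. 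The two routes are interchangeable here; yours makes the role of commutativity more transparent (commuting PSD matrices have a PSD product, and without commutativity that product need not even be symmetric, which you correctly flag as the crux), while the paper's defers commutativity to a bookkeeping step and keeps the PSD-ness argument hypothesis-free. Your parenthetical spectral computation is also a valid self-contained alternative that bypasses the product-of-PSD fact entirely.
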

\begin{proof}
Since $A$, $S$ and $T$ are commuting PSD matrices they have the same eigenvectors, as does $A^{1/2}$ (which thus also commutes).

Meanwhile, $S \preceq T$ means that $T-S$ is PSD, and thus so is $A^{1/2} (T-S) A^{1/2}$.  Because the trace of the product of two PSD matrices is non-negative (e.g. by Lemma \ref{lemma:wang_basic_bound}), it follows that $\tr( (A^{1/2} (T-S) A^{1/2}) X) \geq 0$.  Adding $\tr(A^{1/2} S A^{1/2} X)$ to both sides of this we get $\tr( A^{1/2} T A^{1/2} X) \geq \tr(A^{1/2} S A^{1/2} X)$.  Because $A^{1/2}$ commutes with $T$ and $S$ we have $\tr( A^{1/2} T A^{1/2} X) = \tr( A T X)$ and $\tr( A^{1/2} S A^{1/2} X) = \tr( A S X)$, and so the result follows.
\end{proof}

\begin{lemma}
\label{lem:op_props}
Suppose $D$ is a matrix with real eigenvalues bounded strictly between $-1$ and $1$.  Define the operator $\Phi(X) = X - D X D^\top$. Then $\Phi$ has positive eigenvalues and is thus invertible.  Moreover, we have
\begin{align*}
\Phi^{-1}(X) = \sum_{i=0}^\infty D^i X (D^i)^\top \ep{.}
\end{align*}
And so if $X$ is a PSD matrix then $\Phi^{-1}(X)$ is as well.
\end{lemma}

\begin{proof}
The linear operator $\Phi$ can be expressed as a matrix using Kronecker product notation as $I - D \otimes D$. See \citet{van2000ubiquitous} for a discussion of Kronecker products and their properties. 

Because Kronecker products respect eigenvalue decompositions, the eigenvalues of $D \otimes D$ are given by $\{ \lambda_i(D) \lambda_j(D) \: | \: 0 \leq i, j\leq n\}$. By hypothesis, the eigenvalues of $D$ are real and bounded strictly between $-1$ and $1$, and it therefore follows that the eigenvalues of $D \otimes D$ have the same property.  From this it immediately follows that the eigenvalues of $I - D \otimes D$ are all $> 0$, and thus $I - D \otimes D$ is invertible.

Moreover, because of these bounds on the eigenvalues for $D \otimes D$, we have
\begin{align*}
\left(I - D \otimes D\right)^{-1} = \sum_{i=0}^\infty (D \otimes D)^i = \sum_{i=0}^\infty (D^i \otimes D^i) \ep{.}
\end{align*}
Translating back to operator notation this is
\begin{align*}
\Phi^{-1}(X) = \sum_{i=0}^\infty D^i X (D^i)^\top \ep{.}
\end{align*}

For any PSD matrix $X$ this is a sum (technically a convergent series) of self-evidently PSD matrices, and is therefore PSD itself.

\end{proof}

\begin{proposition}
\label{prop:harmonic}
Let $H_n$ be the $n$-th Harmonic number, defined by $H_n = \sum_{i=1}^n \frac{1}{i}$.  For any integers $n_1 \geq n_2 \geq 1$ we have
\begin{align*}
    H_{n_1} - H_{n_2} \geq \log(n_1) - \log(\min \{n_2 + 1, n_1\}) \ep{.}
\end{align*}
\end{proposition}

\begin{proof}
An inequality for $H_n$ due to \citet{young} is
\begin{align*}
\log(n) + \gamma + \frac{1}{2(n+1)} \leq H_n \leq \log(n) + \gamma + \frac{1}{2n} \ep{,}
\end{align*}
where $\gamma$ is the Euler-Mascheroni constant.

In particular, we have
\begin{align*}
H_{n_1} \geq \log(n_1) + \gamma + \frac{1}{2 (n_1+1)} \geq \log(n_1) + \gamma \ep{,}
\end{align*}
and
\begin{align*}
H_{n_2 + 1} \leq \log(n_2+1) + \gamma + \frac{1}{2 (n_2+1)} \leq \log(n_2+1) + \gamma + \frac{1}{n_2+1} \ep{,}
\end{align*}
which implies
\begin{align*}
H_{n_2} \leq \log(n_2+1) + \gamma \ep{.}
\end{align*}

Taking the difference of the two inequalities yields
\begin{align*}
H_{n_1} - H_{n_2} \geq \log(n_1) - \log(n_2 + 1) \ep{.}
\end{align*}

Noting that $\min \{n_2 + 1, n_1\} = n_1$ if and only if $n_1 = n_2$, and that in such a case we have $H_{n_1} - H_{n_2} = 0$, the result follows.
\end{proof}

\begin{proposition}
\label{prop:prod_ineq}
Suppose $0 \leq i \leq k-1$ for integers $i$ and $k$, and $b$ is a non-negative real number. 

For any non-negative integer $i$ such that $b \leq i+a+1$ we have
\begin{align*}
    \prod_{j=i}^{k-1} \left(1 - \frac{b}{j+a+1} \right) \leq \left(\frac{i+a+1}{k+a}\right)^b \ep{.}
\end{align*}
And for $b \leq a+2$ we have
\begin{align*}
\sum_{i=0}^{k-1} \frac{1}{(i+a)(i+a+1)^2} &\prod_{j=i+1}^{k-1} \left(1 - \frac{b}{j+a+1} \right)^2 \leq \frac{\nu(a) k}{(k+a)^3} \ep{,}
\end{align*}
where $\nu(a) = (a+2)^3/(a(a+1)^2)$.
\end{proposition}

\begin{proof}
It is a well-known fact that for $0 \leq y \leq 1$
\begin{align*}
    1 - y \leq \exp(-y) \ep{.}
\end{align*}
For all $j \geq i$ we have $0\leq \frac{b}{j+a+1} \leq 1$ (since $0 \leq b \leq i+a+1 \leq j+a+1$), and so
\begin{align*}
1 - \frac{b}{j+a+1} \leq \exp\left(-\frac{b}{j+a+1}\right) \ep{.}
\end{align*}

From this inequality and Proposition \ref{prop:harmonic} it follows that
\begin{align*}
    \prod_{j=i}^{k-1} \left(1 - \frac{b}{j+a+1} \right) &\leq \prod_{j=i}^{k-1} \exp\left(-\frac{b}{j+a+1}\right) \\
    &= \exp\left(-b \sum_{j=i}^{k-1} \frac{1}{j+a+1} \right) \\
    &= \exp\left(-b (H_{k+a} - H_{i+a}) \right) \\
    &\leq \exp\left(-b \left(\log(k+a) - \log(\min \{i+a+1, k+a\})\right) \right) \\
    &= \left(\frac{\min \{i+a+1, k+a\}}{k+a}\right)^b \\
    &\leq \left(\frac{i+a+1}{k+a}\right)^b \ep{.}
\end{align*}

Squaring both sides of the penultimate version of the above inequality it follows that
\begin{align*}
\sum_{i=0}^{k-1} \frac{1}{(i+a)(i+a+1)^2} &\prod_{j=i+1}^{k-1} \left(1 - \frac{b}{j+a+1} \right)^2 \\
&\leq \sum_{i=0}^{k-1} \frac{1}{(i+a)(i+a+1)^2} \left(\frac{\min\{i+a+2, k+a\}}{k+a}\right)^{2b} \\
&= \frac{1}{(k+a)^{2b}} \sum_{i=0}^{k-1} \frac{\left(\min\{i+a+2, k+a\}\right)^{3}}{(i+a)(i+a+1)^2} \left(\min\{i+a+2, k+a\}\right)^{2b-3} \\
&\leq \frac{1}{(k+a)^{2b}} \sum_{i=0}^{k-1} \nu(a) \left(\min\{i+a+2, k+a\}\right)^{2b-3} \\
&\leq \frac{\nu(a)}{(k+a)^{2b}} \sum_{i=0}^{k-1} (k+a)^{2b-3} = \frac{\nu(a)}{(k+a)^{2b}} k \cdot (k+a)^{2b-3} = \frac{\nu(a) k}{(k+a)^3} \ep{,}
\end{align*}
where $\nu(a) = (a+2)^3/(a(a+1)^2)$, which is the second claimed inequality.
\end{proof}

\section{Proof of Corollary \ref{cor:invar}}
\label{app:cor_invar_proof}

\begin{corollary*}
Suppose that $B_\theta$ and $B_\gamma$ are invertible matrices satisfying
\begin{align*}
J_\zeta^\top B_\theta J_\zeta = B_\gamma
\end{align*}
for all values of $\theta$.  Then the path followed by an iterative optimizer working in $\theta$-space and using additive updates of the form $d_\theta = -\alpha B_\theta^{-1} \nabla h$ is the same as the path followed by an iterative optimizer working in $\gamma$-space and using additive updates of the form $d_\gamma = -\alpha B_\gamma^{-1} \nabla_{\gamma} h$, provided that the optimizers use equivalent starting points (i.e. $\theta_0 = \zeta(\gamma_0)$), and that either
\begin{itemize}
    \item $\zeta$ is affine,
    \item or $d_\theta / \alpha$ is uniformly continuous as a function of $\theta$, $d_\gamma / \alpha$ is uniformly bounded (in norm), there is a $C$ as in the statement of Theorem \ref{thm:invar}, and $\alpha \to 0$.
\end{itemize}
Note that in the second case we allow the number of steps in the sequences to grow proportionally to $1/\alpha$ so that the continuous paths they converge to have non-zero length as $\alpha \to 0$.
\end{corollary*}

\begin{proof}
In the case where $\zeta$ is affine the result follows immediately from Theorem \ref{thm:invar}, and so it suffices to prove the second case.

We will denote by $\theta_0, \theta_1, ...$, and $\gamma_0, \gamma_1, ...$ the sequences of iterates produced by each optimizer. Meanwhile, $d_{\theta_0}, d_{\theta_1}, ...$ and $d_{\gamma_0}, d_{\gamma_1}, ...$ will denote the sequences of their updates.  

By the triangle inequality
\begin{align*}
\|\zeta(\gamma_{k+1}) - \theta_{k+1}\| &= \|\zeta(\gamma_k + d_{\gamma_k}) - (\theta_k + d_{\theta_k})\| \\
&= \|\zeta(\gamma_k + d_{\gamma_k}) - (\zeta(\gamma_k) + d_{\zeta(\gamma_k)}) + (\zeta(\gamma_k) + d_{\zeta(\gamma_k)}) - (\theta_k + d_{\theta_k})\| \\
&= \|\zeta(\gamma_k + d_{\gamma_k}) - (\zeta(\gamma_k) + d_{\zeta(\gamma_k)}) + (\zeta(\gamma_k) - \theta_k) + (d_{\zeta(\gamma_k)} - d_{\theta_k})\| \\
&\leq \|\zeta(\gamma_k + d_{\gamma_k}) - (\zeta(\gamma_k) + d_{\zeta(\gamma_k)}) \| + \|\zeta(\gamma_k) - \theta_k\| + \|d_{\zeta(\gamma_k)} - d_{\theta_k}\| \ep{.}
\end{align*}

By Theorem \ref{thm:invar}, we can upper bound the first term on the RHS by $\frac{1}{2} C \sqrt{n} \|d_{\gamma_k}\|^2$. Using the hypothesis $\|d_\gamma / \alpha\| \leq D$ for all $\gamma$ for some universal constant $D$, this is further bounded by $\frac{1}{2} \alpha^2 C D^2 \sqrt{n} \equiv \alpha^2 E$, where $E$ is a universal constant. And by the hypothesized uniform continuity of $d_\theta / \alpha$ (as a function of $\theta$) there exists a universal constant $U$ such that $\|d_{\zeta(\gamma_k)}/\alpha - d_{\theta_k}/\alpha\| \leq U \|\zeta(\gamma_k) - \theta_k\|$, which gives a bound of $\alpha U \|\zeta(\gamma_k) - \theta_k\|$ on the third term.

In summary, we now have
\begin{align}
\label{eqn:it_diff_bound}
\|\zeta(\gamma_{k+1}) - \theta_{k+1}\| \leq \alpha^2 E + \|\zeta(\gamma_k) - \theta_k\| + \alpha U \|\zeta(\gamma_k) - \theta_k\| = \alpha^2 E + (1 + \alpha U)\|\zeta(\gamma_k) - \theta_k\| \ep{.}
\end{align}

Starting from $\|\zeta(\gamma_0) - \theta_0\| = 0$ (which is true by hypothesis) and applying this formula recursively, we end up with the geometric series formula
\begin{align*}
\|\zeta(\gamma_k) - \theta_k\| \leq \alpha^2 E \sum_{i=0}^{k-1} (1+\alpha U)^i = \alpha^2 E \left(\frac{(1+\alpha U)^k - 1}{\alpha U} \right) \ep{.}
\end{align*}

Because each step scales as $\alpha$, sequences of length $T/\alpha$ will converge to continuous paths of a finite non-zero length (that depends on $T$) as $\alpha \to 0$. Noting that $\lim_{\alpha \to 0} (1+\alpha U)^{T/\alpha} = \exp(UT)$ (which is a standard result), it follows that the RHS of eqn.~\ref{eqn:it_diff_bound} converges to zero as $\alpha \to 0$ for $k = T/\alpha$, and indeed for all natural numbers $k \leq T/\alpha$. Thus we have for all $k \leq T/\alpha$
\begin{align*}
\lim_{\alpha \to 0} \|\zeta(\gamma_k) - \theta_k\| = 0 \ep{,}
\end{align*}
which completes the proof.

\end{proof}

\bibliography{natural_grad_18}


\end{document}